
\documentclass[nohyperref]{article}

\usepackage{microtype}
\usepackage{graphicx}
\usepackage{subcaption}
\usepackage{booktabs} 

\usepackage{hyperref}
\hypersetup{colorlinks}
\usepackage{url}            



\usepackage[accepted]{icml2023}

\usepackage[utf8]{inputenc} 
\usepackage[T1]{fontenc}    
\usepackage{hyperref}       
\usepackage{booktabs}       
\usepackage{amsfonts}       
\usepackage{nicefrac}       
\usepackage{microtype}      
\usepackage{xcolor}         
\usepackage[title]{appendix}
\usepackage{todonotes}

\usepackage{graphicx,verbatim,lineno,titletoc}
\usepackage{amssymb}
\usepackage{amsmath, amsthm}
\usepackage[all]{xy}
\usepackage{enumerate}
\usepackage[english]{babel}
\usepackage{multirow}
\usepackage{float}
\usepackage{enumitem}
\usepackage{accents}


\usepackage[capitalize,noabbrev]{cleveref}

\theoremstyle{plain}
\newtheorem{theorem}{Theorem}[section]
\newtheorem{prop}[theorem]{Proposition}
\newtheorem{lemma}[theorem]{Lemma}

\theoremstyle{definition}
\newtheorem{definition}[theorem]{Definition}
\newtheorem{assumption}[theorem]{Assumption}
\theoremstyle{remark}

\usepackage{pifont}

\def\liminf{\mathop{\rm lim\,inf}\limits}

\def\R{\mathbb{R}}

\def\eps{\varepsilon}

\def\D{\mathbf{D}}
\def\X{\mathbf{X}}

\def\A{\mathbf{A}}
\def\B{\mathbf{B}}

\def\param{\boldsymbol{\theta}}
\def\Param{\boldsymbol{\Theta}}

\def\M{\mathbf{M}}

\DeclareMathOperator{\Out}{\texttt{Out}}

\DeclareMathOperator*{\argmin}{arg\,min}
\DeclareMathOperator{\vect}{vec}

\def\liminf{\mathop{\rm lim\,inf}\limits}

\def\R{\mathbb{R}}

\def\eps{\varepsilon}

\def\X{\mathbf{X}}

\def\A{\mathbf{A}}
\def\B{\mathbf{B}}

\icmltitlerunning{Complexity of BCD-PR and Applications to Wasserstein CP-Dictionary Learning}

\begin{document}

\twocolumn[
\icmltitle{Complexity of Block Coordinate Descent with Proximal Regularization \\ and Applications to Wasserstein CP-dictionary Learning}



\icmlsetsymbol{equal}{*}

\begin{icmlauthorlist}
\icmlauthor{Dohyun Kwon}{equal,yyy}
\icmlauthor{Hanbaek Lyu}{equal,xxx}
\end{icmlauthorlist}

\icmlaffiliation{yyy}{Department of Mathematics, University of Seoul, Seoul, Republic of Korea}
\icmlaffiliation{xxx}{Department of Mathematics, University of Wisconsin - Madison, Wisconsin, United States}

\icmlcorrespondingauthor{Dohyun Kwon}{dh.dohyun.kwon@gmail.com}
\icmlcorrespondingauthor{Hanbaek Lyu}{hlyu@math.wisc.edu}

\icmlkeywords{Dictionary learning, CONDECOMP/PARAFAC, Block coordinate descent, Optimal transport, Nonconvex optimization}

\vskip 0.3in
]



\printAffiliationsAndNotice{}  

\begin{abstract}
We consider the block coordinate descent methods of Gauss-Seidel type with proximal regularization (BCD-PR), which is a classical  method of minimizing general nonconvex objectives under constraints that has a wide range of practical applications. We theoretically establish the worst-case complexity bound for this algorithm. Namely, we show that for general nonconvex smooth objective with block-wise constraints, the classical BCD-PR algorithm converges to an $\eps$-stationary point within $\widetilde{O}(\eps^{-1})$ iterations. Under a mild condition, this result still holds even if the algorithm is executed inexactly in each step. As an application, we propose a provable and efficient algorithm for `Wasserstein CP-dictionary learning',  which seeks a set of elementary probability distributions that can well-approximate a given set of $d$-dimensional joint probability distributions. Our algorithm is a version of BCD-PR that operates in the dual space, where the primal problem is regularized both entropically and proximally. 
\end{abstract}

\section{Introduction}
Consider the minimization of a continuous function $f:\R^{I_{1}}\times \dots \times R^{I_{m}}\rightarrow [0,\infty)$ on a cartesian product of convex sets $\Param=\Theta^{(1)}\times \dots \times \Theta^{(m)}$:
	\begin{align}\label{eq:block_minimization}
		\param^{*} \in \argmin_{\param=[\theta_{1},\dots,\theta_{m}]\in \Param} f(\theta_{1},\dots,\theta_{m}) .
	\end{align} 
When the objective function $f$ is nonconvex, the convergence of any algorithm for solving \eqref{eq:block_minimization} to a globally optimal solution can hardly be expected. Instead, global convergence to stationary points of the objective function is desired, and in some problem classes, stationary points could be as good as global optimizers either practically as well as theoretically (see \cite{mairal2010online, sun2015nonconvex}).
	
In order to solve \eqref{eq:block_minimization}, we will consider the \textit{block coordinate descent} (BCD) methods of Gauss–Seidel type, which seeks to minimize the objective function restricted to a subset (block) of coordinates \cite{wright2015coordinate}, often following the cyclic order of blocks. For the minimization problem \eqref{eq:block_minimization} we refer to the set of coordinates in each $\Theta^{(i)}$, $i=1,\dots,m$, a block coordinate. Namely, let $\theta^{(i)}_{n}$ denote the $i$th block of the parameter after $n$ updates. Write
\begin{align}\label{eq:def_marginal_ft}
\param_{n}^{(i-1)}&:=(\theta_{n}^{(1)},\cdots,\theta_{n}^{(i)},\theta_{n-1}^{(i+1)},\cdots, \theta_{n-1}^{(m)}), \\
 f_{n}^{(i)}(\theta)&:=f\left(\theta_{n}^{(1)},\cdots,\theta_{n}^{(i-1)},\theta,\theta_{n-1}^{(i+1)},\cdots, \theta_{n-1}^{(m)}\right). \nonumber
\end{align} 
The algorithm we consider in this work updates $\param_{n}^{(i-1)}$ to $\param_{n}^{(i)}$ by updating its $i$th block by minimizing the marginal loss function $g_{n}^{(i)}$ over the $i$th block $\Theta^{(i)}$:
\begin{align}\label{eq:BCD_factor_update_proximal}
\hspace{-0.2cm}	\theta_{n}^{(i)} &\leftarrow \argmin_{\theta\in \Theta^{(i)}} g_{n}^{(i)}(\theta):=f_{n}^{(i)}(\theta)  + \frac{\lambda_{n}}{2} \lVert \theta - \theta_{n-1}^{(i)} \rVert^{2},
\end{align}
where $\lambda_{n}\ge 0$ is called proximal regularization coefficient and $\lVert \cdot \rVert$ denotes the Frobenius norm. The proximal regularzer $\lambda_{n} \lVert \theta - \theta_{n-1}^{(i)} \rVert^{2}$ ensures that the next block iterate $\theta^{(i)}_{n}$ is not too far from the previous iterate  $\theta_{n-1}^{(i)}$. The above update is applied cyclicly for $i=1,\dots,m$. We call the algorithm \eqref{eq:BCD_factor_update_proximal} BCD-PR for block coordinate descent with proximal regularization.  

Due to its simplicity, BCD type algorithms have been applied to a wide range of nonconvex problems \cite{bottou2010large}, including matrix and tensor decomposition problems such as nonnegative matrix factorization \cite{lee1999learning, lee2001algorithms, wang2012nonnegative} and nonnegative CANDECOMP/PARAFAC (CP) decomposition \cite{tucker1966some, harshman1970foundations, carroll1970analysis}. Notably, all these decomposition problems enjoy block multi-convex structure, wherein the objective function is convex when restricted on each block coordinate so that each convex sub-problems can be solved via standard convex optimization algorithms \cite{boyd2004convex}. However, such multi-convexity is not required to apply BCD, as simple coordinate-wise gradient descent can be applied to find the approximate minimizer of the sub-problems \cite{wright2015coordinate}. 

It is known that vanilla BCD (\eqref{eq:BCD_factor_update_proximal} with $\lambda_{n}\equiv 0$) does not always converge to the stationary points of the non-convex objective function that is convex in each block coordinate \cite{powell1973search, grippo2000convergence}. It is known that BCD-PR with $\lambda_{n}\equiv Const.$ is guaranteed to converge to the set of stationary points \cite{grippo2000convergence}. Under a more general condition, BCD-PR and its prox-linear variant are shown to converge to Nash equilibria. Local convergence result with rate is known for these algorithms under the stronger condition of Kurdyka-\L{}ojasiewicz \cite{attouch2010proximal, xu2013block, bolte2014proximal}. For convex objectives, iteration complexity of $O(\eps^{-1})$ is established in \citet{hong2017iteration}. The BCD method has been drawing attention as an alternative method for training Deep Neural Network (DNN) models. In \citet{zhang2017convergent}, a BCD method is shown to converge to stationary points for Tikhonov regularized DNN models. In \citet{zeng2019global}, BCD-PR for training DNNs with general activation functions is shown to have iteration complexity of $O(\eps^{-1})$.

\textbf{Contribution.} 
While being one of the fundamental nonconvex optimization methods, the worst-case iteration complexity of BCD-PR \eqref{eq:BCD_factor_update_proximal} for general objectives under constraints has not been established in the literature. We intend to fill this gap with contributions summarized below: 
	\begin{enumerate}[label={$\bullet$}]
		\item  Global convergence to stationary points of BCD-PR for $L$-smooth objective $f$ under constraints; 
		\item Worst-case bound  of $O(\eps^{-1} (\log \eps^{-1})^{2})$ on the number of iterations to achieve $\eps$-approximate stationary points; 
		\item Robustness of the aforementioned results under inexact execution of the algorithm.
	\end{enumerate}
To our best knowledge, we believe our work provides the first result on the global rate of convergence and worst-case iteration complexity of BCD-PR for the general smooth objectives, especially with the additional robustness result. For gradient descent methods with unconstrained nonconvex objective, it is known that such rate of convergence cannot be faster than $O(\eps^{-1})$ \cite{cartis2010complexity}, so our rate bound matches the optimal result up to a $(\log \eps^{-1})^{2}$ factor. We emphasize that the above result does not claim that BCD-PR is provably faster than existing non-convex optimization algorithms. Instead, our novel analysis confirms that the classic and practical algorithm of BCD-PR is guaranteed to converge as fast as existing algorithms in the worst case. 
   
The works \cite{attouch2010proximal} and \cite{bolte2014proximal} assume that the objective function satisfies KL property at every point in the parameter space and obtains a global rate of convergence to a stationary point for block proximal Gauss-Seidel (equivalent to our Algorithm 1) and block proximal alternating linearized minimization. On the other hand, \citet{xu2013block} assumed local KL property and obtained a local rate of convergence to a stationary point for both types of BCD methods. In our work, we do not assume KL property at any point and still obtain a global convergence rate for block proximal Gauss-Seidel.

\textbf{Application to Wasserstein CP-dictionary learning}.  In order to motivate our theoretical underpinning of BCD-PR, we consider the problem of Wasserstein CP-dictionary learning for $d$-dimensional joint distributions, which seeks a set of elementary probability distributions that can well-approximate a given set of $d$-dimensional joint probability distributions represented as $d$-mode tensors. 
\begin{enumerate}[label={$\bullet$}]
\item  We propose the \textit{Wasserstein CP-dictionary learning} (WCPDL) framework for learning elementary probability distributions that reconstruct $d$-dimensional joint probability distributions represented as $d$-mode tensors. 
\item We propose an algorithm for WCPDL based on BCD-PR, where the sub-problems of Wasserstein reconstruction error minimization are handled by using entropic regularization and dual formulation for computational efficiency.
\item We establish worst-case bound  of $O(\eps^{-1} (\log \eps^{-1})^{2})$ on the number of iterations to achieve $\eps$-approximate stationary points for WCPDL.
\end{enumerate}
We also demonstrate the advantage of the Wasserstein formulation for distribution-valued dictionary learning through a number of experiments and applications.

\section{Preliminaries}
Before stating our main results in the following sections, let us recall a list of definitions for \eqref{eq:block_minimization}. We say $\param^{*}\in \Param$ is a \textit{stationary point} of a function $f$ over $\Param$ if 
\begin{align}\label{eq:stationary}
    \inf_{\param\in \Param} \, \langle \nabla f(\param^{*}) ,\,  \param - \param^{*} \rangle \ge 0,
\end{align}	
where $\langle \cdot,\, \cdot \rangle$ denotes the dot project on $\R^{I_{1}+\dots+I_{m}}\supseteq \Param$. This is equivalent to saying that $-\nabla f(\param^{*})$ is in the normal cone of $\Param$ at $\param^{*}$. If $\param^{*}$ is in the interior of $\Param$, then it implies $\lVert \nabla f(\param^{*}) \rVert=0$. For iterative algorithms, such a first-order optimality condition may hardly be satisfied exactly in a finite number of iterations, so it is more important to know how the worst-case number of iterations required to achieve an $\eps$-approximate solution scales with the desired precision $\eps$. More precisely, we say $\param^{*}\in \Param$ is an \textit{$\eps$-approxiate stationary point} of $f$ over $\Param$ if 
\begin{align}\label{eq:stationary_approximate}
    -\inf_{\param\in \Param} \, \left\langle \nabla f(\param^{*}),\,  \frac{(\param - \param^{*}) }{\lVert \param-\param^{*} \rVert} \right\rangle\le \sqrt{\eps}.
\end{align}	
This notion of $\eps$-approximate solution is consistent with the corresponding notion for unconstrained problems. Indeed, if $\param^{*}$ is an interior point of $\Param$, then \eqref{eq:stationary_approximate} reduces to $\lVert \nabla f(\param^{*}) \rVert^{2}\le \eps$.  It is also equivalent to a similar notion in Def. 1 in \citet{nesterov2013gradient}, which is stated for non-smooth objectives using subdifferentials instead of gradients as in \eqref{eq:stationary_approximate}. Next, for each $\eps>0$ we define the \textit{worst-case iteration complexity} $N_{\eps}$ of an algorithm computing $(\param_{n})_{n\ge 1}$ for solving \eqref{eq:block_minimization} as 
\begin{align}\label{eq:Neps}
    N_{\eps}:= \sup_{\param_{0}\in \Param} \, \inf\, \left\{ n \,|\, \begin{matrix} \text{$\param_{n}$ is an $\eps$-approximate } \\  \textup{stationary point of $f$ over $\Param$} \end{matrix}\right\}, 
\end{align}
where $(\param_{n})_{n\ge 0}$ is a sequence of estimates produced by the algorithm with an initial estimate $\param_{0}$. Note that $N_{\eps}$ gives the \textit{worst-case} bound on the number of iterations for an algorithm to achieve an $\eps$-approximate solution due to the supremum over the initialization $\param_{0}$ in \eqref{eq:Neps}.
	
\section{Statement of the results}

We state the main result, Theorem \ref{thm:BCD}. To our best knowledge, this gives the first worst-case rate of convergence and iteration complexity of BCD-type algorithms with proximal regularization in the literature. 
We impose the following two mild conditions for our theoretical analysis of BCD-PR \eqref{eq:BCD_factor_update_proximal}. 
\begin{assumption}\label{assumption:A1}
For each $i = 1,2, \cdots, m$, there exists a constant $L^{(i)}>0$ such that the function $f: \boldsymbol{\Theta}= \Theta^{(1)}\times \dots \times \Theta^{(m)}\rightarrow [0,\infty)$ is $L^{(i)}$-smooth in each block coordinate $i$, that is, the function $\theta \mapsto \nabla f(\theta^{(1)},\cdots,\theta^{(i-1)},\theta,\theta^{(i+1)},\cdots, \theta^{(m)})$ is $L^{(i)}$-Lipschitz in $\Theta^{(i)}$ for any $\theta^{(j)} \in \Theta^{(j)}, j=1,2, \cdots, i-1, i+1, \cdots, m$.		
\end{assumption}
 
\begin{assumption} \label{assumption:A2}
The constraint sets  $\Theta^{(i)}\subseteq \R^{I_{i}}$, $i=1,\dots,m$ are convex. Furthermore, the sub-level sets $f^{-1}((-\infty, a))=\{\param\in \Param\,:\, f(\param)\le a \}$ are compact for each $a\in \R$. 
\end{assumption}
	
We also allow an inexact computation of the solution to the sub-problem \eqref{eq:BCD_factor_update_proximal}. For a quantitative statement, for each $n\ge 1$, we define the \textit{optimality gap} $\Delta_{n}$ by
\begin{align}\label{eq:def_sub_optimality_gap}
    \Delta_{n}:=	\max_{1\le i \le m}  \left( g_{n}^{(i)} (\theta^{(i)}_{n}) - \inf_{\theta\in \Theta^{(i)}} g_{n}^{(i)} (\theta) \right), 
\end{align}
where $g_{n}^{(i)}$ is in \eqref{eq:BCD_factor_update_proximal}. For our convergence results to hold, we require the optimality gaps to decay sufficiently fast so that they are summable:
\begin{assumption}\label{assumption:A3}
The optimality gaps $\Delta_{n}$ are summable, that is, $\sum_{n=1}^{\infty} \Delta_{n} <\infty$. 
\end{assumption}

We now state our main result for BCD-PR. 
	
\begin{theorem}\label{thm:BCD}
Let $(\param_{n})_{n\ge 0}$ be an inexct output of \eqref{eq:BCD_factor_update_proximal}.
Suppose that Assumptions~\ref{assumption:A1}-\ref{assumption:A3} hold. 
Let $L^{(i)}>0$ be such that $\nabla f$ is $L^{(i)}$-Lipschitz in each block coordinate and suppose the proximal regularizers $(\tau_{n}^{(i)})_{n\ge 1}$ satisfy $\tau_{n}^{(i)}> L^{(i)}$ for $n\ge 1$ and $\tau_{n}=O(1)$. Then the following hold: 
\begin{description}
\item[(i)] (Global convergence to stationary points) Every limit point of $(\param_{n})_{n\ge 0}$ is a stationary point of $f$ over $\Param$.

\vspace{0.1cm}
\item[(ii)]  (Worst-case rate of convergence)  There exists a constant $M$ independent of $\param_{0}$ such that for  $n\ge 1$,
\begin{align} \nonumber
    \min_{1\le k \le n}  	 \,\, \left[ -\inf_{\param\in \Param} \left\langle \nabla f(\param_{k}) ,\, \frac{(\param - \param_{k}) }{\lVert \param - \param_{k}\rVert}\right\rangle \right]^{2} \\ \le  \frac{M + 2m\sum_{n=1}^{\infty}\Delta_{n} }{ n/(\log n)^{2} }.\label{eq:thm_convergence_bd_inexact}
\end{align}

\vspace{0.1cm}
\item[(iii)] (Worst-case iteration complexity) Suppose the optimality gaps are uniformly summable, that is, $\sup_{\param_{0}\in \Param} \sum_{n=1}^{\infty} \Delta_{n} <\infty$. Then the worst-case iteration complexity $N_{\eps}$ for  BCD-PR \eqref{eq:BCD_factor_update_proximal} satisfies $N_{\eps} = O(\eps^{-1} (\log \eps^{-1})^{2} )$ if $\tau_{n}\equiv 1$.
\end{description}
\end{theorem}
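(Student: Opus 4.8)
The plan is to establish (i) and (ii) from a single chain ``sufficient decrease $\Rightarrow$ summability of steps $\Rightarrow$ stationarity estimate at $\param_n$'', and then read off (iii) from (ii) by inverting the rate. Write $f_n:=f(\param_n)$ and let $\theta_n^{(i),\star}$ denote the \emph{exact} minimiser of $g_n^{(i)}$ over $\Theta^{(i)}$, so that $g_n^{(i)}(\theta_n^{(i)})-g_n^{(i)}(\theta_n^{(i),\star})\le\Delta_n$ by \eqref{eq:def_sub_optimality_gap}. The first observation is that Assumption~\ref{assumption:A1} together with $\tau_n^{(i)}>L^{(i)}$ makes $g_n^{(i)}$ strongly convex on $\Theta^{(i)}$ with modulus $c_n^{(i)}:=\tau_n^{(i)}-L^{(i)}>0$ (bounded below by a fixed $c>0$ in the regime $\tau_n\equiv1$ relevant to (iii)). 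Comparing $\theta_n^{(i),\star}$ with the feasible point $\theta_{n-1}^{(i)}$ via strong convexity, absorbing the sub-optimality $\Delta_n$, and telescoping over $i=1,\dots,m$ (using $\param_n^{(0)}=\param_{n-1}$ and $\param_n^{(m)}=\param_n$), I would derive a descent inequality of the form $f_{n-1}-f_n\ge \tfrac{c}{2}\sum_i\lVert\theta_n^{(i)}-\theta_{n-1}^{(i)}\rVert^2-m\Delta_n$.

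Next, since $f\ge0$, summing this over $n$ and invoking $\sum_n\Delta_n<\infty$ (Assumption~\ref{assumption:A3}) yields $\sum_{n\ge1}\sum_i\lVert\theta_n^{(i)}-\theta_{n-1}^{(i)}\rVert^2\le\tfrac{2}{c}\bigl(f_0+m\sum_n\Delta_n\bigr)<\infty$; in particular consecutive iterates and the $\Delta_n$ tend to $0$, and $\lVert\theta_n^{(i)}-\theta_n^{(i),\star}\rVert\le\sqrt{2\Delta_n/c_n^{(i)}}\to0$. Assumption~\ref{assumption:A2} together with $f_n\le f_0$ confines all iterates (and, after a slight enlargement, the auxiliary points $\theta_n^{(i),\star}$ and the block-mixed arguments below) to a fixed compact set on which $\nabla f$ is bounded, say by $G$, and uniformly continuous.

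The heart of the matter is to bound the stationarity measure in \eqref{eq:stationary_approximate} at $\param_n$. For each block $i$ the first-order optimality condition for $\theta_n^{(i),\star}$ says that $-\nabla_i f\bigl(\theta_n^{(1)},\dots,\theta_n^{(i)},\theta_{n-1}^{(i+1)},\dots,\theta_{n-1}^{(m)}\bigr)-\tau_n^{(i)}(\theta_n^{(i),\star}-\theta_{n-1}^{(i)})$ lies in the normal cone of $\Theta^{(i)}$ at $\theta_n^{(i),\star}$. Converting this into a bound on $\dist\bigl(-\nabla_i f(\param_n),\,N_{\Theta^{(i)}}(\theta_n^{(i)})\bigr)$ requires passing from the mixed argument to $\param_n$ (block-wise Lipschitzness costs $\sum_{j>i}L^{(j)}\lVert\theta_n^{(j)}-\theta_{n-1}^{(j)}\rVert+L^{(i)}\lVert\theta_n^{(i)}-\theta_n^{(i),\star}\rVert$), discarding the proximal term (of norm $\le\tau_n^{(i)}(\lVert\theta_n^{(i)}-\theta_{n-1}^{(i)}\rVert+\sqrt{2\Delta_n/c_n^{(i)}})$), and replacing $\theta_n^{(i),\star}$ by $\theta_n^{(i)}$. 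Summing over $i$, using $N_\Param=\prod_iN_{\Theta^{(i)}}$ and the identification of \eqref{eq:stationary_approximate} with a distance to $N_\Param(\param_n)$, one expects $\bigl[-\inf_{\param\in\Param}\langle\nabla f(\param_n),\tfrac{\param-\param_n}{\lVert\param-\param_n\rVert}\rangle\bigr]^2\lesssim\sum_i\lVert\theta_n^{(i)}-\theta_{n-1}^{(i)}\rVert^2+\Delta_n$, with constants depending only on $m$, the $L^{(i)}$, $\sup_n\tau_n^{(i)}$ and $G$. The delicate point, which I expect to be the main obstacle, is that the normal-cone map $\theta\mapsto N_{\Theta^{(i)}}(\theta)$ is only outer semicontinuous, so one cannot simply compare $N_{\Theta^{(i)}}(\theta_n^{(i)})$ with $N_{\Theta^{(i)}}(\theta_n^{(i),\star})$ when $\theta_n^{(i)}$ is merely $\Delta_n$-optimal; controlling the gap at the \emph{inexact} iterate forces a more careful, scale-dependent probing of feasible directions, and balancing these scales against the step sizes and the $\Delta_n$ is what produces the extra $(\log n)^2$ overhead appearing in \eqref{eq:thm_convergence_bd_inexact}.

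Finally I would assemble the pieces. Combining the previous two paragraphs gives $\sum_{k=1}^n\bigl[-\inf_{\param}\langle\nabla f(\param_k),\tfrac{\param-\param_k}{\lVert\param-\param_k\rVert}\rangle\bigr]^2\le M+2m\sum_n\Delta_n$ up to the logarithmic factor, and passing to the minimum over $k\le n$ yields \eqref{eq:thm_convergence_bd_inexact}, proving (ii). Part (i) follows because the right-hand side of \eqref{eq:thm_convergence_bd_inexact} $\to0$, so the stationarity measure vanishes along the whole sequence; hence at any limit point $\param_\infty=\lim_k\param_{n_k}$ there exist $v_k\in N_\Param(\param_{n_k})$ with $v_k\to-\nabla f(\param_\infty)$, and closedness of the graph of $N_\Param$ (valid since $\Param$ is convex) gives $-\nabla f(\param_\infty)\in N_\Param(\param_\infty)$, i.e.\ $\param_\infty$ is stationary. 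For (iii), the right-hand side of \eqref{eq:thm_convergence_bd_inexact} is $\le\eps$ as soon as $n/(\log n)^2\ge(M+2m\sup_{\param_0}\sum_n\Delta_n)/\eps$; with $\tau_n\equiv1$ the constant $M$ and all constants above are independent of $\param_0$, and uniform summability bounds $\sup_{\param_0}\sum_n\Delta_n$, so an elementary inversion of $x\mapsto x/(\log x)^2$ shows this holds for all $n\ge C\eps^{-1}(\log\eps^{-1})^2$, giving $N_\eps=O(\eps^{-1}(\log\eps^{-1})^2)$.
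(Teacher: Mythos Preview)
Your overall architecture (sufficient decrease $\Rightarrow$ summability of $\lVert\param_n-\param_{n-1}\rVert^2$ $\Rightarrow$ stationarity bound $\Rightarrow$ rate) matches the paper's, and your descent inequality is essentially Proposition~\ref{prop:forward_monotonicity_proximal}. The gap is in the third step, and you have in fact put your finger on it: bounding the stationarity measure at the \emph{inexact} iterate $\param_n$ via the normal cone does not go through. At $\theta_n^{(i),\star}$ you have $-\nabla g_n^{(i)}(\theta_n^{(i),\star})\in N_{\Theta^{(i)}}(\theta_n^{(i),\star})$, but because $N_{\Theta^{(i)}}(\cdot)$ is only outer semicontinuous there is no usable bound on $\dist(-\nabla_i f(\param_n),N_{\Theta^{(i)}}(\theta_n^{(i)}))$ in terms of $\lVert\theta_n^{(i)}-\theta_n^{(i),\star}\rVert$. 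Your ``scale-dependent probing\ldots\ produces the $(\log n)^2$ overhead'' is only a description of the difficulty, not a mechanism, and the inequality $\sigma(\param_n)^2\lesssim\lVert\param_n-\param_{n-1}\rVert^2+\Delta_n$ you write down would, if true, give an $O(1/n)$ rate with no logarithms at all. So as written the argument neither produces the stated bound nor explains the log factor.

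The paper sidesteps the normal-cone issue entirely. Its key step (Proposition~\ref{prop:first_order_optimality_proximal}) compares the value $g_{n+1}^{(i)}(\theta_{n+1}^{(i)})$ not with a normal-cone element but with $g_{n+1}^{(i)}$ at a \emph{test point} $\theta_n^{(i)}+a(\theta^{(i)}-\theta_n^{(i)})$, where $\theta^{(i)}$ is the $i$th block of an arbitrary $\param\in\Param$ and $a\in[0,1]$; using only $g_{n+1}^{(i)}(\theta_{n+1}^{(i)})\le g_{n+1}^{(i)}(\text{test point})+\Delta_{n+1}$ and block $L$-smoothness yields, after choosing $a=b_{n+1}/\lVert\param-\param_n\rVert$,
\[
b_{n+1}\Bigl[-\inf_{\param\in\Param}\Bigl\langle\nabla f(\param_n),\tfrac{\param-\param_n}{\lVert\param-\param_n\rVert}\Bigr\rangle\Bigr]
\;\le\;
|\langle\nabla f(\param_{n+1}),\param_{n+1}-\param_n\rangle|
+ c_0 b_{n+1}\lVert\param_{n+1}-\param_n\rVert
+ c_1\lVert\param_{n+1}-\param_n\rVert^2
+ c_2 b_{n+1}^2
+ \Delta_{n+1}.
\]
Summing, every term on the right is summable provided $\sum_n b_n^2<\infty$ (using Propositions~\ref{prop:forward_monotonicity_proximal} and~\ref{prop:finite_range_short_points_proximal} and Cauchy--Schwarz for the cross term). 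Then $\min_{k\le n}\sigma(\param_k)\le C/\sum_{k\le n}b_k$, and choosing $b_k=1/(\sqrt{k}\log k)$ on the edge of square-summability gives $\sum_{k\le n}b_k\asymp\sqrt{n}/\log n$ --- \emph{this} is where the $(\log n)^2$ comes from. The free probing scale $b_n$ is precisely the ``scale-dependent probing'' you allude to, but the point is that it enters multiplicatively on the \emph{left} (as a weight on $\sigma(\param_n)$), not as a correction on the right.

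A second, smaller issue: you derive (i) from ``$\sigma(\param_n)\to 0$ along the whole sequence'', which you get from $\sum_n\sigma(\param_n)^2<\infty$. But that summability is exactly the step that fails above, and the statement \eqref{eq:thm_convergence_bd_inexact} itself only controls $\min_{k\le n}\sigma(\param_k)$, not the whole sequence. The paper instead proves (i) directly: fix a convergent subsequence, use the exact first-order optimality $\langle\nabla g_{n_k}^{(i)}(\theta_{n_k}^{(i),\star}),\theta-\theta_{n_k}^{(i),\star}\rangle\ge 0$, control the discrepancy between $\theta_{n_k}^{(i),\star}$ and $\theta_{n_k}^{(i)}$ by $\sqrt{\Delta_{n_k}}\to 0$ (Proposition~\ref{prop:iterate_opt_gap}), and pass to the limit using continuity of $\nabla f$. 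This avoids any quantitative bound on $\sigma(\param_n)$ at finite $n$.
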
    

\section{Application to $d$-dimensional Wasserstein dictionary learning}

We apply our optimization method of BCD-PR \eqref{eq:BCD_factor_update_proximal} to solve \textit{$d$-dimensional Wasserstein dictionary learning}, where the goal is to learn a dictionary of product probability distributions from a set of joint distributions. Namely, given $d$-dimensional joint probability distributions  $(\X_{k})_{1\le k \le N}$, we seek to find a set of product distributions such that each $\X_{k}$ can be approximated by a suitable mixture of the product distributions. 

\subsection{Dictionary learning for distribution-valued signals}

For $N$ observed $d$-mode tensor-valued signals $\mathbf{X}_{1},\dots,\mathbf{X}_{N}$ in $\R^{I_{1}\times \dots \times I_{d}}$, we are interested in extracting $r$ `features' from this set, where each feature again takes the form of $d$-mode tensors in $\R^{I_{1}\times \dots \times I_{d}}$. In other words, we seek to learn a `dictionary' $\mathcal{D}=[\D_{1},\dots,\D_{r}]\in \R^{I_{1}\times \dots \times I_{d} \times r}$ of $r$ `atoms' so that each data tensor $\mathbf{X}_{i}$ can be linearly approximated by the atoms $\D_{1},\dots,\D_{r}$ in the dictionary $\mathcal{D}$. Namely, there exists a suitable `code matrix' $\Lambda\in \R^{r\times N}$ such that we have the following approximate factorization:
\begin{align}\label{eq:tensor_DL_1}
	[\mathbf{X}_{1},\dots, \mathbf{X}_{N}] &\approx [\mathbf{D}_{1},\dots, \mathbf{D}_{r}]\times_{d+1} \Lambda\\ \qquad \Longleftrightarrow \qquad \mathcal{X} &\approx \mathcal{D}\times_{d+1} \Lambda, \nonumber
\end{align}
where $\times_{d+1}$ denotes the mode $(d+1)$ tensor-matrix product (see \cite{kolda2009tensor}) and $\mathcal{X}:=[\X_{1},\dots,\X_{n}]$ denotes the $(d+1)$-mode tensor in $\R^{I_{1}\times \dots \times I_{d} \times N}$ that concatenates the tensor-valued signals $\X_{1},\dots,\X_{n}$ in $\R^{I_{1}\times \dots \times I_{d}}$ along the last mode. As a special case, suppose $d=1$ so that the signals $\X_{1},\dots,\X_{n}$ are in fact $I_{1}$-dimensional vectors. Then \eqref{eq:tensor_DL_1} becomes the usual matrix factorization formulation for factorizing the data matrix $\mathcal{X}\in \R^{I_{1}\times N}$ into the (matrix) product of a dictionary matrix $\mathcal{D}\in \R^{I_{1}\times r}$ and the code matrix $\Lambda \in \R^{r\times N}$ \cite{lee1999learning, elad2006image, mairal2007sparse, peyre2009sparse}. 

As a more precise optimization formulation of \eqref{eq:tensor_DL_1}, we consider 
\begin{align}\label{eq:tensor_DL_2}
    \min_{\mathcal{D} \in \R^{I_{1}\times \cdots \times I_{d} \times r}, \Lambda \in \R^{r\times N}} \delta\bigg( \mathcal{X},\,  \mathcal{D} \times_{d+1} \Lambda  \bigg),
\end{align}
where $\delta: (\R^{I_{1}\times \dots \times I_{d}\times N})^{2}\rightarrow [0,\infty)$ is a `dissimilarity function' that maps a pair of tensors $(\mathcal{X},\mathcal{X}')$ to a nonnegative number $\delta(\mathcal{X}, \mathcal{X}')$. This function is used to measure the difference between the data tensor $\mathcal{X}$ and the `reconstruction' $\mathcal{D} \times_{d+1} \Lambda$. For $d=1$, standard choices of $\delta$ include the distance function induced by the Frobenius norm and the KL divergence. 

\subsection{Wasserstein distance between $d$-dimensional probability distributions} 
\label{ap:w}

A natural notion of dissimilarity between two probability distributions on the same probability space is the \textit{$p$-Wasserstein distance}, which is a central notion in this paper, which we will define below. 

Define the \textit{cost tensor} $\M\in \R^{I_{1}\times \cdots \times I_{d}}\times \R^{I_{1}\times \cdots \times I_{d}}$ for $d$-mode tensors to be the tensor defined by  $\M(J_{1},J_{2})=\lVert J_{1}-J_{2}\rVert_{2}$ for all multi-indices $J_{1},J_{2}\in [I_{1}]\times \dots \times [I_{d}]$. 

One can regard $\M$ as giving weights on the difference between the $J_{1}$- and the $J_{2}$-entry of two tensors. For instance, if $d=1$, then the dissimilarity between the two random variables $Y_{1}$ and $Y_{2}$ depends not only on the probability that they differ but also on the actual value $|Y_{1}-Y_{2}|$. The cost matrix $\M$, in this case, measures the probabilistic `cost' of having different probability mass on coordinates $J_{1}$ and $J_{2}$. Next, for two one-dimensional probability mass functions $p_{1}\in \R^{m}$, $p_{2}\in \R^{n}$, we call a two-dimensional joint distribution $T\in \Sigma_{m,n}$ a \textit{coupling} between $p_{1}$ and $p_{2}$ if its row (resp., column) sums agree with $p_{1}$ (resp., $p_{2}$). We denote by 
\begin{align*}
    U(p_{1},p_{2})&:= \left\{ T\in \Sigma_{m,n}\,\bigg|\, p_{1}(i)=\sum_{j=1}^{n} T(i,j), p_{2}(j)= \right. \\   &\left. \sum_{i=1}^{m} T(i,j)  \, \forall i\in \{1,\dots,m\},\, j\in \{1,\dots,n\}  \right\}
\end{align*}
the set of all couplings between $p_{1}$ and $p_{2}$. 

Now, we can define the Wasserstein distance. Fix a cost tensor $\M\in \R^{I_{1}\times \cdots \times I_{d}}\times \R^{I_{1}\times \cdots \times I_{d}}$ and let $\M^{2}\in \R^{(I_{1} \cdots  I_{d}) \times (I_{1} \cdots  I_{d})}$ denote its matricization (see \cite{kolda2009tensor}). Fix a parameter $\gamma\ge 0$. For $\A,\B\in \R^{I_{1}\times \cdots \times I_{d}}$, define 
    \begin{align}
        W_{\gamma}(\A,\B) &:= W_{\gamma}(\vect(\A),\vect(\B)) \nonumber\\
        &:= \min_{T\in U(\vect(\A), \vect(\B))}  \left\langle \M^{2}, T \right\rangle + \gamma \left\langle T, \log T \right\rangle, \label{eq:Wasserstein_def}
    \end{align}
where $\vect(\A)$ and $\vect(\B)$ denote the vectorization of $\A$ and $\B$, respectively. When $\gamma=0$, $W_{\gamma}$ above is known as the Wasserstein distance. The additional term $\gamma \langle T, \log T \rangle$ is known as the \textit{entropic regularization} of Wasserstein distance \cite{cuturi2013sinkhorn}.

\subsection{$d$-dimensional Wasserstein dictionary learning} 

We are interested in the case that the tensor-valued signals $\X_{1},\dots,\X_{N}$ describe $d$-dimensional probability mass functions. Namely, we denote 
\begin{align*}
    \Sigma_{I_{1},\dots,I_{d}}:=\left\{ \X\in \R_{\ge 0}^{I_{1}\times \dots \times I_{d}}\,\bigg|\, \sum_{i_{1},\dots,i_{d}} \X[i_{1},\dots,i_{d}]=1     \right\}.
\end{align*}
We can think of an element $\X$ of $\Sigma_{I_{1},\dots,I_{d}}$ as the joint probability mass function of $d$ discrete random variables $(Y_{1},\dots,Y_{d})$ where each $Y_{i}$ takes values from $\{1,\dots, I_{i} \}$. For this reason, we will call an element of $\Sigma_{I_{1},\dots,I_{d}}$ simply as a `$d$-dimensional joint distribution'. We also denote by $\Sigma_{I_{1},\dots,I_{d}}^{N}$ the $N$-fold product of $\Sigma_{I_{1},\dots,I_{d}}$, which we identify as a subset of $\R^{I_{1}\times \dots \times I_{d}\times N}$ in the usual way. 

When each $d$-mode tensor $\X_{i}$ subject to the factorization in \eqref{eq:tensor_DL_2} is a  $d$-dimensional joint distribution, then the dissimilarity function $\delta$ in \eqref{eq:tensor_DL_2} should measure the dissimilarity between two tuples of $d$-dimensional joint distribution. By using the \textit{entropy-regularized Wasserstein distance} $W_{\gamma}$ (see \eqref{eq:Wasserstein_def}), we formulate the \textit{$d$-dimensional Wasserstein Dictionary Learning} (dWDL) as \eqref{eq:tensor_DL_1}, where the dictionary atoms $\D_{1},\dots,\D_{r}$ are taken to be $d$-dimensional joint distributions (elements of $\Sigma_{I_{1},\dots,I_{d}}$) and  the dissimilarity function $\delta:\Sigma_{I_{1},\dots,I_{d}}^{N}\times \Sigma_{I_{1},\dots,I_{d}}^{N}\rightarrow [0,\infty)$ is
\begin{align}
    \delta([\X_{1},\dots,\X_{N}],\, [\X_{1}',\dots,\X_{N}']):=\sum_{i=1}^{N} W_{\gamma}(\X_{i}, \X_{i}').  \nonumber
\end{align}
Equivalently, we formulate our problem (dWDL) as below:
\begin{align}\label{eq:dWDL_main_problem}    
 \textup{\textbf{(dWDL)}} \hspace{0.5cm}
 \min_{\substack{\mathcal{D}=[\D_{1},\dots,\D_{r}]\in \Sigma_{I_{1},\dots,I_{d}}^{r} \\ \Lambda\in \Sigma_{r}^{N} }} f_W(\mathcal{D}, \Lambda),\\
 \hbox{ where } f_W(\mathcal{D}, \Lambda):= \sum_{i=1}^{N}W_{\gamma}\left(\X_i, \, \mathcal{D}\times_{d+1} \Lambda[:,i] \right). \nonumber
\end{align}
\vspace{-0.1cm}
For $d=1$, this formulation \eqref{eq:dWDL_main_problem} has been discussed in the study of Wasserstein dictionary learning, including \cite{sandler2011nonnegative}, \cite{zen2014simultaneous}, and \cite{rolet2016fast}.

\vspace{-0.3cm}
\subsection{Algorithm (dWDL)}

Given the previous estimate $(\Lambda_{n-1}, \mathcal{D}_{n-1})$, we compute the updated estimate $(\Lambda_{n}, \mathcal{D}_{n})$ by solving convex sub-problems as follows: 
\begin{align}
&\hspace{-0.3cm}\Lambda_{n}\in \argmin_{\Lambda \in \Sigma^{N}_{r}} \,\, f_W(\mathcal{D}_{n-1}, \Lambda) 
 + \frac{\tau_{n}}{2}\lVert \Lambda - \Lambda_{n-1} \rVert_{F}^{2} \label{alg:dWDL_high-level_Lambda} \\
&\hspace{-0.3cm}\mathcal{D}_{n}\in \argmin_{\mathcal{D}\in \Sigma^{r}_{I_{1}\times \cdots \times I_{d}}}
 \,\, f_W(\mathcal{D}, \Lambda_{n})   + \frac{\tau_{n}}{2}\lVert \mathcal{D} - \mathcal{D}_{n-1} \rVert_{F}^{2}.\label{alg:dWDL_high-level_D} 
\end{align}
For the standard nonnegative matrix factorization using the Frobenius norm instead of the Wasserstein norm, solving the corresponding convex sub-problems amounts to solving standard nonnegative least squares problem, which can be done by applying standard projected gradient descent. However, solving convex sub-problems in \eqref{alg:dWDL_high-level_Lambda} and \eqref{alg:dWDL_high-level_D}  is computationally demanding since one is required to compute $N$ Wasserstein distances $W_{\gamma}$, each of which involves finding an optimal transport plan by solving a separate optimization problem. Below, we propose a computationally efficient algorithm where one is only required to solve a single and simple subproblem (instead of $N$) for each block coordinate descent step. 
	
\begin{algorithm}[H]
\small
\caption{dWDL \eqref{eq:dWDL_main_problem}} 
\label{algorithm:dWDL_main}
\begin{algorithmic}[1]
    \STATE \textbf{Input:} $\param_{0}=(\mathcal{D}_0, \Lambda_0)\in \Sigma^{r}_{I_{1}\times \cdots \times I_{d}}\times \Sigma^{N}_{r}$ (initial estimate); $N$ (number of iterations);   $(\tau_{n})_{n\ge 1}$, (non-decreasing sequence in $[1,\infty)$);

    \STATE \quad \textbf{for} $n=1,\dots,N-1$ \textbf{do}: \STATE \quad \quad Update estimate $\param_{n-1}=(\mathcal{D}_{n-1}, \Lambda_{n-1})$ by  
    \begin{align}
        \Lambda_{n} &\leftarrow \textup{Algorithm \ref{algorithm:dWDL_Lambda} with input $(\mathcal{D}_{n-1}, \Lambda_{n-1} )$} \\
        \mathcal{D}_{n} & \leftarrow \textup{Algorithm \ref{algorithm:dWDL_D} with input $(\mathcal{D}_{n-1}, \Lambda_{n} )$} 
    \end{align}
    \STATE \quad \textbf{end for}
    \STATE \textbf{output:}  $\param_{N}$ 
\end{algorithmic}
\end{algorithm}

We now describe Algorithms \ref{algorithm:dWDL_Lambda} and \ref{algorithm:dWDL_D} that solve the convex sub-problems in \eqref{alg:dWDL_high-level_Lambda} and \eqref{alg:dWDL_high-level_D}.  To solve the primal problem \eqref{alg:dWDL_high-level_Lambda}, we consider its dual problem. For simplicity, denote the distance function and the proximal term by for $\X, y \in \Sigma_{I_{1}\times \cdots \times I_{d}}$ and for given $\lambda_0 \in \Sigma_{r}$,
\begin{align}
\nonumber
    H_{\X}(y) &:= W_{\gamma}\left(\X, \, y \right)
    \hbox{ and }\\
    F_{\lambda_0}(\lambda) &:= 
    \begin{cases}
         \frac{1}{2} \lVert \lambda - \lambda_0 \rVert_{F}^{2} &\hbox{ for } \lambda \in \Sigma_{r},\\
         +\infty &\hbox{ otherwise }.
    \end{cases} \label{eq:hf}
\end{align}
Then, the primal problem \eqref{alg:dWDL_high-level_Lambda} can be re-written as
\begin{align}
\nonumber
    \min_{ \Lambda \in \R^{r\times N} } \,\, \sum_{i=1}^{N} \{ H_{\X_i}  \left( \mathcal{D}_{n-1}\times_{d+1} \Lambda[:,i] \right) \\ + \tau_{n} F_{\Lambda_{n-1}[:,i]}(\Lambda[:,i])\}. \label{eq:primal}
\end{align}
Here, the condition $\Lambda \in \Sigma^{N}_{r}$ is enforced by $F$ in the second term.
	
Note that the above is a convex minimization problem but solving it directly is computationally expensive since simply evaluating the function $H_{\X_i}$ above involves finding an optimal transport map $T\in U(\vect(\A), \vect(\B))$. In order to overcome this issue, we consider the dual problem of \eqref{eq:primal} reminiscent of \citet{cuturi2013sinkhorn}. Introducing a dual variable $G \in \mathbb{R}^{I_{1}\times \cdots \times I_{d} \times N}$, we obtain the dual problem:
\begin{align}
\nonumber
& \min_{G \in \mathbb{R}^{I_{1}\times \cdots \times I_{d} \times N}} \sum_{i=1}^{N} \{ H_{\X_i}^*(-G[:,i]) \\ &+ \tau_n F_{\Lambda_{n-1}[:,i]}^*(\mathcal{D}_{n-1}\times_{\leq d} G[:,i]/\tau_n)  \}.\label{eq:dual}
\end{align}
Here, the \textit{conjugate} $f^{*}$ of $f$ is defined as
\begin{align}
    f^* : \mathbb{R}^d \rightarrow [-\infty, +\infty]: u \mapsto \sup_{x} (\langle x, u \rangle - f(x)).
\end{align}
This dual problem can be solved without having to deal with a matrix-scaling problem, as in the primal one (see \cite{cuturi2016smoothed}). 
We postpone further discussion about the conjugate functions $H^*$ and $F^*$ to the subsequent sections.
	
\begin{algorithm}[H]
\small
\caption{Solving for $\Lambda$} 
\label{algorithm:dWDL_Lambda}
\begin{algorithmic}[1]
    \STATE \textbf{Input:} $\param_{n-1}=(\mathcal{D}_{n-1}, \Lambda_{n-1}) \in \Sigma^{r}_{I_{1}\times \cdots \times I_{d}}\times \Sigma^{N}_{r}$ (current estimate);   $(\tau_{n})_{n\ge 1}$;
    \STATE \quad \quad Update estimate $\Lambda_{n-1}$ by  
     \begin{align*}
     & G^{\circ}_n \leftarrow \hbox{ the minimizer of \eqref{eq:dual}} \\
     & \Lambda_{n} \leftarrow \left(\Lambda_{n-1} + \frac{\mathcal{D}_{n-1}\times_{\leq d} G^{\circ}_n}{\tau_n} - J^\circ \otimes c^{\circ}_n \right)_+
    \end{align*}
    \quad \quad where $c^{\circ}_n \in \mathbb{R}^{N\times 1}$ is chosen to satisfy $\Lambda_n \in \Sigma^{N}_{r}$ and all entries of $J^\circ \in \R^{r \times 1}$ are one. 
    \STATE \textbf{output:}  $\param_{n-\frac{1}{2}} = (\mathcal{D}_{n-1}, \Lambda_n)$ 
\end{algorithmic}
\end{algorithm}

Here, the $1,2, \cdots, d$-mode product $\mathcal{D} \times_{\leq d} \Lambda$ of $\mathcal{D}\in \mathbb{R}^{I_{1}\times \dots \times I_{d} \times N}$ with a tensor $\Lambda \in \mathbb{R}^{I_1 \times I_{2} \times \cdots \times I_{d} \times J}$ is
\begin{align}
\nonumber
    &(\mathcal{D} \times_{\leq d} \Lambda) [j] := \sum_{i_1, i_2, \cdots, i_d} \mathcal{D}[i_1, i_2, \cdots, i_d] \nonumber \\  & \hspace{3cm} \times \Lambda[i_1, i_2, \cdots, i_d, j]. \label{eq:1nmode}
\end{align}
	
Based on similar arguments, the dual problem of \eqref{alg:dWDL_high-level_D} can be derived as follows:
\begin{align}
\nonumber
 & \min_{G \in \mathbb{R}^{I_{1}\times \cdots \times I_{d} \times N}} \left\{ \left( \sum_{i=1}^{N} H_{\X_i}^*(-G[:,i]) \right) \right. \\
&\left. + \tau_n F_{\mathcal{D}_{n-1}}^*(G \times_{d+1} \Lambda^{T}_n/\tau_n)  \right\}. \label{eq:dual2}
\end{align}

\begin{algorithm}[H]
\small
\caption{Solving for $\mathcal{D}$} 
\label{algorithm:dWDL_D}
\begin{algorithmic}[1]
\STATE \textbf{Input:} $\param_{n-\frac{1}{2}}=(\mathcal{D}_{n-1}, \Lambda_{n}) \in \Sigma^{r}_{I_{1}\times \cdots \times I_{d}}\times \Sigma^{N}_{r}$ (current estimate);   $(\tau_{n})_{n\ge 1}$;
\STATE \quad \quad Update estimate $\mathcal{D}_{n-1}$ by  
 \begin{align*}
 & G^{\dagger}_n \leftarrow \hbox{ the minimizer of \eqref{eq:dual2}}\\
 & \mathcal{D}_{n} \leftarrow \left(\mathcal{D}_{n-1} + \frac{G^{\dagger}_n \times_{d+1} \Lambda^{T}_n}{\tau_n}- J^{\dagger} \otimes c^{\dagger}_n \right)_+
\end{align*}
\quad \quad where $c^{\dagger}_n \in \mathbb{R}^{r\times 1}$ is chosen to satisfy $\mathcal{D}_{n} \in \Sigma^{r}_{I_{1}\times \cdots \times I_{d}}
$ and all entries of $J^\dagger \in \R^{I_{1}I_{2} \cdots I_{d} \times 1}$ are one.
\STATE \textbf{output:}  $\param_{n} = (\mathcal{D}_{n}, \Lambda_n)$ 
\end{algorithmic}
\end{algorithm}
	
The per-iteration cost of Algorithms~\ref{algorithm:dWDL_Lambda} and \ref{algorithm:dWDL_D} is given by $O((I_{1}\dots I_{d})^{2} N)$.

\section{Theoretical guarantees of Wasserstein dictionary learning}

We prove that our computationally efficient algorithm, Algorithm \ref{algorithm:dWDL_main}, is actually solving BCD with proximal regularization for our main problem \eqref{eq:dWDL_main_problem}. The proof of Theorem~\ref{lem:per_iter_correctness} can be found in Appendix~\ref{ap:per_iter_correctness}.

\begin{theorem}(Per-iteration correctness)
\label{lem:per_iter_correctness}
     Algorithm \ref{algorithm:dWDL_main} solves \eqref{alg:dWDL_high-level_Lambda} and \eqref{alg:dWDL_high-level_D}. 
\end{theorem}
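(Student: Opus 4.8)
The plan is to derive Algorithms~\ref{algorithm:dWDL_Lambda} and \ref{algorithm:dWDL_D} as exact Fenchel--Rockafellar duals of the proximal sub-problems \eqref{alg:dWDL_high-level_Lambda} and \eqref{alg:dWDL_high-level_D}, and then to recover the primal minimizers from the dual ones through the conjugate--subgradient inversion. I treat \eqref{alg:dWDL_high-level_Lambda} in detail; \eqref{alg:dWDL_high-level_D} is analogous. First, \eqref{alg:dWDL_high-level_Lambda} separates over the columns $i=1,\dots,N$: it equals $\min_{\Lambda\in\R^{r\times N}}\sum_{i=1}^{N}\{ f_i(A_i\Lambda[:,i]) + g_i(\Lambda[:,i])\}$ with $f_i := H_{\X_i}$, the linear map $A_i:\R^{r}\to\R^{I_1\times\cdots\times I_d}$ given by $A_i v := \mathcal{D}_{n-1}\times_{d+1} v$, and $g_i := \tau_n F_{\Lambda_{n-1}[:,i]}$ (which also installs the constraint $\Lambda[:,i]\in\Sigma_r$). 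Since $H_{\X_i}$ is finite and continuous on the probability simplex $\Sigma_{I_1\times\cdots\times I_d}$, which contains $A_i(\Sigma_r)$ and meets $A_i(\textup{relint}\,\Sigma_r)$, the standard constraint qualification for Fenchel duality holds; the primal feasible set $\Sigma_r$ is compact and the objective is lower semicontinuous, so the primal minimum is attained. Hence strong duality holds and, once the adjoint $A_i^{*}$ is identified, the Fenchel dual is exactly \eqref{eq:dual}.

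Second, I compute the adjoints. From $(\mathcal{D}_{n-1}\times_{d+1}v)[i_1,\dots,i_d] = \sum_k \mathcal{D}_{n-1}[i_1,\dots,i_d,k]\, v[k]$ one gets, for any $G\in\R^{I_1\times\cdots\times I_d}$, that $\langle \mathcal{D}_{n-1}\times_{d+1} v,\, G\rangle = \langle v,\, \mathcal{D}_{n-1}\times_{\le d} G\rangle$, so $A_i^{*}G = \mathcal{D}_{n-1}\times_{\le d} G$, which is precisely the term $\mathcal{D}_{n-1}\times_{\le d} G[:,i]/\tau_n$ appearing inside $F^{*}$ in \eqref{eq:dual}; using $g_i^{*}(w) = \tau_n F_{\Lambda_{n-1}[:,i]}^{*}(w/\tau_n)$ then matches \eqref{eq:dual} term by term. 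The same computation applied to $\mathcal{D}\mapsto \mathcal{D}\times_{d+1}\Lambda_n$ yields adjoint $G\mapsto G\times_{d+1}\Lambda_n^{T}$, which is the term appearing in \eqref{eq:dual2}.

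Third, I recover the primal solution. Each $g_i = \tau_n F_{\lambda_0}$ with $\lambda_0 = \Lambda_{n-1}[:,i]$ is $\tau_n$-strongly convex on its domain (indeed $\tfrac{\tau_n}{2}\|\cdot-\lambda_0\|^2 - \tfrac{\tau_n}{2}\|\cdot\|^2$ is affine), hence $g_i^{*}$ is differentiable; applying the conjugate--subgradient theorem at a saddle point $(\Lambda_n[:,i], -G^\circ_n[:,i])$ of the primal--dual pair, the inner minimization over $\Lambda[:,i]$ forces $\Lambda_n[:,i] = \nabla g_i^{*}\big((\mathcal{D}_{n-1}\times_{\le d} G^\circ_n)[:,i]\big)$, and this $\Lambda_n$ is a global primal minimizer by strong duality. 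A direct computation gives $\nabla g_i^{*}(u) = \nabla F_{\lambda_0}^{*}(u/\tau_n) = \argmin_{\lambda\in\Sigma_r}\tfrac12\big\|\lambda - (\lambda_0 + u/\tau_n)\big\|^2 = \proj_{\Sigma_r}\!\big(\lambda_0 + u/\tau_n\big)$, and the Euclidean projection onto the probability simplex has the closed form $\proj_{\Sigma_r}(w) = (w - c\,\mathbf 1)_+$, with $c$ the unique scalar making the entries sum to one. Substituting $u = (\mathcal{D}_{n-1}\times_{\le d} G^\circ_n)[:,i]$ and assembling the columns reproduces exactly the update $\Lambda_n = \big(\Lambda_{n-1} + \mathcal{D}_{n-1}\times_{\le d} G^\circ_n/\tau_n - J^\circ\otimes c^\circ_n\big)_+$ of Algorithm~\ref{algorithm:dWDL_Lambda}, with $c^\circ_n[i]$ the simplex-projection threshold of column $i$. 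Running the same three steps for \eqref{alg:dWDL_high-level_D} — now one (non-separable) problem in $\mathcal{D}$ with linear map $\mathcal{D}\mapsto\mathcal{D}\times_{d+1}\Lambda_n$, prox centre $\mathcal{D}_{n-1}$, and constraint set $\Sigma^{r}_{I_1\times\cdots\times I_d}$ (a product of $r$ simplices) — gives \eqref{eq:dual2} and the update of Algorithm~\ref{algorithm:dWDL_D}, with $c^\dagger_n\in\R^{r\times 1}$ collecting one projection threshold per dictionary atom.

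The main obstacle is bookkeeping rather than conceptual: keeping every sign and every tensor mode straight so that the derived dual coincides \emph{verbatim} with \eqref{eq:dual}/\eqref{eq:dual2} — in particular, checking that the normalizing vectors $c^\circ_n, c^\dagger_n$ in the algorithms are exactly the Lagrange multipliers of the sum-to-one constraints, i.e.\ the thresholds in the simplex projection — together with the soft points in the duality argument: verifying the constraint qualification for the possibly non-smooth $H_{\X_i}$ (including the limiting case $\gamma=0$), primal attainment, and the differentiability of $g_i^{*}$, which is what makes the recovery map single-valued and hence forces the algorithm's iterate to be \emph{the} (unique) minimizer. Note that none of this requires an explicit formula for $H^{*}_{\X_i}$, which is fortunate since the algorithm only needs the dual optimizer $G^\circ_n$ (resp.\ $G^\dagger_n$), not the conjugate itself.
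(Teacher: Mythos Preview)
Your proposal is correct and follows essentially the same Fenchel--Rockafellar duality route as the paper: both split the problem columnwise, identify the adjoint of $\Lambda\mapsto\mathcal{D}_{n-1}\times_{d+1}\Lambda$ as $G\mapsto\mathcal{D}_{n-1}\times_{\le d}G$, invoke strong duality (the paper phrases it via a general saddle-point lemma, Lemma~\ref{lem:saddle}, built on Bauschke--Combettes), and recover the primal from the optimality condition $A_i^{*}G^\circ\in\partial g_i(\Lambda_n[:,i])$. Your recovery step is slightly more streamlined---you use that strong convexity of $g_i$ makes $g_i^{*}$ differentiable with $\nabla g_i^{*}=\proj_{\Sigma_r}(\lambda_0+\cdot/\tau_n)$, whereas the paper instead writes out $\partial F_{\lambda_0}$ by cases (see \eqref{eq:Lambda_subdifferential2}) and then solves for $\Lambda_n$---but the two are equivalent and the resulting formulas coincide.
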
 

Formally speaking, the dual problem \eqref{eq:dual} is derived from the primal problem \eqref{eq:primal} as follows: for given $(\mathcal{D}_{n-1}, \Lambda_{n-1}) \in \Sigma^{r}_{I_{1}\times \cdots \times I_{d}}\times \Sigma^{N}_{r}$ and $\tau_n > 0$,
\begin{align*}
    &\min_{\Lambda \in \Sigma_{r}^{N} }  H_{\X_i}  \left( \mathcal{D}_{n-1}\times_{d+1} \Lambda[:,i] \right) + \tau_{n} F_{\Lambda_{n-1}[:,i]}(\Lambda[:,i]),\\
    & = \min_{\substack{\Lambda \in \Sigma_{r}^{N},\\ Q \in \Sigma^{N}_{I_{1}\times \cdots \times I_{d}}}} \max_{G \in  \mathbb{R}^{I_{1}\times \cdots \times I_{d} \times N}}  H_{\X_i}  \left( Q[:,i] \right) \\
    &\qquad + \tau_{n} F_{\Lambda_{n-1}[:,i]}(\Lambda[:,i]) \\ &\qquad + \langle Q[:,i] - \mathcal{D}_{n-1} \times_{d+1} \Lambda[:,i], G[:,i] \rangle,\\
    &  = - \min_{G \in \mathbb{R}^{I_{1}\times \cdots \times I_{d} \times N}}  H_{\X_i}^*(-G[:,i]) \\
    &\qquad + \tau_n F_{\Lambda_{n-1}[:,i]}^*(\mathcal{D}_{n-1}\times_{\leq d} G_n[:,i]/\tau_n).
\end{align*}
The above derivation is standard in the classical theory of convex optimization. However, solving Algorithm~\ref{algorithm:dWDL_main} requires us to find the optimizers of the primal problem \eqref{alg:dWDL_high-level_Lambda} and \eqref{alg:dWDL_high-level_D} in terms of the inputs and their dual solutions. Due to the constraints, $\mathcal{D} \in \Sigma^{r}_{I_{1}\times \cdots \times I_{d}}$ and $\Lambda \in \Sigma^{N}_{r}$, this does not directly follows.

To establish the correctness rigorously, we consider a general minimization problem of a bivariate function under inequality constraints in Lemma~\ref{lem:saddle}: for given functions $f: \mathcal{K} \rightarrow (-\infty, +\infty]$, $h: \mathcal{H} \rightarrow (-\infty, +\infty]$, and $R:\mathcal{H} \rightarrow \mathcal{K}$,
\begin{align}
\label{eq:gprimal}
    \min_{x \in \mathcal{H}, Rx \in K} f(Rx) + h(x).
\end{align}
Here, $\mathcal{H}$ and $\mathcal{K}$ are real Hilbert spaces with inner product $\langle\cdot, \cdot \rangle$, and $K$ is a nonempty closed convex cone in $\mathcal{K}$. The key idea is based on Propositions 19.18 and 19.23 in \citet{bauschke2011convex}, but we provide the proof in Appendix~\ref{ap:per_iter_correctness} for the sake of completeness.

Now we can obtain a convergence and complexity result for Algorithm~\ref{algorithm:dWDL_main} using Theorems~\ref{lem:per_iter_correctness} and\ref{thm:BCD}.

\begin{theorem}\label{thm:BCD_dWDL}
Suppose that Assumption~\ref{assumption:A3} holds, the proximal regularizers $(\tau_{n})_{n\ge 1}$ satisfy $\tau_{n}> 1/\gamma$ for $n\ge 1$ and $\tau_{n}=O(1)$. For a output $(\param_{n})_{n\ge 0}$ of Algorithm \ref{algorithm:dWDL_main}, the following hold: 
\begin{description}
\item[(i)] (Global convergence to stationary points) Every limit point of $(\param_{n})_{n\ge 0}$ is a stationary point of $f_W$ over $\Param:= \Sigma^{r}_{I_{1}\times \cdots \times I_{d}} \times \Sigma^{N} _{r}$.

\vspace{0.1cm}
\item[(ii)]  (Worst-case rate of convergence)  There exists a constant $M$ independent of $\param_{0}$ such that for  $n\ge 1$, \eqref{eq:thm_convergence_bd_inexact} in Theorem \ref{thm:BCD} holds.

\vspace{0.1cm}
\item[(iii)] (Worst-case complexity) 
The worst-case iteration complexity $N_{\eps}$ for Algorithm \ref{algorithm:dWDL_main} satisfies $N_{\eps} = O(\eps^{-1} (\log \eps^{-1})^{2} )$. Furthermore, the worst-case complexity of Algorithm \ref{algorithm:dWDL_main} is 
\begin{align*}
&O(N_{\eps}\cdot \textup{(worst-case cost of solving sub-problems)})  \\
&\qquad = O(N_{\eps} \cdot \log N_{\eps} \cdot \textup{(cost of PGD step for dual)}) \\
& \qquad = O( \eps^{-1} (\log \eps^{-1})^{3} (I_{1}\times \dots \times I_{d})^{2}N ).
\end{align*}
\end{description}
\end{theorem}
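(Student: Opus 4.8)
The plan is to derive Theorem~\ref{thm:BCD_dWDL} as a corollary of Theorems~\ref{lem:per_iter_correctness} and~\ref{thm:BCD}. The first step is to recognize the iterates of Algorithm~\ref{algorithm:dWDL_main} as a BCD-PR sequence for $f_W$: by Theorem~\ref{lem:per_iter_correctness}, the updates $\Lambda_{n}$ and $\mathcal{D}_{n}$ are exactly the unique minimizers of the proximally regularized convex subproblems \eqref{alg:dWDL_high-level_Lambda} and \eqref{alg:dWDL_high-level_D}, which are precisely the two block updates of BCD-PR \eqref{eq:BCD_factor_update_proximal} for the objective $f_W$ over $\Param=\Sigma^{r}_{I_{1}\times\cdots\times I_{d}}\times\Sigma^{N}_{r}$ with $m=2$ blocks and common proximal coefficient $\tau_{n}$. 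Thus $(\param_{n})_{n\ge0}$ is an instance of \eqref{eq:BCD_factor_update_proximal} --- inexact when the dual problems \eqref{eq:dual}, \eqref{eq:dual2} are solved only approximately --- so parts (i), (ii) and the bound $N_{\eps}=O(\eps^{-1}(\log\eps^{-1})^{2})$ in (iii) will follow from the matching parts of Theorem~\ref{thm:BCD}, provided its hypotheses hold for $f_W$.

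The second step checks Assumptions~\ref{assumption:A1}--\ref{assumption:A2} for $f_W$. Assumption~\ref{assumption:A2} is immediate: the simplices $\Sigma^{r}_{I_{1}\times\cdots\times I_{d}}$, $\Sigma^{N}_{r}$ are convex and compact, so $\Param$ is compact, and since $f_W\ge0$ each sublevel set of $f_W$ is a closed subset of $\Param$, hence compact. Assumption~\ref{assumption:A1} is the substantive point: one must produce a finite block-Lipschitz constant for $\nabla f_W$, and in fact with $L^{(i)}\le 1/\gamma$, so that the standing hypotheses $\tau_{n}>1/\gamma$ and $\tau_{n}=O(1)$ yield $\tau_{n}^{(i)}>L^{(i)}$ and $\tau_{n}=O(1)$. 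The key ingredient is that the entropic term $\gamma\langle T,\log T\rangle$ makes $y\mapsto W_{\gamma}(\X_{i},y)$ differentiable with Lipschitz gradient; the cleanest route is via the conjugate $H_{\X_i}^{*}$, which is a convex combination (weights given by the entries of $\vect(\X_{i})$) of log-sum-exp functions and hence $(1/\gamma)$-smooth by the standard softmax Hessian bound, so that $\nabla_{y}W_{\gamma}(\X_{i},\cdot)$ has Lipschitz constant controlled by $1/\gamma$. Since in each block the reconstruction $\mathcal{D}\times_{d+1}\Lambda[:,i]=\sum_{j}\Lambda[j,i]\D_{j}$ is a multilinear, simplex-to-simplex map of that block, the chain rule transfers the estimate and gives block-Lipschitz gradients for $f_W$. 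With A1 and A2 in hand, Theorem~\ref{thm:BCD} applies and yields (i), (ii), and (taking $\tau_{n}$ any constant in $(1/\gamma,\infty)$) $N_{\eps}=O(\eps^{-1}(\log\eps^{-1})^{2})$.

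For the remaining arithmetic complexity in (iii), the plan is to multiply $N_{\eps}$ by the worst-case cost of one outer iteration. One outer iteration amounts to solving the two dual problems \eqref{eq:dual}, \eqref{eq:dual2} to an optimality gap $\Delta_{n}$; choosing $\Delta_{n}$ (uniformly in $\param_{0}$) summable, e.g.\ $\Delta_{n}\asymp n^{-2}$, and using that these dual objectives are smooth and, modulo the shift ambiguity of the Sinkhorn potentials, well-conditioned so that projected gradient descent converges linearly, $O(\log(1/\Delta_{n}))=O(\log n)$ PGD steps suffice at outer step $n\le N_{\eps}$ (the $\tau_{n}$-strong convexity of the primal subproblems \eqref{alg:dWDL_high-level_Lambda}, \eqref{alg:dWDL_high-level_D} converts the dual accuracy into the primal gap $\Delta_{n}$). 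Each outer iteration therefore uses $O(\log N_{\eps})$ PGD steps, and by the per-iteration cost already noted each PGD step costs $O((I_{1}\cdots I_{d})^{2}N)$, dominated by the $N$ matrix--vector products against the $(I_{1}\cdots I_{d})\times(I_{1}\cdots I_{d})$ kernel matrix when forming $\nabla H^{*}_{\X_i}$ and the mode products $\mathcal{D}\times_{\le d}G$, $G\times_{d+1}\Lambda^{T}$. Multiplying, the total cost is $O(N_{\eps}\cdot\log N_{\eps}\cdot(I_{1}\cdots I_{d})^{2}N)=O(\eps^{-1}(\log\eps^{-1})^{3}(I_{1}\cdots I_{d})^{2}N)$.

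The main obstacle is the verification of Assumption~\ref{assumption:A1} with a clean constant: one must show $y\mapsto W_{\gamma}(\X_{i},y)$ has Lipschitz gradient with constant controlled by $1/\gamma$ and that the simplex-to-simplex tensor contractions do not inflate it --- this calls for using that those contractions send probability vectors to probability vectors (so one relies on the softmax Hessian bound rather than crude operator-norm estimates) and for controlling the reconstructions away from the boundary of the simplex, where the curvature of $W_{\gamma}$ is least favorable. A secondary technical point, needed for the $\log N_{\eps}$ factor in (iii), is establishing the linear convergence rate of projected gradient descent on the dual subproblems \eqref{eq:dual}, \eqref{eq:dual2}.
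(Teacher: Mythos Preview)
Your proposal is correct and follows essentially the same route as the paper: invoke Theorem~\ref{lem:per_iter_correctness} to identify Algorithm~\ref{algorithm:dWDL_main} with BCD-PR, verify Assumptions~\ref{assumption:A1}--\ref{assumption:A2} (the paper likewise cites the $1/\gamma$-Lipschitz gradient of $H_{\X}^{*}$ from \citet{cuturi2016smoothed} for \ref{assumption:A1} and uses compactness of the simplices for \ref{assumption:A2}), apply Theorem~\ref{thm:BCD} for (i)--(ii) and the $N_\eps$ bound, and then combine with the $O((I_1\cdots I_d)^2 N)$ per-PGD-step cost for the total complexity in (iii). The technical points you flag as obstacles---transferring smoothness through the tensor contraction and the logarithmic inner-iteration count for the dual subproblems---are handled in the paper at the same level of detail you give (the paper simply asserts ``Assumption~\ref{assumption:A1} is satisfied'' and ``\eqref{eq:dual} and \eqref{eq:dual2} are convex problems, we conclude (iii)''), so your plan is well aligned.
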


\begin{proof}[\textbf{Proof of Theorem~\ref{thm:BCD_dWDL}}]

Let us first show that Algorithm \ref{algorithm:dWDL_main} satisfies Assumptions~ 
\ref{assumption:A1}, and \ref{assumption:A2}. Then, (i) and (ii) follow from Theorem~\ref{thm:BCD}.
The conjugate function of $H_\X$ given in \eqref{eq:hf} 
has a closed form \cite{cuturi2016smoothed}: for $g \in \mathbb{R}^{I_{1}\times \cdots \times I_{d}}$ and given $\X \in \Sigma_{I_{1}\times \cdots \times I_{d}}$,
\begin{align*}
    H^*_\X(g;\Sigma_{I_{1}\times \cdots \times I_{d}}) &:= \sup_{y \in \Sigma_{I_{1}\times \cdots \times I_{d}}} \langle g, y \rangle - H_\X(y),\\
    &= \gamma \left( \langle \X, \log \X \rangle + \langle \X, \ \ \log ( K \alpha ) \rangle \right).
\end{align*}
Here, $K = \exp(-M/\gamma) \in \left(\mathbb{R}^{I_{1}\times \cdots \times I_{d}}\right)^2$, $\alpha = \exp (g/\gamma) \in \mathbb{R}^{I_{1}\times \cdots \times I_{d}}$, and $M \in \left(\mathbb{R}^{I_{1}\times \cdots \times I_{d}}\right)^2$ is a given cost matrix. It is known from Theorem 2.4 in \citet{cuturi2016smoothed} that this dual function is $C^\infty$. In addition, its gradient function is $1/\gamma$ Lipschitz, and it is explicitly given as
\begin{align}
\label{eq:gradh}
    \nabla H^*_\X(g) = \alpha \circ \left( K \frac{\X}{K \alpha} \right) \in \Sigma_{I_{1}\times \cdots \times I_{d}}.
\end{align}
Therefore, Assumption~\ref{assumption:A1} is satisfied. Furthermore, the constraint set $\Sigma^{r}_{I_{1}\times \cdots \times I_{d}}$ and $\Sigma^{N}_{r}$ satisfy Assumption~\ref{assumption:A2}.

Next, we compute the per-iteration cost of Algorithms~\ref{algorithm:dWDL_Lambda} and \ref{algorithm:dWDL_D}. 
The dual function of $F_{\lambda_0}$ is given by for $g \in \mathbb{R}^r$
\begin{align*}
    F_{\lambda_0}^*(g) &:= \sup_{\lambda \in \Sigma_{r}} \langle g, \lambda \rangle - \frac{1}{2}\lVert \lambda - \lambda_0 \rVert_{F}^{2}.
\end{align*}
From Lemma~\ref{lem:fdual}, the optimizer of the above
is given as
\begin{align}
\label{eq:fdual2}
    \lambda^* = (g + \lambda_0 - c 1_{r})_+
\end{align}
where $c$ is a constant chosen to satisfy $\lambda \in \Sigma_{r}$, and thus
\begin{align*}
    F_{\lambda_0}^*(g) = \frac{1}{2} (g + \lambda_0 - c 1_{r})_+ (g + \lambda_0 + c 1_{r}) - \frac{1}{2} \lVert \lambda_0 \rVert_{F}^{2}.
\end{align*}
By the duality as in Lem. 7.15 in \citet{santambrogio2015optimal}, its gradient is given as the optimizer \eqref{eq:fdual2}: 
$\nabla F_{\lambda_0}^*(g) = \lambda = (g + \lambda_0 - c 1_{r})_+ \in \Sigma_r$. Therefore, each gradient descent step to solve \eqref{eq:dual} or \eqref{eq:dual2} requires $O((I_{1}\dots I_{d})^{2} N)$. Lastly, \eqref{eq:dual} and \eqref{eq:dual2} are convex problems, we conclude (iii).
\end{proof}

\section{Extension to Wasserstein CP-dictionary learning}

While it is possible to vectorize general $d$-mode tensor-valued signals to reduce to the case of dictionary learning for vector-valued signals, it would be more beneficial to tailor the $d$-dimensional dictionary learning problem \eqref{eq:tensor_DL_2} to exploit particular tensor structures that one desires to respect. One such approach is to constrain further the type of dictionary atoms $\D_{1},\dots,\D_{r}$ that we allow. Namely, the CONDECOMP/PARAFAC (CP)-dictionary learning \cite{lyu2020online_tensor} assumes that each $\D_{i}$ is a rank-1 tensor in the sense that it is the outer product of some 1-dimensional vectors. Also, exploiting Tucker-decomposition structure on the dictionary atoms has been studied recently in \citet{shakeri2016minimax, ghassemi2017stark}. 

\subsection{Wasserstein CP-dictionary learning}

Suppose a data tensor $\X \in  \R^{I_{1}\times \cdots \times I_{d}}$ is given and fix an integer $r \ge 1$. In the  CANDECOMP/PARAFAC (CP) decomposition of $\X$ \cite{kolda2009tensor}, we would like to find $r$ \textit{loading matrices} $U^{(i)} \in \R^{I_{i}\times r}$ for $i=1,\dots, d$ such that the sum of the outer products of their respective columns approximate $\X$:
\begin{align*}
    \X \approx  \sum_{k=1}^{r} \bigotimes_{i=1}^{d} U^{(i)}[:,k] =: [\![ U^{(1)},U^{2},\dots,U^{(d)}  ]\!] 
\end{align*}
where $U^{(i)}[:,k]$ denotes the $k^{\textup{th}}$ column of the $I_{i}\times r$ loading matrix matrix $U^{(i)}$ and $\bigotimes$ denotes the outer product. We have also introduced the bracket operation $[\![ \cdot ]\!]$.  
	
As an optimization problem, the above CP decomposition model can be formulated as the following the \textit{constrained CP-decomposition} problem: 
\begin{align}\label{eq:NTF} 
    \argmin_{U^{(1)}\in \Theta^{(1)},\dots, U^{(d)}\in \Theta^{(d)}} f_{\textup{CP}}(U^{(1)},\dots, U^{(d)})
\end{align}
where 
\begin{align*}
    f_{\textup{CP}}(U^{(1)},\dots, U^{(d)}) :=  \left\lVert \X - [\![ U^{(1)},U^{2},\dots,U^{(d)}  ]\!] \right\rVert_{F}^{2}
\end{align*}
and $\Theta^{(i)}\subseteq \R^{I_{i}\times r}$ denotes a compact and convex constraint set and $\lambda_{i}\ge 0$ is a $\ell_{1}$-regularizer for the $i^{\textup{th}}$ loading matrix $U^{(i)}$ for $i=1,\dots, d$.  In particular, by taking  $\lambda_{i}=0$ and  $\Theta^{(i)}$ to be the set of nonnegative $I_{i}\times r$ matrices with bounded norm for $i=1,\dots, d$,  \eqref{eq:NTF} reduces to the \textit{nonnegative CP decomposition} (NCPD) \cite{shashua2005non, zafeiriou2009algorithms}. Also, it is easy to see that $f_{\textup{CP}}$ is equal to 
\begin{align}\label{eq:NTF_dict}
    \left\lVert \X - \Out(U^{(1)}, \dots, U^{(d-1)}) \times_{d} (U^{(d)})^{T} \right\rVert_{F}^{2},
\end{align}
which is the \textit{CP-dictionary-learning} problem introduced in \citet{lyu2020online_tensor}. Here 
$\times _{d}$ denotes the mode-$d$ product (see \cite{kolda2009tensor}) the outer product of loading matrices $U^{(1)},\dots, U^{(m)}$ is defined as 
\begin{align}
    \nonumber
    &\Out(U^{(1)},\dots,U^{(d)}) := \\ &\left[\bigotimes_{k=1}^{d} U^{(k)}[:,1],\, \bigotimes_{k=1}^{d} U^{(1)}[:,2],\, \dots \,, \bigotimes_{k=1}^{d} U^{(k)}[:,r]  \right] 
    \label{eq:def_out}
\end{align}
Namely, we can think of the $d$-mode tensor $\mathbf{X}$ as $I_{d}$ observations of $(d-1)$-mode tensors, and the $R$ rank-1 tensors in $\Out(U^{(1)}, \dots , U^{(d)})$ serve as dictionary atoms, whereas the transpose of the last loading matrix $U^{(d)}$ can be regarded as the code matrix. 
	
The Wasserstein formulation of the \textit{CP-dictionary-learning} problem \eqref{eq:NTF} is given as follows. As in the setting of \eqref{eq:dWDL_main_problem}, we suppose that each $d$-mode tensor $\X_i$ is a $d$-dimensional joint distribution. We aim to represent each data tensor $X_i$ based on the product distributions of $d$ one-dimensional distributions, $U^{(i)} \in \Sigma_{I_i}^r$ for $i = 1, \cdots, d$:
\begin{align}
[\mathbf{X}_{1},\dots, \mathbf{X}_{N}] \approx \Out(U^{(1)},\dots,U^{(d)}) \times_{d+1} \Lambda
\end{align}
for some code matrix $\Lambda \in \Sigma_{r}^N$ where $\Out$ is given in \eqref{eq:def_out}. Comparing the Wasserstein distance between each $X_i$ and the corresponding distribution, we formulate our main problem of Wasserstein CP-dictionary Learning (WCPDL):
\begin{align}\label{eq:WCPDL} 
\argmin_{\substack{U^{(1)}\in \Sigma_{I_{1}}^r,\dots, U^{(d)}\in \Sigma_{I_{d}}^r,\\ \Lambda\in \Sigma_{r}^{N} }} f_{\textup{WCP}}(U^{(1)},\dots, U^{(d)}, \Lambda)
\end{align}
where 
\begin{align*}
&f_{\textup{WCP}}(U^{(1)},\dots, U^{(d)}, \Lambda) \\&:= \sum_{i=1}^{N} W_{\gamma} \left( \X_i, \, \Out(U^{(1)},\dots,U^{(d)}) \times_{d+1} \Lambda [:,i] \right).  
\end{align*}

\subsection{Algorithm (WCPDL)}

We state our algorithm to solve Wasserstein CP-dictionary Learning \eqref{eq:WCPDL}. Given the previous estimates $U_{n-1}^{(1)},\dots, U_{n-1}^{(d)}$ and $\Lambda_{n-1}$, we compute the updated estimate $U_{n}^{(1)},\dots, U_{n}^{(d)}$ and $\Lambda_{n}$ by solving convex sub-problems, iteratively, as follows. 

\begin{algorithm}[t]
    \small
    \caption{WCPDL \eqref{eq:WCPDL}} 
    \label{algorithm:WCPDL_main}
    \begin{algorithmic}[1]
        \STATE \textbf{Input:} $\param_{0}=(U_{0}^{(1)},\dots, U_{0}^{(d)}, \Lambda_0)\in \Sigma^{r}_{I_{1}}\times \cdots \times \Sigma^{r}_{I_{d}}\times \Sigma^{N}_{r}$ (initial estimate); $N$ (number of iterations);   $(\tau_{n})_{n\ge 1}$, (non-decreasing sequence in $[1,\infty)$);
        \STATE \quad \textbf{for} $n=1,\dots,N-1$ \textbf{do}: \STATE \quad \quad Update estimate $\param_{n-1}=(U_{n-1}^{(1)},\dots, U_{n-1}^{(d)}, \Lambda_{n-1})$ by  
        \begin{align*}
            &\mathcal{D} \leftarrow \Out(U_{n-1}^{(1)},\dots,U_{n-1}^{(d)}) \\
            &\Lambda_{n} \leftarrow \textup{Output of Algorithm \ref{algorithm:dWDL_Lambda} with input $(\mathcal{D}, \Lambda_{n-1})$ 
            }; 
        \end{align*}
        \STATE \quad \quad  \textbf{for} $k=1,\dots,d$ \textbf{do}: \STATE \quad \quad \quad
        Update estimate $U_{n-1}^{(k)}$ by
\begin{align*}
& \overline{\Lambda} \leftarrow \Out(U_{n}^{(1)},\dots,U_{n}^{(k-1)},U_{n-1}^{(k+1)}, \dots, U_{n-1}^{(d)}, \Lambda_n^T) \\
& \overline{\Lambda} \leftarrow \hbox{ Inserting the last mode of $\overline{\Lambda}$ into the $k$th mode} \\
&U_{n}^{(k)} \leftarrow \textup{Output of Algorithm \ref{algorithm:dWDL_D} with input
$(U_{n-1}^{(k)}, \overline{\Lambda})$
}  
\end{align*}
        \STATE \quad \quad \textbf{end for}
        \STATE \quad \textbf{end for}
        \STATE \textbf{output:}  $\param_{N}$ 
    \end{algorithmic}
\end{algorithm}

First, let $\mathcal{D}_{n-1}$ be $\Out(U_{n-1}^{(1)},\dots,U_{n-1}^{(d)}) \in \Sigma_{I_1 \times I_2 \times \cdots \times I_d}^r$. For a given data tensor $\X \in \Sigma_{I_1 \times I_2 \times \cdots \times I_d}^{N}$, $\tau_n>0$, and the previous estimates above, the code matrix is updated as follows:
\begin{align}
\nonumber
\Lambda_{n}\in \argmin_{\Lambda \in \Sigma^{N}_{r}} \,\, &\left( \sum_{i=1}^{N}W_{\gamma}\left(\X_i, \, (\mathcal{D}_{n-1} \times_{d+1} \Lambda) [:,i] \right)\right) \\ &+ \frac{\tau_{n}}{2}\lVert \Lambda - \Lambda_{n-1} \rVert_{F}^{2}. \label{alg:WCPDL_high-level_Lambda}
\end{align}

Next, for each $k \in \{1,2, \cdots, d\}$, let $\overline{\Lambda} \in \mathbb{R}^{I_1 \times I_2 \times \cdots \times I_{k-1} \times r \times  I_{k+1} \times \cdots \times I_{d} \times N}$ be obtained from 
\begin{align*}
    \Out(U_{n}^{(1)},\dots,U_{n}^{(k-1)},U_{n-1}^{(k+1)}, \dots, U_{n-1}^{(d)}, \Lambda_n^T)
\end{align*}
in $\mathbb{R}^{I_1 \times I_2 \times \cdots \times I_{k-1} \times I_{k+1} \times \cdots \times I_{d} \times N \times r}$ by inserting the last mode into the $k$th mode. Given $\overline{\Lambda}$, the dictionaries are updated as follows: 
\begin{align}
\nonumber
U_{n}^{(k)}\in \argmin_{U \in \Sigma_{I_{k}}^r
} \,\, &\left( \sum_{i=1}^{N}W_{\gamma}\left(\X_i, \, 
\overline{\Lambda}[:,i] \times_k U^T \right) \right)\\
  &+ \frac{\tau_{n}}{2}\lVert U^{(k)} - U_{n-1}^{(k)} \rVert_{F}^{2}. \label{alg:WCPDL_high-level_D} 
\end{align}

\begin{theorem}(Per-iteration correctness)
\label{lem:per_iter_correctness_WCPDL}
     Algorithm \ref{algorithm:WCPDL_main} solves \eqref{alg:WCPDL_high-level_Lambda} and \eqref{alg:WCPDL_high-level_D}. 
\end{theorem}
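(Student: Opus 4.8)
The plan is to reduce the claim entirely to the per-iteration correctness already established for \textbf{dWDL} in Theorem~\ref{lem:per_iter_correctness}, by exhibiting each sub-problem of Algorithm~\ref{algorithm:WCPDL_main} as an instance of one of the two dWDL sub-problems \eqref{alg:dWDL_high-level_Lambda}--\eqref{alg:dWDL_high-level_D}. For the code update this is immediate: with $\mathcal{D}_{n-1}:=\Out(U_{n-1}^{(1)},\dots,U_{n-1}^{(d)})\in\Sigma_{I_1\times\cdots\times I_d}^{r}$ the problem \eqref{alg:WCPDL_high-level_Lambda} is \emph{literally} the dWDL $\Lambda$-sub-problem \eqref{alg:dWDL_high-level_Lambda} with this choice of dictionary tensor, so the call to Algorithm~\ref{algorithm:dWDL_Lambda} with input $(\mathcal{D}_{n-1},\Lambda_{n-1})$ returns its exact minimizer by Theorem~\ref{lem:per_iter_correctness}.

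For the loading-matrix updates, the key step is the multilinear identity
\begin{align*}
\Out(U^{(1)},\dots,U^{(d)})\times_{d+1}\Lambda[:,i] \;=\; \overline{\Lambda}[:,i]\times_{k}(U^{(k)})^{T},
\end{align*}
valid for every $k\in\{1,\dots,d\}$ and every $i$, where $\overline{\Lambda}$ is the tensor defined in Algorithm~\ref{algorithm:WCPDL_main} (the outer product of the remaining loading matrices with $\Lambda_n^{T}$, with its last mode moved into the $k$th slot). This is a direct entrywise computation: expanding the left-hand side via \eqref{eq:def_out} and the definition of the mode-$(d+1)$ product yields a sum over the rank index $s$ of $\bigl(\prod_{j}U^{(j)}[\cdot,s]\bigr)\Lambda[s,i]$; collecting all factors except $U^{(k)}$ into a single tensor indexed by $(i_1,\dots,i_{k-1},s,i_{k+1},\dots,i_d)$ reproduces exactly $\overline{\Lambda}[:,i]$, and the remaining contraction against $U^{(k)}$ along $s$ is precisely $\times_{k}(U^{(k)})^{T}$. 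Granting this identity, \eqref{alg:WCPDL_high-level_D} becomes, verbatim, the dWDL $\mathcal{D}$-sub-problem \eqref{alg:dWDL_high-level_D} in which the role of the dictionary is played by $U^{(k)}\in\Sigma_{I_k}^{r}$ (the $d=1$ special case of the constraint $\Sigma^{r}_{I_1\times\cdots\times I_d}$ with $I_1$ replaced by $I_k$) and the role of the code tensor is played by $\overline{\Lambda}$. Hence the call to Algorithm~\ref{algorithm:dWDL_D} returns the exact minimizer of \eqref{alg:WCPDL_high-level_D}, again by Theorem~\ref{lem:per_iter_correctness} (whose proof rests on the saddle-point Lemma~\ref{lem:saddle} and the projection formula Lemma~\ref{lem:fdual}, both stated for general Hilbert-space data and so applicable unchanged, in particular the $c^{\dagger}_n$-normalization enforcing $U_n^{(k)}\in\Sigma_{I_k}^{r}$).

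Finally, since the updates of $\Lambda$ and of $U^{(1)},\dots,U^{(d)}$ are carried out cyclically with the stated proximal terms $\tfrac{\tau_n}{2}\lVert\cdot-(\cdot)_{n-1}\rVert_F^2$, the combined procedure is exactly BCD-PR \eqref{eq:BCD_factor_update_proximal} applied to $f_{\textup{WCP}}$ over $\Sigma_{I_1}^{r}\times\cdots\times\Sigma_{I_d}^{r}\times\Sigma_r^{N}$, with every sub-problem solved exactly. The only genuinely delicate point is the index bookkeeping in the identity above — in particular checking that ``inserting the last mode of $\overline{\Lambda}$ into the $k$th mode'' aligns the rank index $s$ of $U^{(k)}$ with the contraction index of the mode-$k$ product — so I would devote the bulk of the written proof to verifying this tensor identity carefully and relegate the rest to citing Theorem~\ref{lem:per_iter_correctness}.
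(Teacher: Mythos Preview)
Your plan coincides with the paper's in spirit: the $\Lambda$-update is indeed literally the dWDL $\Lambda$-subproblem, and for the loading-matrix updates both you and the paper rest everything on the multilinear identity $\Out(U^{(1)},\dots,U^{(d)})\times_{d+1}\Lambda[:,i]=\overline{\Lambda}[:,i]\times_k (U^{(k)})^{T}$ together with Lemma~\ref{lem:saddle}. There is, however, one imprecision in your reduction that you should fix.

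The claim that \eqref{alg:WCPDL_high-level_D} becomes ``verbatim'' the dWDL $\mathcal{D}$-subproblem \eqref{alg:dWDL_high-level_D} in its $d=1$ instance does not type-check. In a one-dimensional dWDL problem the reconstruction $\mathcal{D}\times_2\Lambda[:,i]$ lands in $\R^{I_k}$ and the data $\X_i$ would have to lie in $\Sigma_{I_k}$, whereas here the data remain full $d$-mode tensors in $\Sigma_{I_1\times\cdots\times I_d}$, and the ``code'' slot of Algorithm~\ref{algorithm:dWDL_D} is an $r\times N$ matrix, not a tensor of the shape of $\overline{\Lambda}$. Consequently Theorem~\ref{lem:per_iter_correctness} cannot be cited as a black box; in particular the adjoint operation $G\times_{d+1}\Lambda_n^{T}$ hard-wired into Algorithm~\ref{algorithm:dWDL_D} and its dual \eqref{eq:dual2} is \emph{not} the right adjoint for the WCPDL linear map. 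The paper handles this by re-deriving the dual directly for $R:U\mapsto [\overline{\Lambda}[:,i]\times_k U^{T}]_i$, using the adjoint identity $\langle \overline{\Lambda}\times_k U^{T},\,G\rangle=\langle U,\,G\times_{\neq k}\overline{\Lambda}\rangle$; that is why Proposition~\ref{prop:per_iter_correctness_D_WCPDL} features the contraction $G\times_{\neq k}\overline{\Lambda}$ over \emph{all} modes except $k$, rather than the single mode-$(d+1)$ product of Algorithm~\ref{algorithm:dWDL_D}. Once you replace the ``$d=1$ dWDL'' shortcut with this direct application of Lemma~\ref{lem:saddle} to the new $R$ (which you already anticipate in your parenthetical), the remainder of your argument --- recovering the primal optimum via the subdifferential of $F$ and the projection formula of Lemma~\ref{lem:fdual}, with $c_n^{\dagger}$ enforcing $U_n^{(k)}\in\Sigma_{I_k}^{r}$ --- is exactly what the paper does.
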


\section{Experiments}

\subsection{Wasserstein barycenter problem}

We first provide the simplest example when $r=1$. In this case, $\Lambda \in \Sigma_1^N$ and thus all entries of $\Lambda$ are 1's, which corresponds to the Wasserstein barycenter problem with equal weights:
$\min_{\D\in \Sigma_{I_{1},\dots,I_{d}}} \sum_{i=1}^{N}W_{\gamma}\left(\X_i, \, \D \right).$

For data living in the space of probability distributions, using the Wasserstein metric instead of the Euclidean metric may provide a better representation. Figure~\ref{fig:1dwb} provides the barycenter with respect to Wasserstein distance and the Frobenius norm when $d=1$, $r=1$, and $N=3$.

As shown in the figure, the Wasserstein barycenter of three Gaussian distributions is close to the Gaussian distribution, while the Frobenius one is given as the vertical average of three distributions, which shows a significant difference between the two formulations. 

\begin{figure}[ht]
\centering
\begin{subfigure}[b]{0.47\columnwidth}
\centerline{\includegraphics[width=\columnwidth]{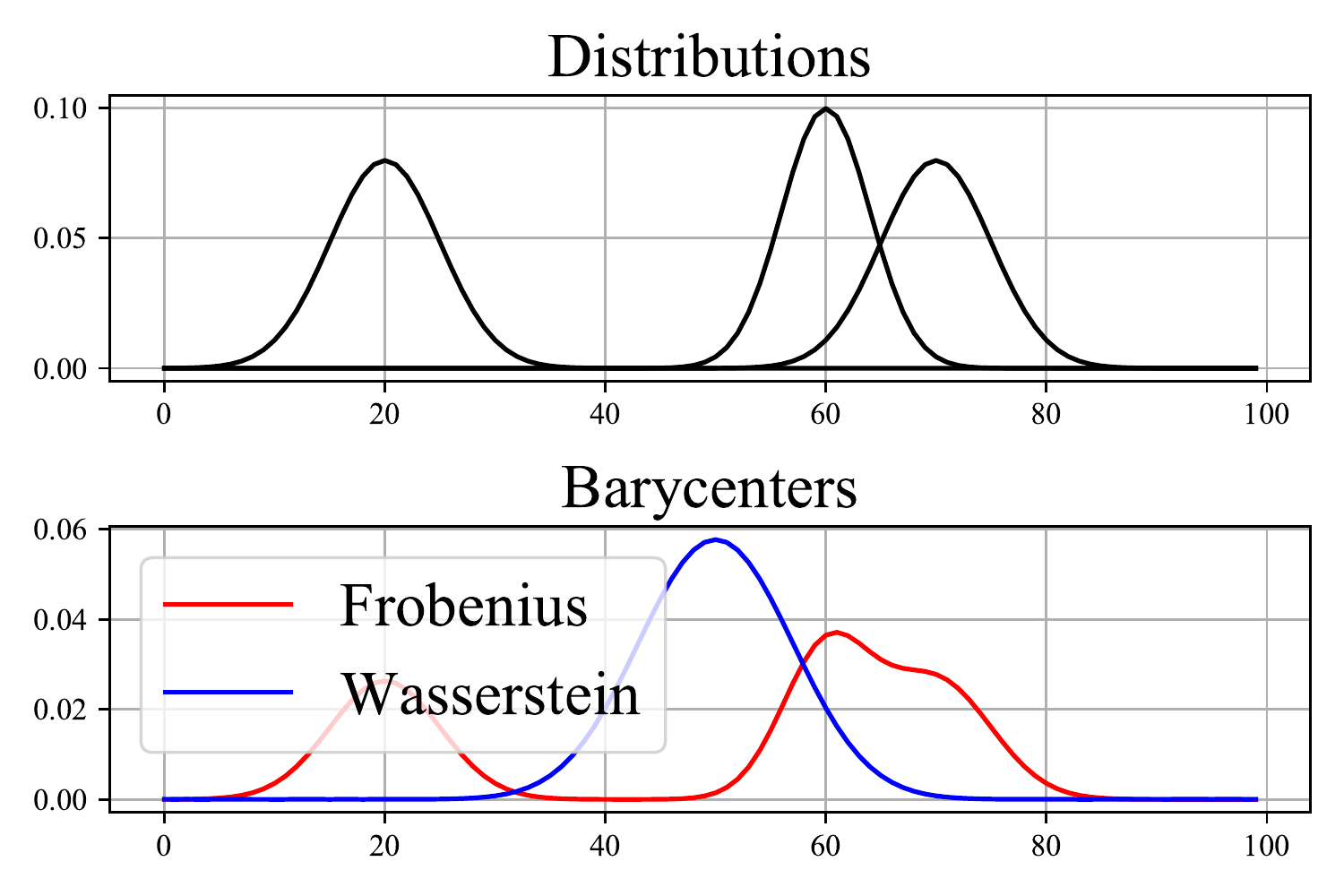}}
\end{subfigure}
\hfill
\begin{subfigure}[b]{0.45\columnwidth}
\centerline{\includegraphics[width=\columnwidth, trim=0cm 0cm 1cm 1cm]{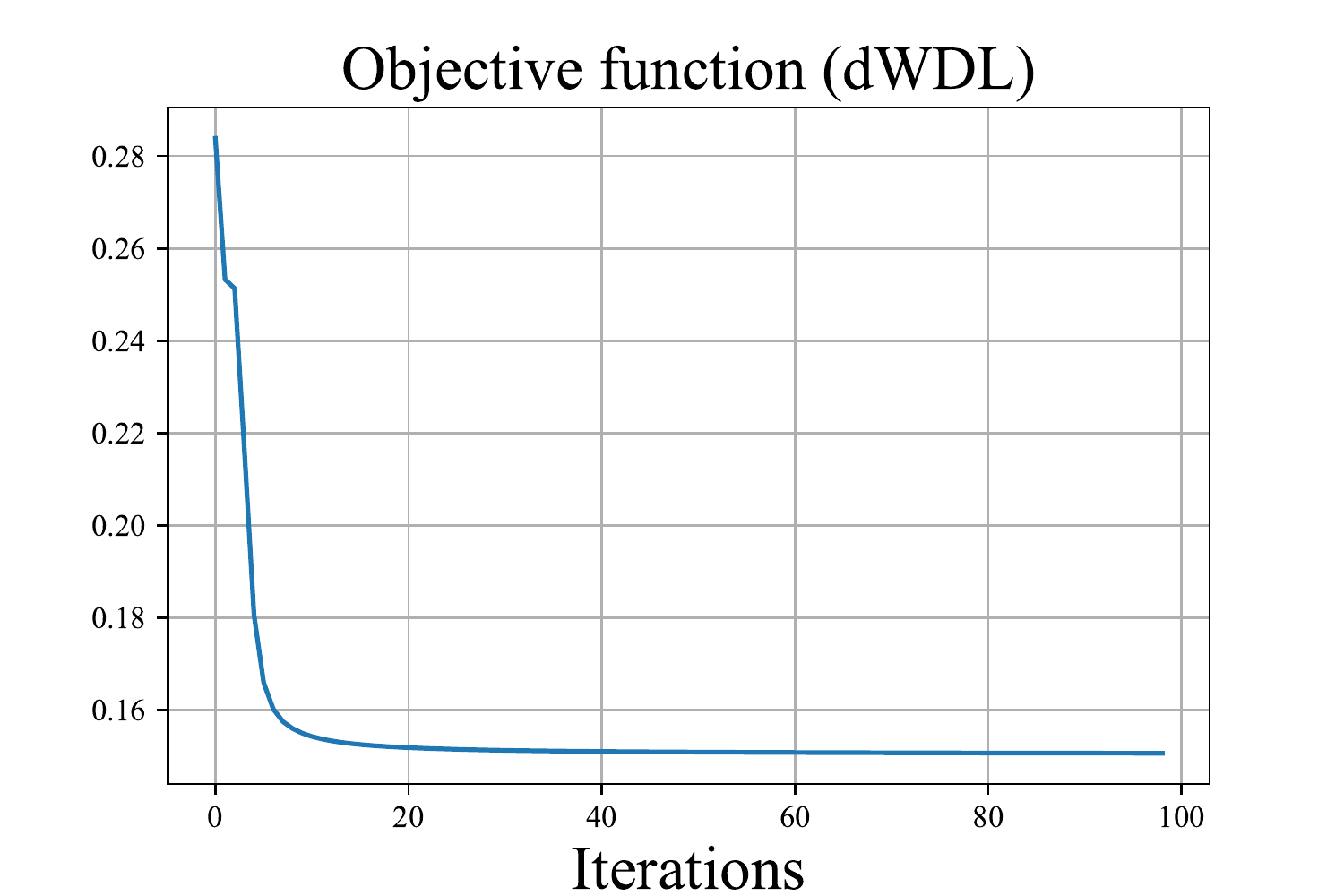}}
\end{subfigure}
\caption{Finding the barycenter of three Gaussian distributions with respect to Wasserstein distance and the Frobenius norm}
\label{fig:1dwb}
\end{figure}

\begin{figure}[ht]
\centering
\begin{subfigure}[b]{0.47\columnwidth}
\centerline{\includegraphics[width=\columnwidth]{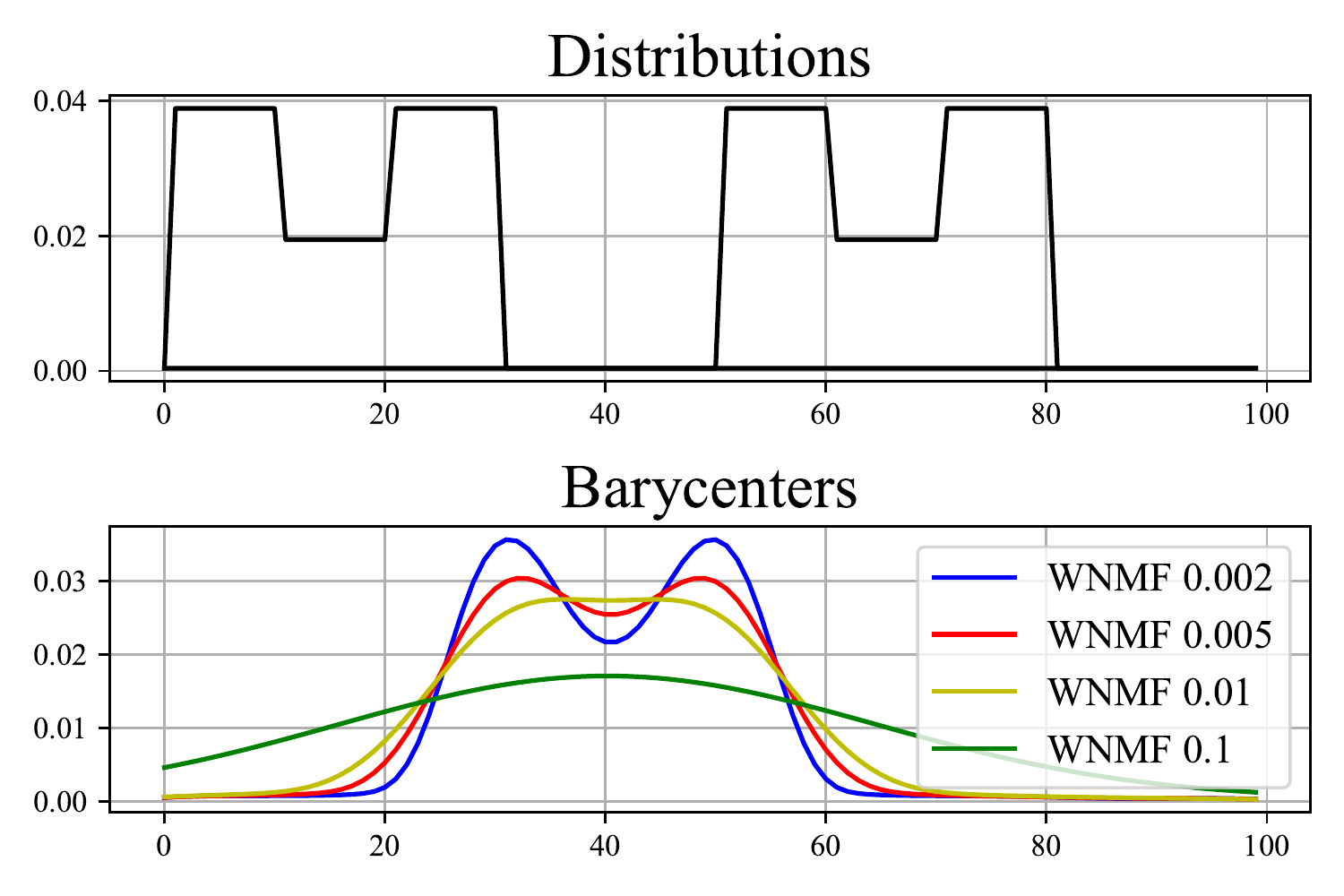}}
\end{subfigure}
\hfill
\begin{subfigure}[b]{0.45\columnwidth}
\centerline{\includegraphics[width=\columnwidth, trim=0cm 0cm 1cm 1cm]{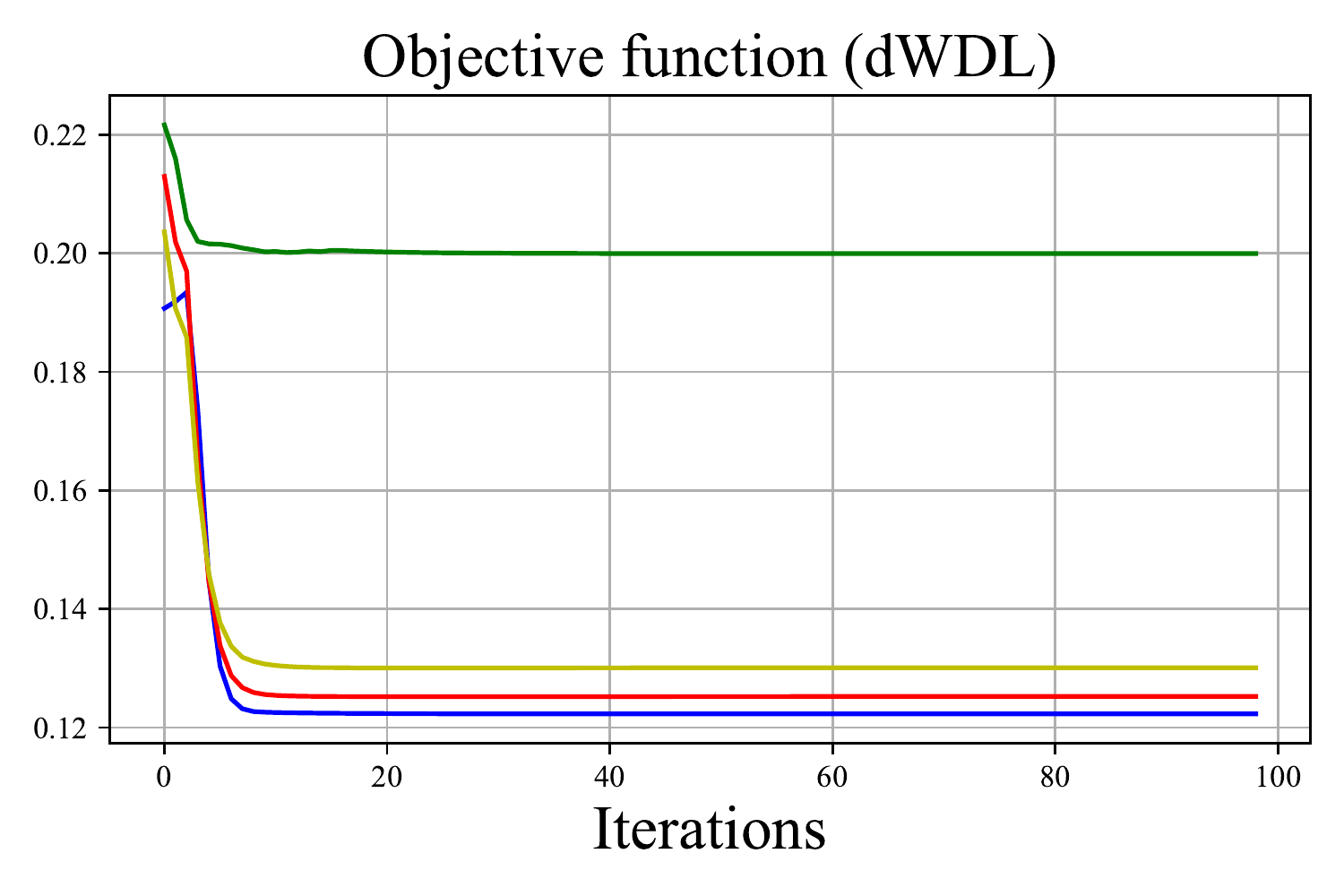}}
\end{subfigure}
\caption{Finding the barycenter of two $\sqcup$-shaped distributions with respect to Wasserstein distance for different $\gamma$'s}
\label{fig:1dwb_b}
\end{figure}

As defined in \eqref{eq:Wasserstein_def}, the regularized Wasserstein distance $W_{\gamma}$ depends on the parameter $\gamma>0$. In Figure~\ref{fig:1dwb_b}, we solve the Wasserstein barycenter problem for different $\gamma$'s and two $\sqcup$-shaped distributions. While two peaks  appear in $\gamma = 0.002$ and $\gamma = 0.005$, the distribution is getting close to Gaussian. This illustrates the importance of choosing appropriate $\gamma$ to find out the geometric property of data sets.

\subsection{Wasserstein dictionary learning}

\begin{figure}[ht]
\centering
\begin{subfigure}[b]{0.45\columnwidth}
\centerline{\includegraphics[width=\columnwidth]{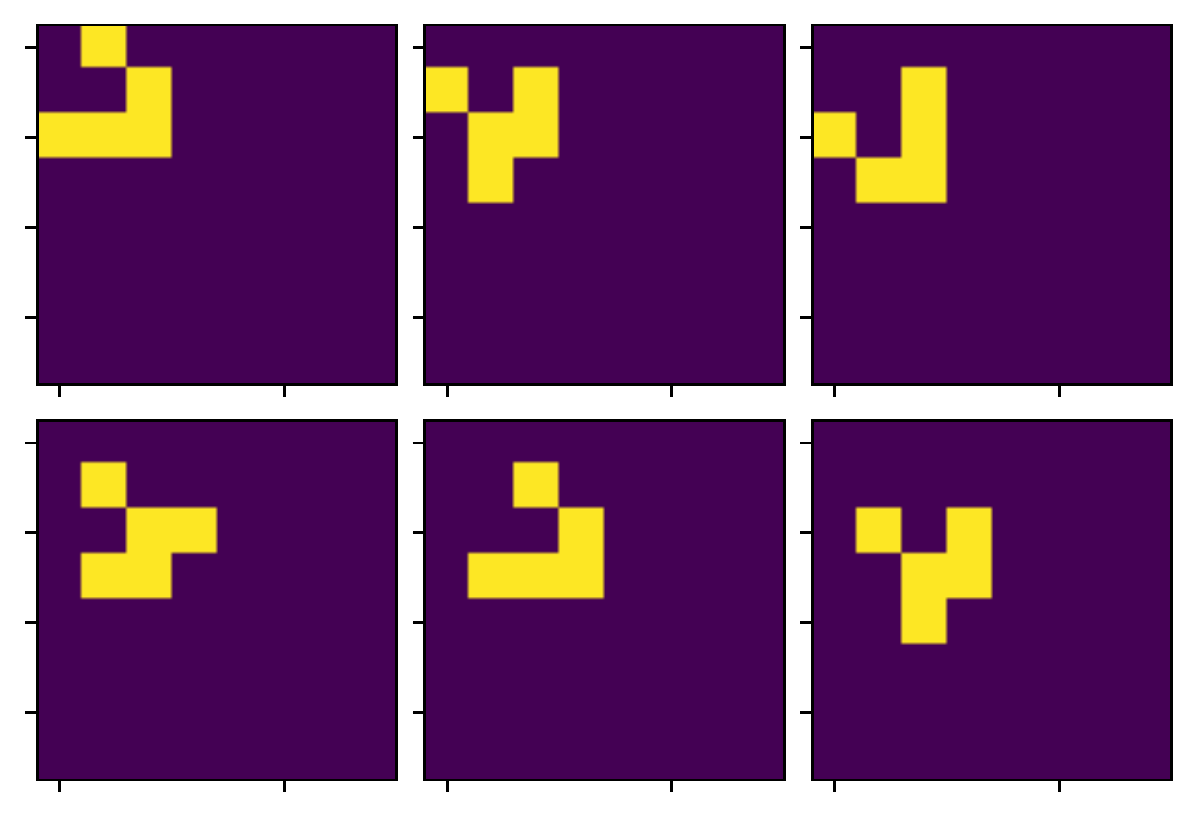}}
\end{subfigure}
\hfill
\begin{subfigure}[b]{0.38\columnwidth}
\centerline{\includegraphics[width=\columnwidth]{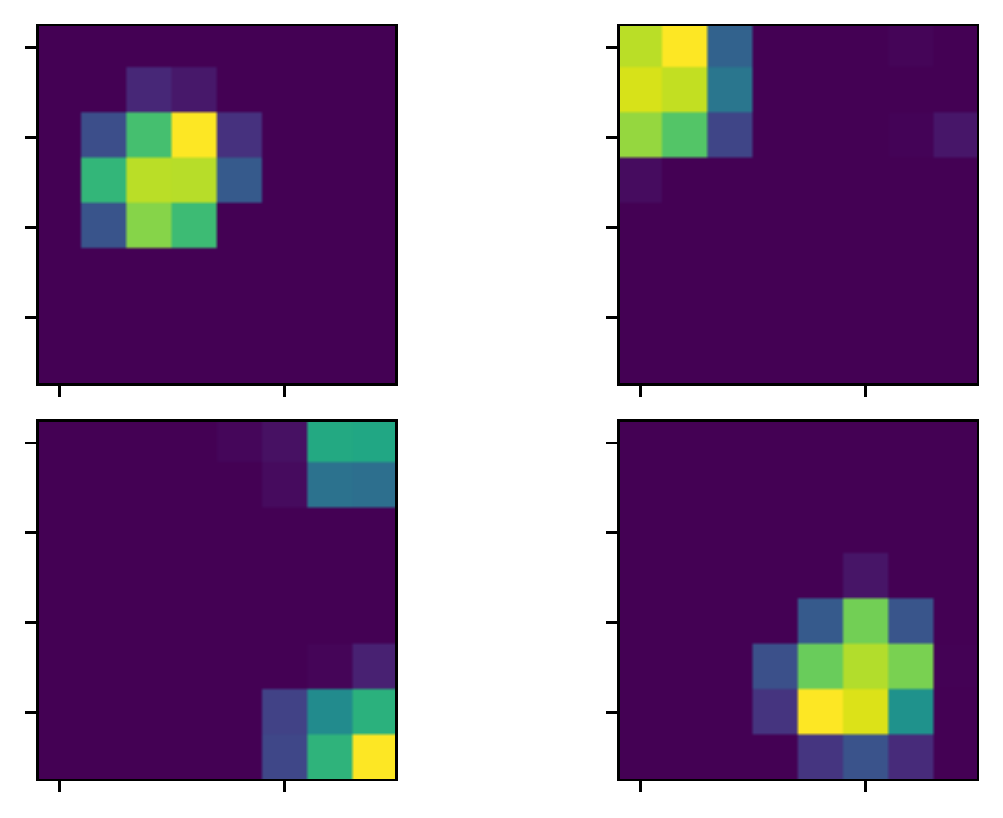}}
\end{subfigure}
\hfill
\begin{subfigure}[b]{0.09\columnwidth}
\centerline{\includegraphics[width=\columnwidth]{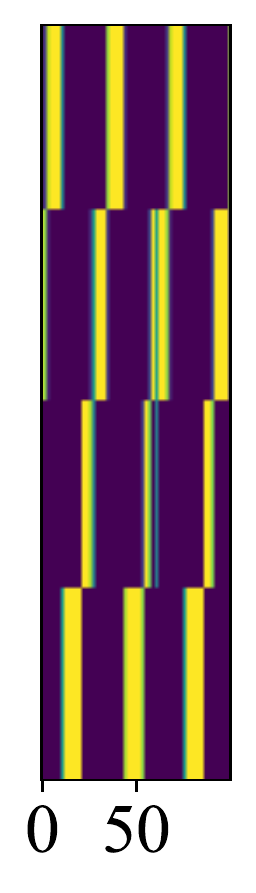}}
\end{subfigure}

\caption{Wasserstein dictionary learning with $r=4$, $N=100$, and the Euclidean distance; a sequence of images (left), dictionaries (middle), code matrices (right)}
\label{fig:game1}
\end{figure}

\begin{figure}[ht]
\centering
\begin{subfigure}[b]{0.38\columnwidth}
\centerline{\includegraphics[width=\columnwidth]{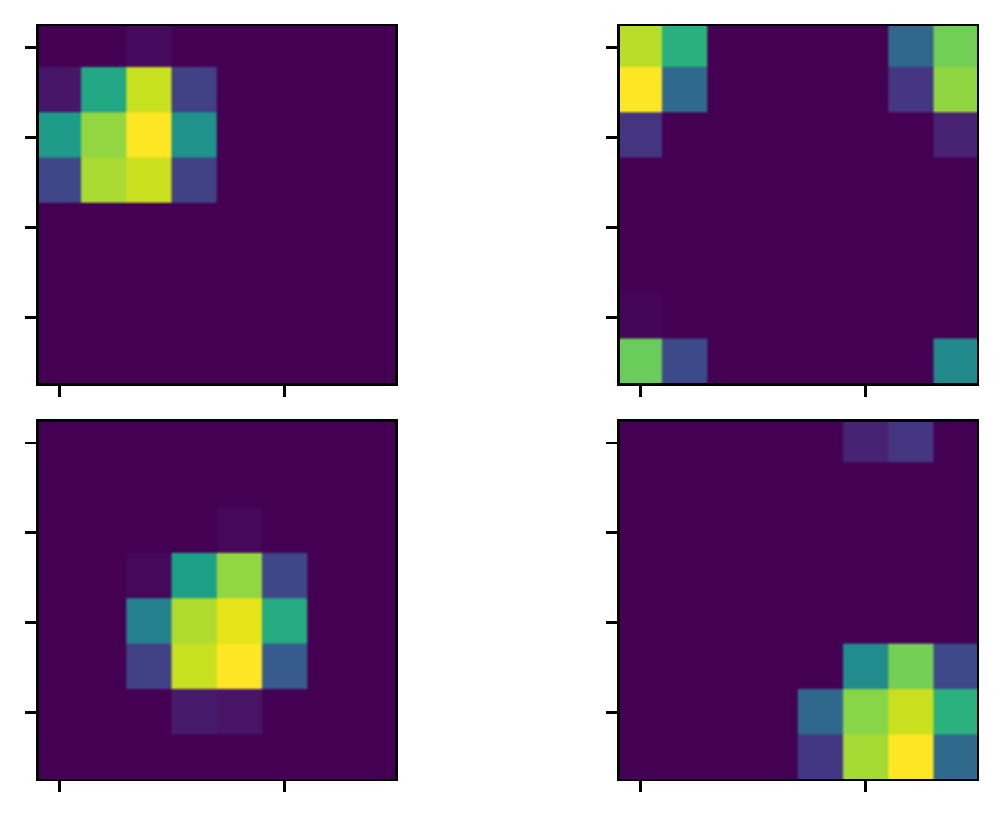}}
\end{subfigure}
\hfill
\begin{subfigure}[b]{0.3\columnwidth}
\centerline{\includegraphics[width=\columnwidth]{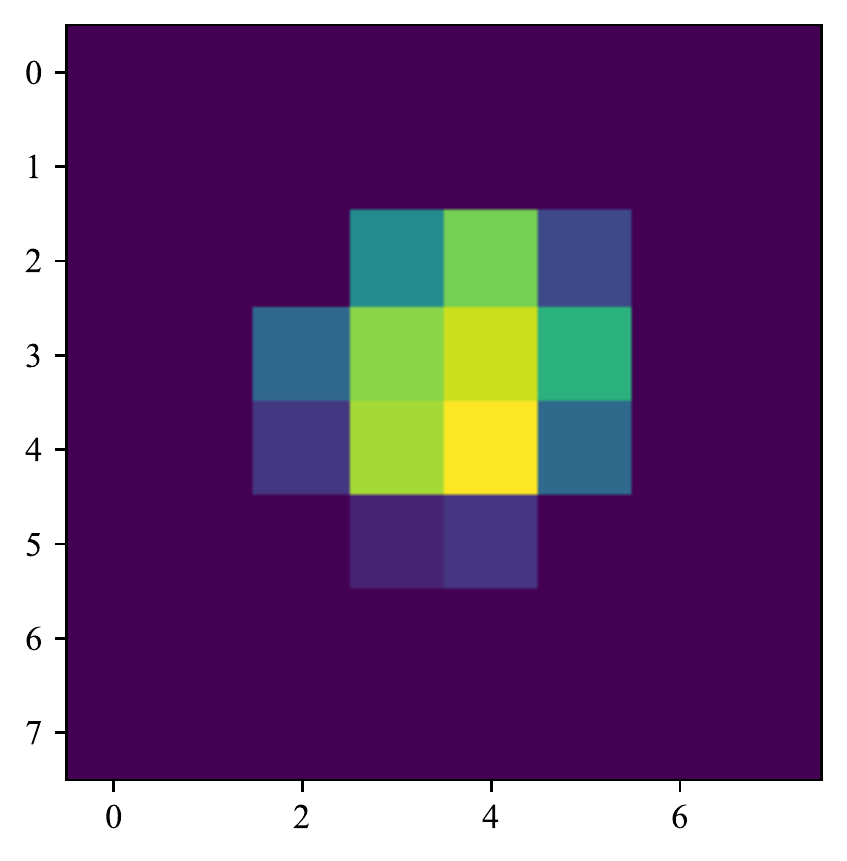}}
\end{subfigure}
\hfill
\begin{subfigure}[b]{0.09\columnwidth}
\centerline{\includegraphics[width=\columnwidth]{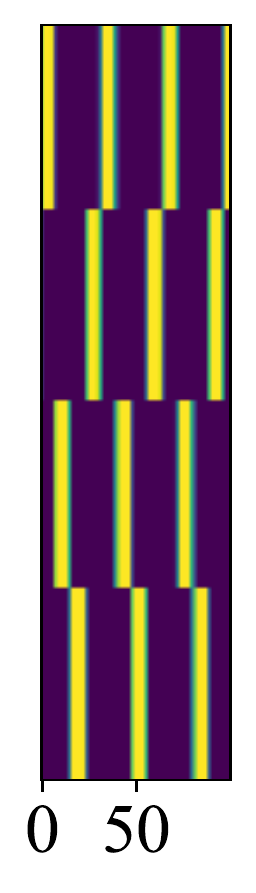}}
\end{subfigure}
\vspace{-0.4cm}
\caption{Wasserstein dictionary learning with $r=4$, $N=100$, and the distance on a torus; dictionaries (left), the translated top right dictionary (middle) code matrices (right)}
\label{fig:game2}
\end{figure}

The additional knowledge of the underlying spaces can be utilized in Wasserstein dictionary learning. To illustrate this, we consider a sequence of figures generated by John Conway's Game of Life, which has a periodic domain. We solve the problems of Wasserstein dictionary learning with two different ground metrics: the usual Euclidian distance in Figure~\ref{fig:game1} and the distance on a torus in Figure~\ref{fig:game2}. It can be seen in Figure~\ref{fig:game2} that all dictionaries are similar up to translations.

The results for Wasserstein dictionary learning on MNIST for different $r$'s are given as follows.
\begin{figure}[H]
\centering
\begin{subfigure}[b]{0.45\columnwidth}
\centerline{\includegraphics[width=\columnwidth]{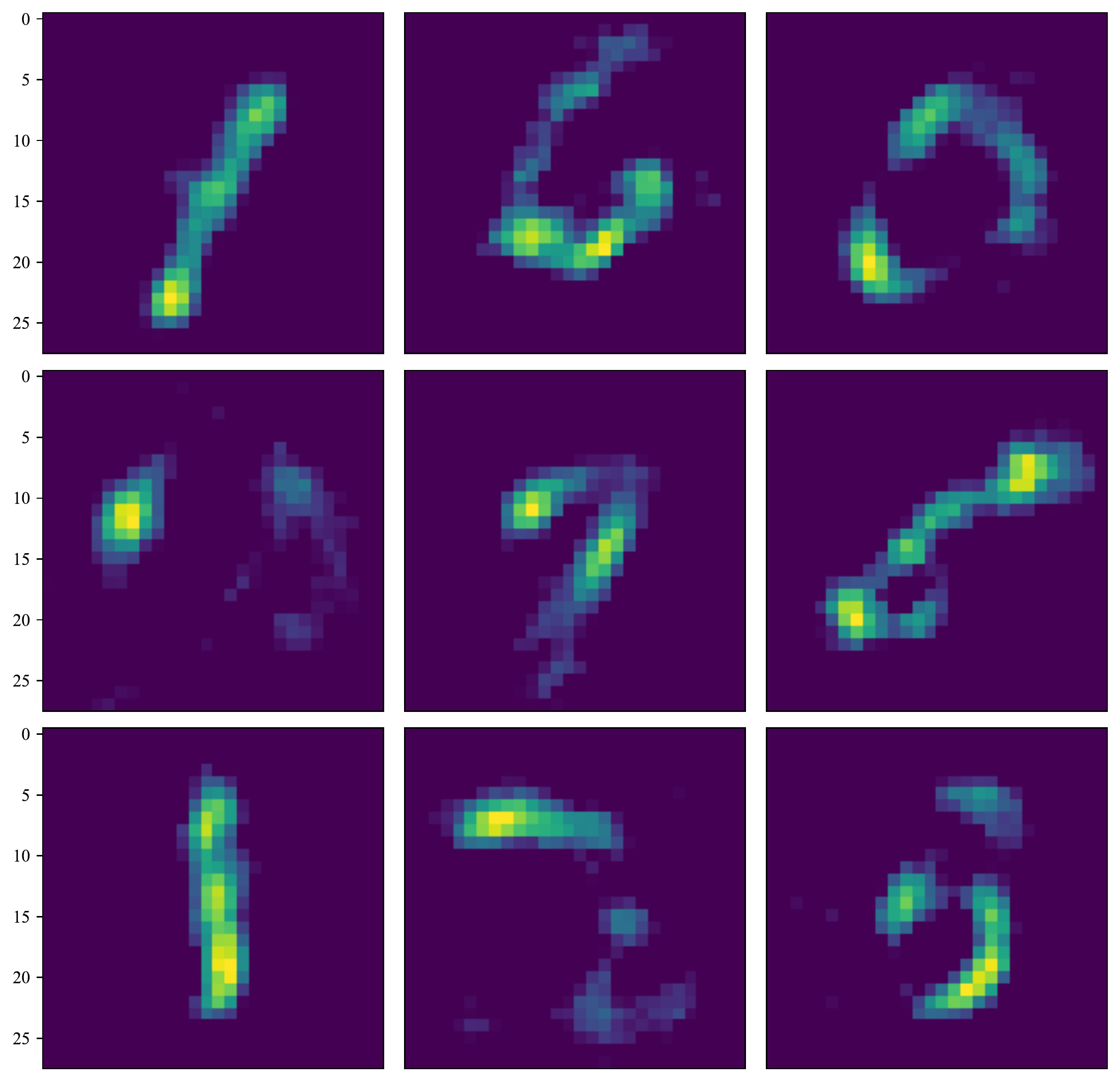}}
\end{subfigure}
\hfill
\begin{subfigure}[b]{0.45\columnwidth}
\centerline{\includegraphics[width=\columnwidth]{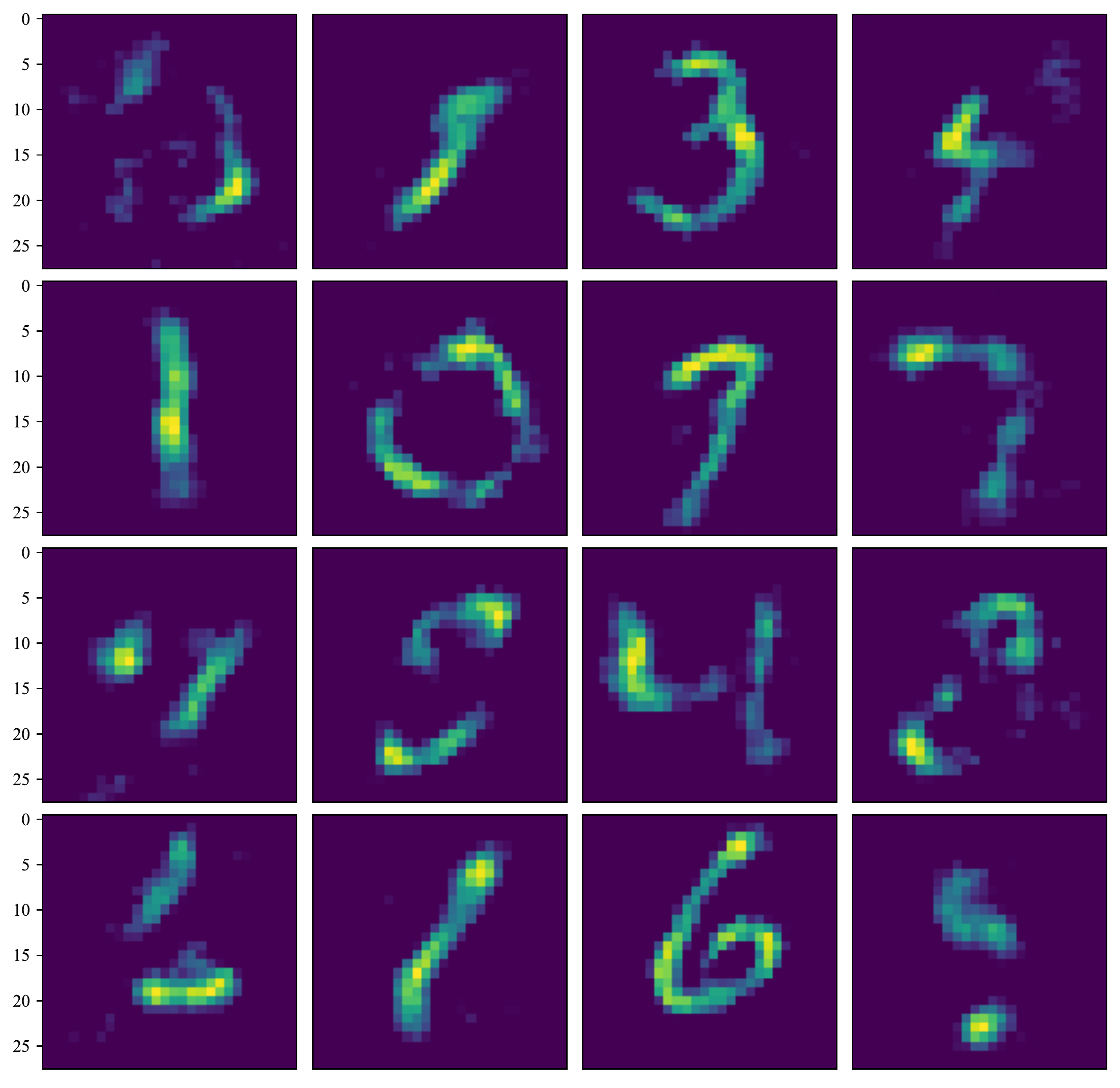}}
\end{subfigure}
\vspace{-0.4cm}
\caption{Wasserstein dictionary learning on MNIST; $r=9$ (left) and $r=16$ (right)}
\label{fig:mnist}
\end{figure}

In Figure \ref{fig:BCD}, we provide a numerical simulation of Algorithm \ref{algorithm:WCPDL_main} for Wasserstein CP-dictionary learning and verify our theoretical convergence results in Theorems \ref{thm:BCD} and \ref{thm:BCD_dWDL}. We observe faster convergence with the presence of proximal regularization with a suitable regularization coefficient. 

\begin{figure}[ht]
		\centering
		\includegraphics[width=1\linewidth]{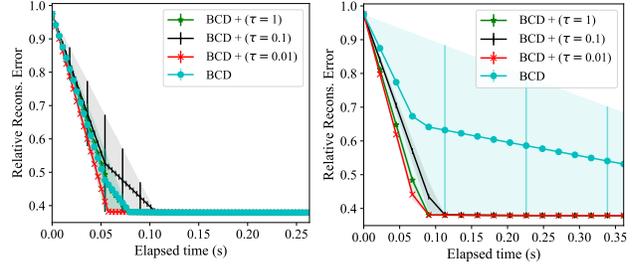} 
        \vspace{-0.7cm}
		\caption{Plot of relative reconstruction error vs. time for Wasserstein CP-dictionary learning using Algorithm \ref{algorithm:WCPDL_main} with various choices of proximal regularization coefficient $\tau\in \{0,0.1,0.01,1\}$. The tensor on the left and right has sizes $(100,100,500)$ and $(100,100,1000)$, respectively. Data tensors are generated by taking the outer product of randomly generated factor matrices of 10 columns plus i.i.d. noise of \textup{Uniform}$(0,10)$. }
		\label{fig:BCD}
	\end{figure}
\section{Conclusion}

We provide a theoretical analysis of the block coordinate descent methods with proximal regularization. The global convergence to the stationary points and the worst-case bound are obtained. We provide Wasserstein CP-dictionary learning as an application of our method.

\section*{Acknowledgements}

DK was supported by the 2023 Research Fund of the University of Seoul. HL was partially supported by the National Science Foundation through grants DMS-2206296 and DMS-2010035.

\bibliography{main_icml}
\bibliographystyle{icml2023}

\newpage
\appendix
\onecolumn

\section{Block Coordinate Descent with proximal regularization}

\subsection{Proof of Theorem  \ref{thm:BCD}}
    Throughout this section, we let $(\param_{n})_{n\ge 1}$  denote an inexact output of Algorithm \eqref{eq:BCD_factor_update_proximal} and write $\param_{n}=[\theta_{n}^{(1)},\dots,\theta_{n}^{(m)}]$ for each $n\ge 1$. For each $n\ge 1$ and $i=1,\dots,m$, denote 
     \begin{align}
            f_{n}^{(i)}: \theta \mapsto f(\theta_{n}^{(1)},\dots,\theta_{n}^{(i-1)},\theta,\theta_{n-1}^{(i+1)},\dots,\theta_{n-1}^{(m)}),
    \end{align}
    which is $L$-smooth under Assumption~\ref{assumption:A1}. By Lemma \ref{lem:L_smooth_weak_convex}, it is also $L$-weakly convex. From this, it is easy to see that  $g_{n}^{(i)}(\theta)=f_{n}^{(i)}(\theta)+\frac{\tau_{n}^{(i)}}{2}\lVert \theta-\theta_{n-1}^{(i)} \rVert^{2}$ is $(\tau_{n}^{(i)}-L^{(i)})$-strongly convex. Also, denote 
    \begin{align}
        &\tau_{n}^{-}:=\min_{i=1,\dots,m} \tau_{n}^{(i)},\quad  \tau_{n}:=\max_{i=1,\dots,m} \tau_{n}^{(i)} \,\, \textup{for all $n\ge 1$},  \quad L:=\max_{i=1,\dots,m}L^{(i)}. 
    \end{align}
    We will use the notations above as well as this observation throughout this section.

	\begin{prop}[Forward monotonicity]\label{prop:forward_monotonicity_proximal}
		Suppose Assumptions~\ref{assumption:A1}-\ref{assumption:A3}. Then the following hold: 
		\begin{description}[itemsep=0.1cm]
			\item[(i)] $f(\param_{n-1}) - f(\param_{n}) \ge   \frac{\tau_{n}^{-}}{2} \lVert \param_{n-1} - \param_{n} \rVert^{2} - m\Delta_{n}$;
			\vspace{0.1cm}
			\item[(ii)] $\sum_{n=1}^{\infty} \tau_{n}^{-} \rVert \param_{n}-\param_{n-1} \rVert^{2}<\sup_{\param\in \Param} f(\param) + m\sum_{n=1}^{\infty} \Delta_{n} <\infty$.
		\end{description}
	\end{prop}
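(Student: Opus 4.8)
The plan is to establish (i) as a one-step descent inequality and then obtain (ii) by summing (i) and telescoping. Everything reduces to bookkeeping with the per-block partial updates $\param_n^{(0)}=\param_{n-1},\param_n^{(1)},\dots,\param_n^{(m)}=\param_n$ from \eqref{eq:def_marginal_ft}.

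For (i), first observe that $f_n^{(i)}(\theta_{n-1}^{(i)})=f(\param_n^{(i-1)})$ and $f_n^{(i)}(\theta_n^{(i)})=f(\param_n^{(i)})$, so the total one-step decrease splits as a telescoping sum of per-block decreases,
\[
f(\param_{n-1})-f(\param_n)=\sum_{i=1}^{m}\Bigl(f_n^{(i)}(\theta_{n-1}^{(i)})-f_n^{(i)}(\theta_n^{(i)})\Bigr).
\]
It therefore suffices to lower bound each summand by $\tfrac{\tau_n^{-}}{2}\lVert\theta_n^{(i)}-\theta_{n-1}^{(i)}\rVert^2-\Delta_n$. To do this, rewrite $f_n^{(i)}$ via the regularized subproblem objective, $f_n^{(i)}(\theta)=g_n^{(i)}(\theta)-\tfrac{\tau_n^{(i)}}{2}\lVert\theta-\theta_{n-1}^{(i)}\rVert^2$. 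Evaluated at $\theta=\theta_{n-1}^{(i)}$ the proximal term vanishes, so $f_n^{(i)}(\theta_{n-1}^{(i)})=g_n^{(i)}(\theta_{n-1}^{(i)})\ge\inf_{\Theta^{(i)}}g_n^{(i)}$ (the infimum being finite since $f\ge 0$ makes $g_n^{(i)}$ bounded below, in fact coercive, which also ensures the subproblem is solvable). On the other hand, the definition \eqref{eq:def_sub_optimality_gap} of the optimality gap gives $g_n^{(i)}(\theta_n^{(i)})\le\inf_{\Theta^{(i)}}g_n^{(i)}+\Delta_n$, hence $f_n^{(i)}(\theta_n^{(i)})=g_n^{(i)}(\theta_n^{(i)})-\tfrac{\tau_n^{(i)}}{2}\lVert\theta_n^{(i)}-\theta_{n-1}^{(i)}\rVert^2\le\inf_{\Theta^{(i)}}g_n^{(i)}+\Delta_n-\tfrac{\tau_n^{(i)}}{2}\lVert\theta_n^{(i)}-\theta_{n-1}^{(i)}\rVert^2$. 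Subtracting the two displays gives the per-block bound with constant $\tau_n^{(i)}/2\ge\tau_n^{-}/2$; summing over $i$ and using $\sum_i\lVert\theta_n^{(i)}-\theta_{n-1}^{(i)}\rVert^2=\lVert\param_n-\param_{n-1}\rVert^2$ yields (i). (One could sharpen the per-block constant to $\tau_n^{(i)}-L^{(i)}/2$ by also invoking the $(\tau_n^{(i)}-L^{(i)})$-strong convexity of $g_n^{(i)}$ together with the first-order optimality of its minimizer, but the weaker estimate already suffices.)

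For (ii), sum (i) over $n=1,\dots,N$: the left-hand differences telescope to $f(\param_0)-f(\param_N)\le f(\param_0)<\infty$ (here $f$ takes values in $[0,\infty)$, so $f(\param_N)\ge 0$ and $f(\param_0)$ is finite), and rearranging gives $\sum_{n=1}^{N}\tfrac{\tau_n^{-}}{2}\lVert\param_n-\param_{n-1}\rVert^2\le f(\param_0)+m\sum_{n=1}^{N}\Delta_n$. Letting $N\to\infty$ and invoking Assumption~\ref{assumption:A3} ($\sum_n\Delta_n<\infty$) gives the stated inequality and its finiteness (with $f(\param_0)$, which is $\le\sup_{\param}f(\param)$, playing the role of the leading term).

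I do not expect a genuine obstacle here: the argument is a careful descent-lemma computation. The only points needing care are (a) handling the inexact update uniformly through the optimality gap $\Delta_n$ rather than assuming exact minimization, (b) aligning the per-block telescoping with the $\param_n^{(i)}$ indexing, and (c) noting that solvability of the subproblems and finiteness of $\inf_{\Theta^{(i)}}g_n^{(i)}$ come from coercivity of $g_n^{(i)}$ (i.e.\ from $f\ge0$ plus the proximal term), not from Assumption~\ref{assumption:A2}.
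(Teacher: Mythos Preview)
Your proof is correct and follows essentially the same route as the paper: telescope $f(\param_{n-1})-f(\param_n)$ into per-block differences $f_n^{(i)}(\theta_{n-1}^{(i)})-f_n^{(i)}(\theta_n^{(i)})$, rewrite each via $g_n^{(i)}$ (the proximal term vanishes at $\theta_{n-1}^{(i)}$ and equals $\tfrac{\tau_n^{(i)}}{2}\lVert\theta_n^{(i)}-\theta_{n-1}^{(i)}\rVert^2$ at $\theta_n^{(i)}$), bound $g_n^{(i)}(\theta_{n-1}^{(i)})-g_n^{(i)}(\theta_n^{(i)})\ge -\Delta_n$ from the definition of the optimality gap, and then sum (i) over $n$ for (ii). Your write-up is in fact a bit cleaner than the paper's, which introduces the exact minimizer $\theta_n^{(i\star)}$ but does not actually use strong convexity in this step.
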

	
	\begin{proof}
	     Fix $i\in \{1,\dots,m\}$. Let $\theta_{n}^{(i\star)}$ be the exact minimizer of the $(\tau_{n}^{(i)}-L^{(i)})$-strongly convex function $g^{(i)}_{n}(\theta)$ over the convex set $\Theta^{(i)}$. Then $g^{(i)}_{n}(\theta^{(i)}_{n}) \le f^{(i)}_{n}(\theta^{(i)}_{n-1})=g^{(i)}_{n}(\theta^{(i)}_{n-1})$, for $n\ge 1$. Hence we deduce 
		\begin{align}
			f^{(i)}_{n}(\theta^{(i)}_{n-1}) - f^{(i)}_{n}(\theta^{(i)}_{n}) &=   g^{(i)}_{n}(\theta^{(i)}_{n-1}) -  g^{(i)}_{n}(\theta^{(i)}_{n})   +g^{(i)}_{n}(\theta^{(i)}_{n}) -  f^{(i)}_{n}(\theta^{(i)}_{n}) \ge  - \Delta_{n} +  \frac{\tau_{n}^{(i)}}{2} \lVert \theta^{(i)}_{n} - \theta^{(i)}_{n-1}  \rVert^{2}.
		\end{align}
		It follows that 
		\begin{align}
			&f(\param_{n-1}) - f(\param_{n}) \\
			&\qquad = \sum_{i=1}^{n} f([\theta_{n}^{(1)},\dots, \theta_{n}^{(i-1)},\theta_{n-1}^{(i)},\theta_{n-1}^{(i+1)}, \dots, \theta_{n-1}^{(m)}]) - f([\theta_{n}^{(1)},\dots, \theta_{n}^{(i-1)},\theta_{n}^{(i)},\theta_{n-1}^{(i+1)}, \dots, \theta_{n-1}^{(m)}]) \\ 
			&  \qquad = \sum_{i=1}^{n} f_{n}^{(i)}(\theta^{(i)}_{n-1}) -  f_{n}^{(i)}(\theta^{(i)}_{n}) \\
			&\qquad \ge \sum_{i=1}^{m}\left(  \frac{\tau_{n}^{(i)}}{2} \lVert \theta^{(i)}_{n} - \theta^{(i)}_{n-1} \rVert^{2}  -\Delta_{n} \right) = \frac{\tau_{n}^{-}}{2} \lVert \param_{n-1} - \param_{n} \rVert^{2} - m\Delta_{n}.
			\quad  \label{eq:proximal_monotonicity}
		\end{align}
		This shows \textbf{(i)}. 

		Next, to show \textbf{(ii)}, adding up the above inequality, 
		\begin{align}
			\sum_{k=1}^{n} \frac{\tau_{k}^{-}}{2} \lVert \param_{k-1} - \param_{k}\rVert^{2}& \le \left( \sum_{k=1}^{n} f(\param_{k-1}) - f(\param_{k}) \right) + m\sum_{n=1}^{\infty} \Delta_{n} = f(\param_{0}) + m\sum_{n=1}^{\infty} \Delta_{n}    <\infty,
		\end{align}
		where we have used the fact that $\sum_{n=1}^{\infty} \Delta_{n}<\infty$ due to Assumption~\ref{assumption:A3}. 
	\end{proof}

	\begin{prop}[Finite first-order variation]\label{prop:finite_range_short_points_proximal}
		Suppose Assumptions~\ref{assumption:A1}-\ref{assumption:A2}. Also assume $\tau_{n}^{-}\ge 1$ for all $n\ge 1$. Suppose that $\sum_{n=1}^{\infty} \Delta_{n}<\infty$. Then 
		\begin{align*}
			\sum_{n=1}^{\infty}  \left| \left\langle \nabla f(\param_{n+1}),\,  \param_{n} - \param_{n+1}  \right\rangle \right|  < \frac{L+2}{2}  \sup_{\param\in \Param} f(\param)  + 3m\sum_{n=1}^{\infty} \Delta_{n} <\infty.
		\end{align*}
	\end{prop}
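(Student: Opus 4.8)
The plan is to estimate the $n$-th summand one block coordinate at a time, sum over the $m$ blocks to produce a telescoping increment of $f$, and then sum over $n$ using the forward monotonicity already established in Proposition~\ref{prop:forward_monotonicity_proximal}. Fix $n\ge 1$ and $i\in\{1,\dots,m\}$, and abbreviate $h_i:=\theta_n^{(i)}-\theta_{n+1}^{(i)}$ and $D_i:=f_{n+1}^{(i)}(\theta_n^{(i)})-f_{n+1}^{(i)}(\theta_{n+1}^{(i)})$, where $f_{n+1}^{(i)}$ is the block-$i$ marginal of $f$ from the preamble; recall it is $L$-smooth and, by Lemma~\ref{lem:L_smooth_weak_convex}, $L$-weakly convex. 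Applying the quadratic lower bound (from $L$-weak convexity) and the quadratic upper bound (the descent inequality from $L$-smoothness) to the pair $\theta_{n+1}^{(i)},\theta_n^{(i)}$ sandwiches $\langle\nabla f_{n+1}^{(i)}(\theta_{n+1}^{(i)}),h_i\rangle$ between $D_i\pm\tfrac L2\lVert h_i\rVert^2$, so that $\bigl|\langle\nabla f_{n+1}^{(i)}(\theta_{n+1}^{(i)}),h_i\rangle\bigr|\le |D_i|+\tfrac L2\lVert h_i\rVert^2$.

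Next I would remove the absolute value on $D_i$. The inexact update at step $n+1$ gives $g_{n+1}^{(i)}(\theta_{n+1}^{(i)})\le\inf_{\Theta^{(i)}}g_{n+1}^{(i)}+\Delta_{n+1}\le g_{n+1}^{(i)}(\theta_n^{(i)})+\Delta_{n+1}$, and since the proximal term of $g_{n+1}^{(i)}$ vanishes at $\theta_n^{(i)}$ and is nonnegative, this yields $D_i\ge\tfrac{\tau_{n+1}^{(i)}}2\lVert h_i\rVert^2-\Delta_{n+1}\ge-\Delta_{n+1}$, whence $|D_i|\le D_i+2\Delta_{n+1}$. Combining with the previous step, $\bigl|\langle\nabla f_{n+1}^{(i)}(\theta_{n+1}^{(i)}),h_i\rangle\bigr|\le D_i+2\Delta_{n+1}+\tfrac L2\lVert h_i\rVert^2$.

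Now I would pass from the marginal gradient to the true gradient and sum over $i$. The vector $\nabla f_{n+1}^{(i)}(\theta_{n+1}^{(i)})$ is the $i$-th block of $\nabla f$ evaluated at the partially updated point $(\theta_{n+1}^{(1)},\dots,\theta_{n+1}^{(i)},\theta_n^{(i+1)},\dots,\theta_n^{(m)})$, which differs from $\param_{n+1}$ only in blocks $i+1,\dots,m$; by the block-wise Lipschitz hypothesis (Assumption~\ref{assumption:A1}) and a telescoping over those blocks, $\lVert\nabla_i f(\param_{n+1})-\nabla f_{n+1}^{(i)}(\theta_{n+1}^{(i)})\rVert\le L\sum_{j>i}\lVert h_j\rVert$, so after Cauchy--Schwarz the total discrepancy $\sum_i\bigl|\langle\nabla_i f(\param_{n+1})-\nabla f_{n+1}^{(i)}(\theta_{n+1}^{(i)}),h_i\rangle\bigr|$ is controlled by $O(\lVert\param_n-\param_{n+1}\rVert^2)$. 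Using the block-telescoping identity $\sum_{i=1}^m D_i=f(\param_n)-f(\param_{n+1})$ (exactly as in Proposition~\ref{prop:forward_monotonicity_proximal}), $\sum_i\lVert h_i\rVert^2=\lVert\param_n-\param_{n+1}\rVert^2$, and $\langle\nabla f(\param_{n+1}),\param_n-\param_{n+1}\rangle=\sum_i\langle\nabla_i f(\param_{n+1}),h_i\rangle$, I obtain
\[
\bigl|\langle\nabla f(\param_{n+1}),\,\param_n-\param_{n+1}\rangle\bigr|\ \le\ \bigl(f(\param_n)-f(\param_{n+1})\bigr)+2m\Delta_{n+1}+C\lVert\param_n-\param_{n+1}\rVert^2
\]
for a constant $C$ of order $L$.

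Finally I would sum over $n\ge1$: the increments $f(\param_n)-f(\param_{n+1})$ telescope and, since $f\ge0$, sum to at most $f(\param_1)\le\sup_{\param}f(\param)$; $\sum_n\Delta_{n+1}\le\sum_n\Delta_n<\infty$ by Assumption~\ref{assumption:A3}; and Proposition~\ref{prop:forward_monotonicity_proximal}(ii) together with $\tau_n^-\ge1$ gives $\sum_n\lVert\param_n-\param_{n+1}\rVert^2\le\sup_{\param}f(\param)+m\sum_n\Delta_n<\infty$. Collecting the constants produces the asserted bound $\tfrac{L+2}{2}\sup_{\param}f(\param)+3m\sum_n\Delta_n$, and in particular finiteness. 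The step I expect to be the main obstacle is the passage from the marginal gradients $\nabla f_{n+1}^{(i)}(\theta_{n+1}^{(i)})$ — which are what the optimality/descent estimates naturally speak to — to the block-gradients $\nabla_i f(\param_{n+1})$ of the true gradient at $\param_{n+1}$: one must check that this substitution costs only a summable $O(\lVert\param_n-\param_{n+1}\rVert^2)$ term, and, more delicately, that every $\lVert\param_n-\param_{n+1}\rVert^2$-type contribution is kept tight enough (using $\tau_n^-\ge1$ in Proposition~\ref{prop:forward_monotonicity_proximal}(ii)) to land on the stated numerical constant.
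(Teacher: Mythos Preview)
Your argument is correct and reaches the same conclusion, but it takes a longer route than the paper. The paper applies Lemma~\ref{lem:surrogate_L_gradient} \emph{directly to the full function $f$} at the points $\param_{n}$ and $\param_{n+1}$ (block-wise Lipschitzness of $\nabla f$ from Assumption~\ref{assumption:A1} yields global Lipschitzness), obtaining in one stroke
\[
\bigl|\langle \nabla f(\param_{n+1}),\,\param_n-\param_{n+1}\rangle\bigr|\ \le\ \tfrac{L}{2}\lVert \param_n-\param_{n+1}\rVert^{2}+|f(\param_n)-f(\param_{n+1})|,
\]
and then uses Proposition~\ref{prop:forward_monotonicity_proximal}(i) to remove the absolute value on the $f$-increment, exactly as you do for the $D_i$. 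This bypasses entirely the step you flag as the ``main obstacle'': there is no need to work with the marginal gradients $\nabla f_{n+1}^{(i)}(\theta_{n+1}^{(i)})$ and then correct back to $\nabla_i f(\param_{n+1})$. Your block-by-block detour is valid, but the cross-block correction costs an additional term of order $mL\lVert \param_n-\param_{n+1}\rVert^{2}$, so your constant $C$ depends on $m$ and will not reproduce the stated coefficient $\tfrac{L+2}{2}$; the paper's direct route keeps the coefficient at $\tfrac{L}{2}$ before invoking Proposition~\ref{prop:forward_monotonicity_proximal}(ii).
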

	
	\vspace{-0.3cm}
	\begin{proof}
		According to Assumptions~\ref{assumption:A1} and \ref{assumption:A2}, it follows that $\nabla f$ over $\Param$ is Lipschitz with   Lipshitz constant $L$. Hence by Lemma \ref{lem:surrogate_L_gradient}, for all $t\ge 1$,
		\begin{align*}
			\left|f(\param_{n})-f(\param_{n+1}) -  \left\langle \nabla f(\param_{n+1}),\,  \param_{n} - \param_{n+1} \right\rangle \right| \le \frac{L}{2}\lVert \param_{n} - \param_{n+1} \Vert_{F}^{2}.
		\end{align*}
		Using Proposition \ref{prop:forward_monotonicity_proximal}, it follows that
		\begin{align}
		    |f(\param_{n-1})-f(\param_{n})| \le f(\param_{n-1})-f(\param_{n}) + 2m\Delta_{n}.
		\end{align}
		Hence this yields 
		\begin{align}\label{eq:1storder_growth_bd_pf_proximal}
			\left| \left\langle \nabla f(\param_{n+1}),\,  \param_{n} - \param_{n+1}  \right\rangle \right| &\le   \frac{L}{2}\lVert \param_{n} - \param_{n+1} \Vert_{F}^{2} + | f(\param_{n})- f(\param_{n+1}) | \\
			&\le \frac{L}{2}\lVert \param_{n} - \param_{n+1} \Vert_{F}^{2} +  f(\param_{n})- f(\param_{n+1})   + 2m\Delta_{n} 
		\end{align}
		for $n\ge 1$.  Also note that $\sum_{t=1}^{n} f(\param_{t})-f(\param_{t+1}) = f(\param_{1}) - f(\param_{n+1})\le f(\param_{1})$. Hence
			\begin{align*}
				\sum_{n=0}^{\infty} \left| \left\langle \nabla f(\param_{n+1}),\,  \param_{n} - \param_{n+1}  \right\rangle \right| &\le \frac{L}{2} \left( \sum_{n=0}^{\infty}  \lVert \param_{n} - \param_{n+1} \Vert_{F}^{2}\right)   + f(\param_{0}) + 2m\sum_{n=1}^{\infty} \Delta_{n} \\
				&\le \frac{L}{2}  \left( \sum_{n=0}^{\infty}  \tau_{n}^{-} \lVert \param_{n} - \param_{n+1} \Vert_{F}^{2}\right) + f(\param_{0}) + 2m\sum_{n=0}^{\infty} \Delta_{n}  \\
				&\le 2f(\param_{0}) + 3m\sum_{n=1}^{\infty} \Delta_{n}  <\infty,
			\end{align*}
		where we have used Proposition \ref{prop:forward_monotonicity_proximal} \textbf{(ii)}. 
	\end{proof}

    \begin{prop}[Boundedness of iterates]\label{prop:boundedness}
     Under Assumptions~\ref{assumption:A2} and \ref{assumption:A3},  the set $\{ \param_{n}\,:\, n\ge 1 \}$ is bounded.  
    \end{prop}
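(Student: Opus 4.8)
The plan is to show that the objective values $f(\param_{n})$ stay bounded along the iterates, and then invoke the compactness of sublevel sets from Assumption~\ref{assumption:A2}.

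First I would establish an approximate monotonicity: for every $n\ge 1$, $f(\param_{n}) \le f(\param_{n-1}) + m\Delta_{n}$. This is exactly Proposition~\ref{prop:forward_monotonicity_proximal}\textbf{(i)} after discarding the nonnegative term $\frac{\tau_{n}^{-}}{2}\lVert\param_{n-1}-\param_{n}\rVert^{2}$; it can also be read directly off the update rule \eqref{eq:BCD_factor_update_proximal} and the definition \eqref{eq:def_sub_optimality_gap} of $\Delta_{n}$, since for each block $i$ one has $f_{n}^{(i)}(\theta_{n}^{(i)})\le g_{n}^{(i)}(\theta_{n}^{(i)})\le \inf_{\theta\in\Theta^{(i)}}g_{n}^{(i)}(\theta)+\Delta_{n}\le g_{n}^{(i)}(\theta_{n-1}^{(i)})+\Delta_{n}=f_{n}^{(i)}(\theta_{n-1}^{(i)})+\Delta_{n}$ (using nonnegativity of the proximal term for the first inequality and its vanishing at $\theta_{n-1}^{(i)}$ for the last equality), and summing these inequalities over the cyclic sweep $i=1,\dots,m$ telescopes to the claimed bound, just as in the proof of Proposition~\ref{prop:forward_monotonicity_proximal}. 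Note that Assumption~\ref{assumption:A1} is not needed for this step.

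Next I would telescope in $n$ to obtain, for all $n\ge 1$,
\begin{align*}
	f(\param_{n}) \le f(\param_{0}) + m\sum_{k=1}^{n}\Delta_{k} \le f(\param_{0}) + m\sum_{k=1}^{\infty}\Delta_{k} =: a,
\end{align*}
with $a<\infty$ by Assumption~\ref{assumption:A3} (and $f(\param_{0})<\infty$ since $f$ is real-valued on $\Param$). Since $\param_{n}\in\Param$ for all $n$, each $\param_{n}$ lies in the sublevel set $\{\param\in\Param:\, f(\param)\le a\}$, which is compact by Assumption~\ref{assumption:A2}; hence it is bounded, and therefore the set $\{\param_{n}:\, n\ge 1\}$ is bounded, as claimed. (If one prefers a strict sublevel set, replace $a$ by $a+1$.)

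The argument is essentially immediate. The only slightly delicate point — and the main obstacle, such as it is — is recognizing that the approximate descent inequality requires only the definition of the optimality gap and nonnegativity of the proximal regularizer, so that neither smoothness of $f$ nor strong convexity of the subproblems plays any role in the boundedness claim, and that the bound $a$ must be chosen uniformly in $n$, which is why summability of $\Delta_{n}$ (Assumption~\ref{assumption:A3}) rather than merely $\Delta_{n}\to 0$ is invoked.
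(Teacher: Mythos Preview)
Your proof is correct and follows essentially the same route as the paper: derive the approximate descent $f(\param_{n})\le f(\param_{n-1})+m\Delta_{n}$ (Proposition~\ref{prop:forward_monotonicity_proximal}\textbf{(i)} with the quadratic term dropped), telescope to bound $\sup_{n}f(\param_{n})$ via Assumption~\ref{assumption:A3}, and then invoke the compact sublevel sets of Assumption~\ref{assumption:A2}. Your added remark that Assumption~\ref{assumption:A1} plays no role in this step is a valid observation that the paper does not spell out.
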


    \begin{proof}
        Let $T:=m\sum_{k=1}^{\infty} \Delta_{k}$, which is finite by Assumption \ref{assumption:A3}. Recall that by Proposition \ref{prop:forward_monotonicity_proximal}, we have 
        \begin{align}
            \sup_{n\ge 1} f(\param_{n}) \le f(\param_{1}) +T<\infty.
        \end{align}
        Then we can conclude by using Assumption \ref{assumption:A2}. 
    \end{proof}
 
	\begin{prop}[Asymptotic first-order optimality]\label{prop:first_order_optimality_proximal}
		Suppose Assumptions~\ref{assumption:A1}-\ref{assumption:A3}. Fix a sequence $(b_{n})_{n\ge 1}$ with $b_{n}>0$ for $n\ge 1$. Then there exists constants $c_{1},c_{2}>0$ independent of $\param_{0}\in \param$ such that for all $n\ge 1$,
		\begin{align}
			\left\langle \nabla f(\param_{n+1}),\,  \param_{n+1} - \param_{n} \right\rangle &\le  b_{n+1} \inf_{\param \in \param} \left\langle \nabla f(\param_{n}),\, \frac{ \param - \param_{n} }{\lVert \param - \param_{n}\rVert}\right\rangle  + c_{0} b_{n+1} \lVert \param_{n}-\param_{n+1} \rVert \\
            &\qquad + c_{1} \lVert \param_{n+1}-\param_{n} \rVert^{2} + c_{2}(L+\tau_{n+1}) b_{n+1}^{2} + \Delta_{n+1}.
		\end{align}
	\end{prop}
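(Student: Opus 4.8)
The plan is to exploit the first-order optimality condition for the (inexact) minimizer $\theta_{n+1}^{(i)}$ of the strongly convex surrogate $g_{n+1}^{(i)}(\theta)=f_{n+1}^{(i)}(\theta)+\frac{\tau_{n+1}^{(i)}}{2}\lVert\theta-\theta_n^{(i)}\rVert^2$ over the convex set $\Theta^{(i)}$, block by block, and then sum over $i=1,\dots,m$ to reassemble a statement about $\nabla f(\param_{n+1})$. Since the iterates are bounded (Proposition~\ref{prop:boundedness}) and $\nabla f$ is $L$-Lipschitz on a neighborhood of the (compact) closure of the iterates, all the Lipschitz estimates below hold with uniform constants independent of $\param_0$. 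First I would fix $n$ and $i$, and pick an arbitrary $\param=[\theta^{(1)},\dots,\theta^{(m)}]\in\Param$. By convexity of $\Theta^{(i)}$, the point $\theta_n^{(i)}+a(\theta^{(i)}-\theta_n^{(i)})$ lies in $\Theta^{(i)}$ for $a\in[0,1]$; choosing $a = b_{n+1}/\lVert\param-\param_n\rVert$ when this is $\le 1$ and using the $\Delta_{n+1}$-inexactness bound on $g_{n+1}^{(i)}$, together with the descent lemma (Lemma~\ref{lem:surrogate_L_gradient}) to pass between $f_{n+1}^{(i)}$ values and the inner product $\langle\nabla f_{n+1}^{(i)}(\theta_n^{(i)}),\cdot\rangle$, gives an inequality of the form
\begin{align*}
\langle \nabla f_{n+1}^{(i)}(\theta_n^{(i)}),\, \theta_{n+1}^{(i)}-\theta_n^{(i)}\rangle
&\le b_{n+1}\Big\langle \nabla f_{n+1}^{(i)}(\theta_n^{(i)}),\, \tfrac{\theta^{(i)}-\theta_n^{(i)}}{\lVert\param-\param_n\rVert}\Big\rangle \\
&\quad + C(L+\tau_{n+1})\,b_{n+1}^2 + C\,\lVert\theta_{n+1}^{(i)}-\theta_n^{(i)}\rVert^2 + \Delta_{n+1}.
\end{align*}

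Next I would sum these $m$ inequalities. The left side becomes $\langle[\nabla f_{n+1}^{(1)}(\theta_n^{(1)}),\dots,\nabla f_{n+1}^{(m)}(\theta_n^{(m)})],\,\param_{n+1}-\param_n\rangle$, which differs from $\langle\nabla f(\param_{n+1}),\param_{n+1}-\param_n\rangle$ by a quantity controlled by the $L$-Lipschitzness of $\nabla f$ along the block-by-block path from $\param_n$ to $\param_{n+1}$ — this contributes the $c_0 b_{n+1}\lVert\param_n-\param_{n+1}\rVert$ and an extra $\lVert\param_{n+1}-\param_n\rVert^2$ term after an application of Cauchy–Schwarz/Young. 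On the right side the first term, by the same Lipschitz bound replacing $\nabla f_{n+1}^{(i)}(\theta_n^{(i)})$ by $\nabla f(\param_n)$ and the definition of the infimum (the chosen $\param$ can be taken to nearly attain $\inf_{\param\in\Param}\langle\nabla f(\param_n),(\param-\param_n)/\lVert\param-\param_n\rVert\rangle$), becomes $b_{n+1}\inf_{\param\in\Param}\langle\nabla f(\param_n),(\param-\param_n)/\lVert\param-\param_n\rVert\rangle$ up to a further $b_{n+1}\lVert\param_{n+1}-\param_n\rVert$-type correction absorbed into $c_0$; the quadratic and $\Delta_{n+1}$ terms simply collect into the stated constants.

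The main obstacle is handling the radial scaling $a=b_{n+1}/\lVert\param-\param_n\rVert$ cleanly for all $\param\in\Param$, including the degenerate cases. When $\lVert\param-\param_n\rVert\ge b_{n+1}$ the convex-combination point stays in $\Theta^{(i)}$ and the right-hand side is invariant under replacing $\param$ by any point on the ray from $\param_n$ through $\param$, so the inequality extends by homogeneity; when $\param_n$ is near the boundary of $\Param$ and $\lVert\param-\param_n\rVert<b_{n+1}$ for some directions, one must either invoke a geometric lemma on the boundary of the convex set $\Param$ (a Taylor expansion of the boundary as a convex graph, as in the struck-out Proposition in the excerpt) to compare the available direction with the desired one at cost $O(b_{n+1}^2)$, or — more simply — note that since $b_{n+1}>0$ is a free parameter we may as well take the infimum over $\param$ with $\lVert\param-\param_n\rVert=b_{n+1}$ and the extension to all of $\Param$ follows from the radial invariance just mentioned, so that the quoted constants $c_1,c_2$ only need to absorb the first-order-variation and descent-lemma error terms. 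I would carry out the block-wise estimate first, then the summation and the $\nabla f(\param_{n+1})$-vs-$\nabla f(\param_n)$ reconciliation, and finally the geometric extension to arbitrary $\param\in\Param$.
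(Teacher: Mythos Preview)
Your overall architecture---block-wise optimality of $\theta_{n+1}^{(i)}$ via the $\Delta_{n+1}$-inexact minimization of $g_{n+1}^{(i)}$, descent lemma to convert to inner products with $\nabla f_{n+1}^{(i)}(\theta_n^{(i)})$, summation over $i$, and Lipschitz reconciliation to pass to $\nabla f(\param_{n+1})$ and $\nabla f(\param_n)$---matches the paper's proof exactly. The difference, and the one genuine gap, is in your handling of the radial parameter $a$ and the boundary case.

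Your ``simpler'' Option 2 does not close: proving the inequality only for $\param$ with $\lVert\param-\param_n\rVert=b_{n+1}$ (or $\ge b_{n+1}$) and then appealing to radial invariance yields the bound with $\inf$ taken over that restricted set, which is in general \emph{larger} than $\inf_{\param\in\Param}\langle\nabla f(\param_n),(\param-\param_n)/\lVert\param-\param_n\rVert\rangle$; since $b_{n+1}>0$ multiplies this (possibly negative) infimum, the resulting right-hand side is too large, not too small, so the desired inequality does not follow. Directions in which the boundary of $\Param$ lies within distance $b_{n+1}$ of $\param_n$ are simply not covered. Your Option 1 (the convex-boundary Taylor expansion) is precisely the route the paper's earlier, struck-out version took; it can be made to work but requires an auxiliary geometric proposition and a smallness assumption on $b_{n+1}$.

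The paper instead restricts first to $\lVert\param-\param_n\rVert\le b_{n+1}$, derives the block-summed inequality with the free parameter $a\in[0,1]$ still present, and then observes that the right-hand side is a quadratic in $a$ whose only possibly negative term is the linear one $a\langle\nabla f(\param_n),\param-\param_n\rangle$; since $\lVert\nabla f(\param_n)\rVert$ is uniformly bounded (Proposition~\ref{prop:boundedness}), one can enlarge $c_2$ so that the quadratic is non-increasing in $a$, hence the inequality holds for \emph{all} $a\ge 0$. Choosing $a=b_{n+1}/\lVert\param-\param_n\rVert$ (which may exceed $1$) then gives the claim for $\lVert\param-\param_n\rVert\le b_{n+1}$, and the extension to $\lVert\param-\param_n\rVert> b_{n+1}$ follows by the radial invariance you already noted. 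This trick---absorbing the constraint $a\le 1$ into the constant $c_2$ via boundedness of the gradient---is the missing ingredient in your proposal.
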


	\begin{proof}
		Fix arbitrary $\param=[\theta^{(1)},\dots,\theta^{(m)}]\in \Param$ such that $\lVert \param - \param_{n} \rVert \le b_{n+1}$. By  convexity of  $\Theta^{(i)}$, $\theta_{n}^{(i)}+a(\theta^{(i)}-\theta_{n}^{(i)})\in \Theta^{(i)}$ for all $a\in [0,1]$.  Let $\theta^{(i\star)}_{n+1}$ denote the exact minimizer of $g_{n+1}^{(i)}$ over $\Theta^{(i)}$. Then we have 
		\begin{align}
			f_{n+1}^{(i)}( \theta^{(i)}_{n+1} )   +  \frac{\tau_{n+1}^{(i)}}{2} \lVert \theta^{(i)}_{n+1} - \theta^{(i)}_{n}  \rVert^{2}  -\Delta_{n+1} &\le 
			f_{n+1}^{(i)}( \theta^{(i\star)}_{n+1} )   +  \tau_{n+1}^{(i)} \lVert \theta^{(i\star)}_{n+1} - \theta^{(i)}_{n}  \rVert^{2}  \\
			& \le f_{n+1}^{(i)}\left( \theta_{n}^{(i)}+ a ( \theta^{(i)} - \theta^{(i)}_{n} )  \right)  +  \frac{\tau_{n+1}^{(i)} a^{2} }{2} \lVert \theta^{(i)}-\theta_{n}^{(i)} \rVert^{2}.
		\end{align}
	Recall that each $f_{n+1}^{(i)}$ is $L^{(i)}$-smooth by Assumption~\ref{assumption:A1}. Hence by subtracting $f_{n+1}^{(i)}(\theta_{n}^{(i)})$ from both sides and using  Lemma \ref{lem:surrogate_L_gradient}, we get 
		\begin{align}
			\left\langle \nabla f_{n+1}^{(i)}(\theta_{n}^{(i)}),\,  \theta_{n+1}^{(i)}-\theta_{n}^{(i)}  \right\rangle &\le a \left\langle \nabla f_{n+1}^{(i)}(\theta_{n}^{(i)}),\,  \theta^{(i)}-\theta_{n}^{(i)}\right\rangle   \\
			&\qquad + \frac{L^{(i)}}{2}\lVert \theta_{n+1}^{(i)} - \theta_{n}^{(i)} \rVert^{2} + \frac{L^{(i)}}{2}\lVert \theta^{(i)} - \theta_{n}^{(i)} \rVert^{2} + \frac{\tau_{n+1}^{(i)} a^{2}}{2}\lVert \theta^{(i)}-\theta_{n}^{(i)} \rVert^{2} +  \Delta_{n+1}.
		\end{align}
		Adding up these inequalities for $i=1,\dots,m$ , 
		\begin{align}
			\left\langle \left[ \nabla f_{n+1}^{(1)}(\theta_{n}^{(1)}),\dots,\nabla f_{n+1}^{(m)}(\theta_{n}^{(m)}) \right],\, \param_{n+1} - \param_{n} \right\rangle &\le a  \left\langle \left[ \nabla f_{n+1}^{(1)}(\theta_{n}^{(1)}),\dots,\nabla f_{n+1}^{(m)}(\theta_{n}^{(m)}) \right],\,  \param - \param_{n} \right\rangle \\
			&\hspace{-1cm} + \frac{L}{2}\lVert \param_{n+1} - \param_{n}  \rVert^{2} + \frac{(L+\tau_{n+1}a^{2})}{2}\lVert \param - \param_{n}\rVert^{2}  +  \Delta_{n+1}.
		\end{align}

		\noindent Since for each $i=1,\dots,m$ $\nabla f$ is $L^{(i)}$-Lipschits in the $i$th block coordinate,  we have 
        \begin{align}
    \lVert   \nabla_{i} f(\theta_{n}^{(1)},\dots, \theta_{n}^{(m)}) - \nabla f_{n+1}^{(i)}(\theta_{n}^{(i)}) \rVert \le L^{(i)} \lVert \param_{n}-\param_{n+1} \rVert.
        \end{align}
    Hence there exists constants $c_{1},c_{2}>0$ independent of $\param_{0}\in \Param$, such that 
		\begin{align}\label{eq:optimality1_proximal}
			\hspace{-0.3cm}	\left\langle \nabla f(\param_{n+1}),\,  \param_{n+1} - \param_{n} \right\rangle &\le a  \left\langle \nabla f(\param_{n}),\, \param-\param_{n} \right\rangle  + a m L \lVert \param_{n}-\param_{n+1} \rVert\cdot  \lVert \param -\param_{n} \rVert \\
			&\qquad + c_{1} \lVert \param_{n+1}-\param_{n} \rVert^{2} + c_{2}(L+\tau_{n+1} a^{2})\lVert \param - \param_{n}\rVert^{2} + \Delta_{n+1}.
		\end{align}
		 The above inequality holds for all $a\in [0,1]$. 
   
    Viewing the right hand side as a quadratic function in $a$, the only possibly negative term is the linear term $a  \left\langle \nabla f(\param_{n}),\, \param-\param_{n} \right\rangle $, whose absolute value is bounded above by $a \lVert \nabla f(\param_{n}) \rVert \lVert \param-\param_{n} \rVert $. By Proposition \ref{prop:boundedness} and Assumption~\ref{assumption:A3}, $\lVert \nabla f (\param_{n}) \rVert$ is uniformly bounded, so this is bounded above by $a c_{3} \lVert \param-\param_{n} \rVert$ for some constant $c_{3}>0$. Hence we may choose $c_{2}>0$ large enough so that the right hand side above is non-increasing in $a$. Thus the inequality above holds for all $a\ge 0$. In particular, we can choose $a=b_{n+1}/\lVert \param-\param_{n} \rVert$. This and using $\lVert \param-\param_{n} \rVert \le b_{n+1}$ yield
		\begin{align}\label{eq:optimality1_proximal2}
			\hspace{-0.3cm}	\left\langle \nabla f(\param_{n+1}),\,  \param_{n+1} - \param_{n} \right\rangle &\le b_{n+1}  \left\langle \nabla f(\param_{n}),\, \frac{\param-\param_{n}}{\lVert \param-\param_{n} \rVert} \right\rangle  + c_{0} \lVert \param_{n}-\param_{n+1} \rVert  b_{n+1} \\
			&\qquad + c_{1} \lVert \param_{n+1}-\param_{n} \rVert^{2} + c_{2}(L+\tau_{n+1}) b_{n+1}^{2} + \Delta_{n+1},
		\end{align}
		where we wrote $c_{0}:=mL$. 
	
  We have shown that the above holds  for all $\param\in \Param$ such that $\lVert \param - \param_{n} \rVert \le  b_{n+1}$. It remains to argue  that \eqref{eq:optimality1_proximal2} also holds for all $\param\in\Param$ with $\lVert \param-\param_{n} \rVert\ge b_{n+1}$. Indeed, for such $\param$, let $\param'$ be the point in the secant line between $\param$ and $\param_{n}$ such that $\lVert \param'-\param_{n} \rVert \le b_{n+1}$. Then $\param'\in \Param$ and \eqref{eq:optimality1_proximal2} holds for $\param$ replaced with $\param'$. However, the right hand side is unchanged when replacing $\param$ with any point on the line passing through $\param$ and $\param_{n}$. Thus \eqref{eq:optimality1_proximal2} holds for all $\param\in \Param$. This shows the assertion. 
	\end{proof}


	\begin{prop}[Optimality gap for iterates]\label{prop:iterate_opt_gap}
		For each $n\ge 1$ and $i\in \{1,\dots,m\}$, let $\theta_{n}^{(i\star)}$ be the exact minimizer of the $(\tau_{n}^{(i)}-L^{(i)})$-strongly convex function $\theta \mapsto g^{(i)}_{n}(\theta)$ in \eqref{eq:BCD_factor_update_proximal} over the convex set $\Theta^{(i)}$. Then 
		\begin{align}\label{eq:iterate_optimality_gap_prox}
			\frac{\tau_{n}^{(i)}-L^{(i)}}{2} \lVert \theta^{(i\star)}_{n} - \theta^{(i)}_{n} \rVert^{2} \le  \Delta_{n}.
		\end{align}
	\end{prop}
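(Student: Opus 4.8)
The plan is to deduce \eqref{eq:iterate_optimality_gap_prox} directly from the quadratic (second-order) growth property of strongly convex functions together with the definition of the optimality gap $\Delta_n$ in \eqref{eq:def_sub_optimality_gap}. As already observed in the setup of this section, under Assumption~\ref{assumption:A1} each $f_n^{(i)}$ is $L^{(i)}$-smooth, hence $L^{(i)}$-weakly convex, so $g_n^{(i)}(\theta)=f_n^{(i)}(\theta)+\frac{\tau_n^{(i)}}{2}\lVert\theta-\theta_{n-1}^{(i)}\rVert^{2}$ is $(\tau_n^{(i)}-L^{(i)})$-strongly convex on the convex set $\Theta^{(i)}$ (using $\tau_n^{(i)}>L^{(i)}$), and it is differentiable since $f$ is $C^{1}$ and the proximal term is smooth.

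First I would record the second-order growth bound: if $g$ is differentiable and $\mu$-strongly convex on a convex set $C$ with minimizer $x^{\star}=\argmin_{x\in C}g(x)$, then $g(y)\ge g(x^{\star})+\frac{\mu}{2}\lVert y-x^{\star}\rVert^{2}$ for every $y\in C$. This follows from the gradient form of strong convexity, $g(y)\ge g(x^{\star})+\langle\nabla g(x^{\star}),\,y-x^{\star}\rangle+\frac{\mu}{2}\lVert y-x^{\star}\rVert^{2}$, combined with the first-order optimality condition $\langle\nabla g(x^{\star}),\,y-x^{\star}\rangle\ge 0$ for all $y\in C$, which holds because $x^{\star}$ minimizes $g$ over the convex set $C$.

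Then I would apply this bound with $g=g_n^{(i)}$, $\mu=\tau_n^{(i)}-L^{(i)}$, $C=\Theta^{(i)}$, $x^{\star}=\theta_n^{(i\star)}$, and $y=\theta_n^{(i)}\in\Theta^{(i)}$, giving
\[
g_n^{(i)}(\theta_n^{(i)})-g_n^{(i)}(\theta_n^{(i\star)})\ \ge\ \frac{\tau_n^{(i)}-L^{(i)}}{2}\,\lVert\theta_n^{(i)}-\theta_n^{(i\star)}\rVert^{2}.
\]
Finally, since $\theta_n^{(i\star)}$ is the exact minimizer, $g_n^{(i)}(\theta_n^{(i\star)})=\inf_{\theta\in\Theta^{(i)}}g_n^{(i)}(\theta)$, so the left-hand side equals $g_n^{(i)}(\theta_n^{(i)})-\inf_{\theta\in\Theta^{(i)}}g_n^{(i)}(\theta)$, which is at most $\Delta_n$ by the definition \eqref{eq:def_sub_optimality_gap}. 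Chaining the two inequalities yields \eqref{eq:iterate_optimality_gap_prox}. There is essentially no obstacle here; the only points requiring minor care are invoking the first-order optimality condition correctly for a constrained minimizer over the convex set $\Theta^{(i)}$ and noting that differentiability of $g_n^{(i)}$ legitimizes the gradient form of strong convexity used above.
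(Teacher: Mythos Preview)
Your proposal is correct and follows essentially the same approach as the paper: both apply the second-order growth property of the $(\tau_n^{(i)}-L^{(i)})$-strongly convex function $g_n^{(i)}$ at its minimizer $\theta_n^{(i\star)}$ evaluated at $\theta_n^{(i)}$, then bound the resulting suboptimality $g_n^{(i)}(\theta_n^{(i)})-g_n^{(i)}(\theta_n^{(i\star)})$ by $\Delta_n$ via the definition \eqref{eq:def_sub_optimality_gap}. The only cosmetic difference is that the paper cites Lemma~\ref{lem:second_order_growh_univariate} for the growth property, whereas you derive it inline from the gradient form of strong convexity and the first-order optimality condition over the convex set $\Theta^{(i)}$.
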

	
	\begin{proof}
		The assertion follows from 
		\begin{align}\label{eq:2nd_order_growth_prox}
			\frac{\tau_{n}^{(i)}-L^{(i)}}{2} \lVert \theta^{(i\star)}_{n} - \theta^{(i)}_{n} \rVert^{2}  \le g^{(i)}_{n}(\theta^{(i)}_{n}) - g^{(i)}_{n}(\theta^{(i\star)}_{n} ) \le \Delta_{n}
		\end{align}
		for $n\ge 1$. Indeed, the first inequality follows from the second-order growth property (see Lemma \ref{lem:second_order_growh_univariate}) since  $g_{n}^{(n)}$ is $(\tau_{n}^{(i)}-L^{(i)})$-strongly convex minimized at $\theta^{(i)}_{n}$, and the second inequality follows from the definition of optimality gap $\Delta_{n}$ in \eqref{eq:def_sub_optimality_gap}.
	\end{proof}

	We are now ready to give a proof of Theorem \ref{thm:BCD}.

	\vspace{0.1cm}
	\begin{proof}[\textbf{Proof of Theorem \ref{thm:BCD}}]
		Suppose Assumptions~\ref{assumption:A1}-\ref{assumption:A3} and $\tau_{n}^{(i)}>L^{(i)}+\delta$ for $n\ge 1$ for some $\delta>0$. Also assume $\tau_{n}^{(i)}=O(1)$. We first show \textbf{(i)}. Fix a convergent subsequence $(\param_{n_{k}})_{k\ge 1}$ of $(\param_{n})_{n\ge 1}$. We wish to show that  $\param_{\infty}=\lim_{k\rightarrow \infty} \param_{n_{k}}$ is a stationary point of $f$ over $\Param$. To this end, for each $i\in \{1,\cdots, m\}$, let $\theta^{(i\star)}_{n}$ denote the exact minimizer of the $(\tau_{n}^{(i)}-L^{(i)})$-strongly convex function $g^{(i)}_{n}$ defined in \eqref{eq:BCD_factor_update_proximal}. By using the first-order optimality of $\theta^{(i\star)}_{n}$, we have  
	 	\begin{align}
	 		\left\langle \nabla g_{n}^{(i)} (\theta^{(i\star)}_{n})  ,\, \theta - \theta^{(i\star)}_{n}  \right\rangle = \left\langle \nabla f_{n}^{(i)} (\theta^{(i\star)}_{n}) + \tau_{n}^{(i)} (\theta^{(i\star)}_{n}- \theta^{(i)}_{n-1}) ,\, \theta - \theta^{(i\star)}_{n}  \right\rangle \ge 0 \qquad \forall \theta\in \Theta^{(i)}.
	 	\end{align}
        Let $T:=m\sum_{k=1}^{\infty} \Delta_{k}$, which is finite by Assumption \ref{assumption:A3}. Recall that by Proposition \ref{prop:forward_monotonicity_proximal}, we have 
        \begin{align}
            \sup_{n\ge 1} f(\param_{n}) \le f(\param_{1}) +T<\infty.
        \end{align}
        Let $K:=\{\param \,:\, f(\param) \le f(\param_{1})+T \} $ and let $K(T):=\{\param\,:\, \textup{$\exists \param'\in K$ s.t. $\lVert \param-\param' \rVert \le T$} \}$ denote the $T$-neighborhood of $K$. By Assumption \ref{assumption:A2}, $K$ is compact, so $K(T)$ is also compact. Since $f$ is $L^{(i)}$-smooth in its $i$th blook coordinate, $\lVert \nabla g_{n}^{(i)} \rVert$ is uniformly bounded over $\param\in K(T)$ by some constant, say, $L_{K}>0$. Now observe that 
        \begin{align}
	 		\left\langle \nabla g_{n}^{(i)} (\theta^{(i\star)}_{n})  ,\, \theta - \theta^{(i)}_{n}  \right\rangle &\ge \left\langle \nabla g_{n}^{(i)} (\theta^{(i\star)}_{n})  ,\, \theta - \theta^{(i\star)}_{n}  \right\rangle  - \left| \left\langle \nabla g_{n}^{(i)} (\theta^{(i\star)}_{n})  ,\, \theta_{n}^{(i)} - \theta^{(i\star)}_{n}  \right\rangle  \right| \\
    &\ge - \lVert \nabla g_{n}^{(i)} (\theta^{(i\star)}_{n}) \rVert \, \lVert \theta_{n}^{(i)} - \theta_{n}^{(i\star)} \rVert  \\
    &\ge - L_{K} \Delta_{n}.
	 	\end{align}

	 	Next, using $L^{(i)}$-Lipschitzness of $\nabla f$ in the $i$th block coordinate and Proposition \ref{prop:iterate_opt_gap}, we have 
	 	\begin{align}
	 		&\left| \left\langle \nabla g_{n}^{(i)} (\theta^{(i\star)}_{n})  ,\, \theta - \theta^{(i)}_{n}  \right\rangle  - 	\left\langle \nabla g_{n}^{(i)} (\theta^{(i)}_{n})  ,\, \theta - \theta^{(i)}_{n}  \right\rangle \right| \\
	 		&\qquad \le \left| \left\langle \nabla f_{n}^{(i)} (\theta^{(i\star)}_{n})  - \nabla f_{n}^{(i)} (\theta^{(i)}_{n})  + \tau_{n}^{(i)} (\theta^{(i\star)}_{n}- \theta^{(i)}_{n}) ,\, \theta - \theta^{(i)}_{n}  \right\rangle  \right|  \\
	 		&\qquad \le  \left( \lVert  \nabla f_{n}^{(i)} (\theta^{(i\star)}_{n})  -  \nabla f_{n}^{(i)} (\theta^{(i)}_{n})   \rVert  + \tau_{n}^{(i)} \lVert \theta^{(i\star)}_{n} - \theta^{(i)}_{n} \rVert\right) \lVert \theta - \theta_{n}^{(i)} \rVert \\
	 		&\qquad \le (L^{(i)}+\tau_{n}^{(i)}) \lVert \theta - \theta_{n}^{(i)} \rVert \, \lVert \theta^{(i\star)}_{n} - \theta_{n}^{(i)}\rVert  \\
	 		&\qquad \le (L^{(i)}+\tau_{n}^{(i)}) \lVert \theta - \theta_{n}^{(i)} \rVert \sqrt{\frac{2\Delta_{n}}{\tau_{n}^{(i)}-L^{(i)}}} = \lVert \theta - \theta_{n}^{(i)} \rVert \sqrt{\frac{8\tau_{n}^{(i)}\Delta_{n} }{1-L^{(i)}/\tau_{n}^{(i)}}}, 
	 	\end{align}
	 	where for the last equality we have used that $\tau_{n}^{(i)}>L^{(i)}$ for $n\ge 1$. From Assumption~\ref{assumption:A3}, we can deduce $\Delta_{n}=o(1)$. Using the hypotheses $\tau_{n}^{(i)}>L^{(i)}+\delta$ for $n\ge 1$ for some $\delta>0$ (see Algorithm \eqref{eq:BCD_factor_update_proximal}),  and $\tau_{n}^{(i)}=O(1)$, we see that the term inside the square root in the last expression is $o(1)$. Furthermore, $\lVert \theta - \theta_{n_{k}}^{(i)} \rVert$ is uniformly bounded in $k$ since $\theta_{n_{k}}^{(i)}$ converges as $k\rightarrow \infty$. Hence 
	 	\begin{align}
	 	\liminf_{k\rightarrow \infty} \, \left\langle \nabla g_{n_{k}}^{(i)} (\theta^{(i)}_{n_{k}})  ,\, \theta - \theta^{(i)}_{n_{k}}  \right\rangle \ge 0 \qquad \forall \theta\in \Theta^{(i)}.
	 	\end{align}
	 	
	 	Note that by Proposition \ref{prop:forward_monotonicity_proximal} \textbf{(ii)} and $\tau_{n}^{(i)}=O(1)$, we get $\tau_{n}^{(i)} \lVert \param_{n}-\param_{n-1} \rVert=o(1)$. So if we write  $\param_{\infty}=[\theta_{\infty}^{(1)},\dots,\theta_{\infty}^{(m)}]$, For each $\theta\in \Theta^{(i)}$, by the hypothesis, we get 
	 	\begin{align}
	 		&\lim_{k\rightarrow\infty}\,  \left| \left\langle \nabla f_{n_{k}}^{(i)} (\theta^{(i)}_{n_{k}}) + 2\tau_{n_{k}}^{(i)} (\theta^{(i)}_{n_{k}}- \theta^{(i)}_{n_{k}-1}) ,\, \theta - \theta^{(i)}_{n_{k}}  \right\rangle  - \left\langle \nabla f_{n_{k}}^{(i)} (\theta^{(i)}_{n_{k}}) ,\, \theta - \theta^{(i)}_{n_{k}}  \right\rangle  \right| \\
	 		&\qquad \le  \lim_{k\rightarrow\infty}\,  2\tau_{n_{k}}^{(i)} \lVert \theta^{(i)}_{n_{k}}- \theta^{(i)}_{n_{k}-1} \rVert \, \lVert \theta - \theta^{(i)}_{n_{k}-1} \rVert \\
	 		&\qquad =  2\lVert \theta - \theta^{(i)}_{\infty} \rVert  \lim_{k\rightarrow\infty}\,  \tau_{n_{k}}^{(i)} \lVert \theta^{(i)}_{n_{k}}- \theta^{(i)}_{n_{k}-1} \rVert = 0.
	 	\end{align}
	 	It follows that, for each $\theta\in \Theta^{(i)}$, using the continuity of $\nabla f$ in Assumption~\ref{assumption:A2}, 
	 	\begin{align}
	 	\left\langle \nabla_{i} f( \theta^{(1)}_{\infty},\dots,\theta^{(i-1)}_{\infty}, \theta^{(i)}_{\infty}, \theta^{(i+1)}_{\infty},\dots, \theta^{(m)}_{\infty} ) ,\, \theta - \theta_{\infty}^{(i)}   \right\rangle  =\lim_{k\rightarrow\infty} 	\left\langle \nabla f_{n_{k}}^{(i)} (\theta^{(i)}_{n_{k}}) ,\, \theta - \theta^{(i)}_{n_{k}}  \right\rangle  \ge 0.
	 	\end{align}
 		This holds for all $i=1,\dots,m$. Therefore we verify $	\left\langle \nabla f (\param_{\infty} ) ,\, \param - \param_{\infty}  \right\rangle   \ge 0 $ for all $\param\in \Param$,  which means that $\param_{\infty}$ is a stationary point of $f$ over $\Param$, as desired. This shows \textbf{(i)}.

	    Next, we show \textbf{(ii)}. Let $b_{n}$ be any square-summable sequence of positive numbers. By Cauchy-Schwarz inequality,
        \begin{align}
        \sum_{k=1}^{n} b_{k} \lVert \param_{k}-\param_{k+1} \rVert  \le \left( \sum_{k=1}^{n} b_{k}^{2}  \right)^{1/2} \left( \sum_{k=1}^{n} \lVert \param_{k}-\param_{k+1} \rVert^{2} \right)^{1/2}. 
        \end{align}
        Then by Proposition \ref{prop:forward_monotonicity_proximal}, the right hand side is uniformly bounded in $n\ge 1$, so we see that the left hand side is also uniformly bounded in $n$. Hence using Propositions \ref{prop:forward_monotonicity_proximal} and \ref{prop:finite_range_short_points_proximal}, 
		\begin{align}\label{eq:convergence_rate_ineq1}
		\sum_{n=1}^{\infty}	b_{n+1} \left[ -\inf_{\param\in \Param} \left\langle \nabla f(\param_{n}),\, \frac{\param - \param_{n} }{\lVert \param - \param_{n}\rVert}\right\rangle\right] \le C\left( \sup_{\param\in \Param} f(\param) + \sum_{n=1}^{\infty} \Delta_{n}(\param_{0})  + \sum_{n=1}^{\infty} b_{n}^{2} + \sum_{n=1}^{\infty} b_{n}\lVert \param_{n}-\param_{n+1} \rVert \right)
		\end{align}
		for some constant $C>0$ independent of $\param_{0}$,  and the right hand side is finite.  Thus by taking $b_{n}=1/( \sqrt{n} \log n  )$, using Lemma \ref{lem:positive_convergence_lemma}, we deduce 
		\begin{align}\label{eq:thm1_rate_bd}
			\min_{1\le k \le n}  \,\, \left[ -\inf_{\param\in \Param} \left\langle \nabla f(\param_{k}),\, \frac{\param - \param_{k} }{\lVert \param - \param_{k}\rVert}\right\rangle  \right]  \le \frac{M+c \sum_{n=1}^{\infty} \Delta_{n}(\param_{0})}{\sqrt{n}/\log n}
		\end{align}
		for some constants $M,c>0$ independent of $\param_{0}$. This shows \textbf{(ii)}. 
		
		Lastly, we show \textbf{(iii)}. Assume $\sup_{\param_{0}\in \Param} \sum_{n=1}^{\infty} \Delta_{n}(\param_{0})<\infty$. Then the above implies that for some constant $M'>0$ independent of $\param_{0}$, \begin{align}\label{eq:thm1_rate_bd2}
			\min_{1\le k \le n}  \sup_{\param_{0}\in \Param}\,\, \left[ -\inf_{\param\in \Param} \left\langle \nabla f(\param_{k}),\, \frac{\param - \param_{k} }{\lVert \param - \param_{k}\rVert}\right\rangle  \right]^{2}  \le \frac{M'(\log n)^{2} }{n}. 
		\end{align}
		Then one can conclude \textbf{(iii)} by using the fact that $n\ge 2\eps^{-1} (\log \eps^{-1})^{2}$ implies $(\log n)^{2}/n \le \eps$ for all sufficiently small $\eps>0$. This completes the proof. 
		\end{proof}

	


\section{Proof of Theorem~\ref{lem:per_iter_correctness}}
\label{ap:per_iter_correctness}


In this section, we establish Theorem~\ref{lem:per_iter_correctness}, the per-iteration correctness of Algorithm~\ref{algorithm:dWDL_main}. This directly follows from Propositions~\ref{prop:per_iter_correctness_Lambda} and \ref{prop:per_iter_correctness_D} below.

\begin{prop}\label{prop:per_iter_correctness_Lambda}
For given $(\mathcal{D}_{n-1}, \Lambda_{n-1}) \in \Sigma^{r}_{I_{1}\times \cdots \times I_{d}}\times \Sigma^{N}_{r}$ and $\tau_n > 0$, let $\Lambda_{n}\in \Sigma^{N}_{r}$ be a solution of \eqref{eq:primal}. Suppose each fiber of $\mathcal{D}_{n-1}$ along the last mode is not identically zero. Then, $\Lambda_n$ is uniquely determined by 
\begin{align}
\label{eq:lam}
\Lambda_n = \left(\Lambda_{n-1} + \frac{\mathcal{D}_{n-1}\times_{\leq d} G^{\circ}_n}{\tau_n} - J^\circ \otimes c^{\circ}_n\right)_+.
\end{align}
Here, $G^{\circ}_n \in \mathbb{R}^N$ is defined as the unique solution of the dual problem \eqref{eq:dual},  $c^{\circ}_n \in \mathbb{R}^{N\times 1}$ is chosen to satisfy $\Lambda_n \in \Sigma^{N}_{r}$ and all entries of $J^\circ \in \R^{r \times 1}$ are one.
\end{prop}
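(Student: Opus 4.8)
The plan is to exploit the fact that both the primal \eqref{eq:primal} and its dual \eqref{eq:dual} separate across the $N$ columns, reduce to a single per-column Fenchel--Rockafellar pair, invoke Lemma~\ref{lem:saddle}, and then convert the resulting primal--dual optimality relation into the explicit form \eqref{eq:lam} using the closed-form conjugate of the proximal term from Lemma~\ref{lem:fdual}.

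First I would fix $i\in\{1,\dots,N\}$ and isolate the $i$-th summand of \eqref{eq:primal},
\[
\min_{\lambda\in\R^{r}}\; H_{\X_i}(R\lambda)+\tau_n F_{\Lambda_{n-1}[:,i]}(\lambda),\qquad R\lambda:=\mathcal{D}_{n-1}\times_{d+1}\lambda,
\]
which is an instance of the constrained problem treated in Lemma~\ref{lem:saddle} (with $\mathcal{H}=\R^{r}$, $\mathcal{K}=\R^{I_{1}\times\cdots\times I_{d}}$, $f=H_{\X_i}$, $h=\tau_n F_{\Lambda_{n-1}[:,i]}$, and the nonnegativity cone of $\mathcal{K}$; the cone constraint is automatic here since $\dom h=\Sigma_{r}\subseteq\R^{r}_{\ge0}$). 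From the definition \eqref{eq:1nmode} of the mode-$\leq d$ product one checks that the adjoint of $R$ is $R^{*}g=\mathcal{D}_{n-1}\times_{\leq d}g$. Since $U(\vect\X_i,\vect y)=\emptyset$ whenever $y\notin\Sigma_{I_{1}\times\cdots\times I_{d}}$, the function $H_{\X_i}$ equals $+\infty$ off $\Sigma_{I_{1}\times\cdots\times I_{d}}$, so its Fenchel conjugate on all of $\mathcal{K}$ coincides with the closed form $H^{*}_{\X_i}$ of \cite{cuturi2016smoothed}, which is finite and $C^{\infty}$. The hypothesis that every last-mode fiber of $\mathcal{D}_{n-1}$ is nonzero is exactly the constraint qualification: it gives $R\lambda\in\mathrm{ri}\,\Sigma_{I_{1}\times\cdots\times I_{d}}=\mathrm{ri}(\dom H_{\X_i})$ for $\lambda$ in the open simplex $\mathrm{ri}(\dom h)$ (e.g.\ the uniform $\lambda$ yields an entrywise-positive $R\lambda$ summing to $1$), so Lemma~\ref{lem:saddle} applies, strong duality holds with the dual attained, and summing the per-column duals over $i$ reproduces \eqref{eq:dual}.

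Next I would read off from Lemma~\ref{lem:saddle} the primal--dual relation (this is where Propositions 19.18 and 19.23 of \cite{bauschke2011convex} enter): for a dual optimizer $G_n^\circ[:,i]$ the primal optimizer satisfies
\[
\Lambda_n[:,i]\in\partial\bigl(\tau_n F_{\Lambda_{n-1}[:,i]}\bigr)^{*}\!\bigl(\mathcal{D}_{n-1}\times_{\leq d}G_n^\circ[:,i]\bigr)=\nabla F_{\Lambda_{n-1}[:,i]}^{*}\!\left(\frac{\mathcal{D}_{n-1}\times_{\leq d}G_n^\circ[:,i]}{\tau_n}\right),
\]
the last equality being the chain rule for $(\tau_n F)^{*}(u)=\tau_n F^{*}(u/\tau_n)$ together with differentiability of $F^{*}$. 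Now Lemma~\ref{lem:fdual} (see \eqref{eq:fdual2}) gives $\nabla F_{\lambda_0}^{*}(g)=(g+\lambda_0-c\,1_{r})_+$ with $c$ the scalar that renormalizes the bracket onto $\Sigma_{r}$. Substituting $\lambda_0=\Lambda_{n-1}[:,i]$ and $g=\mathcal{D}_{n-1}\times_{\leq d}G_n^\circ[:,i]/\tau_n$ and stacking over $i=1,\dots,N$ produces \eqref{eq:lam}, where $c_n^\circ\in\R^{N\times1}$ collects the per-column constants and $J^\circ\otimes c_n^\circ$ is the matrix with $i$-th column $c_n^\circ[i]\,1_{r}$.

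Finally I would argue uniqueness of $\Lambda_n$ from strong convexity: each per-column primal objective is $\tau_n$-strongly convex on its effective domain $\Sigma_{r}$ (the term $H_{\X_i}\circ R$ is convex and $\tau_n F_{\Lambda_{n-1}[:,i]}$ is $\tfrac{\tau_n}{2}\lVert\cdot-\Lambda_{n-1}[:,i]\rVert^{2}$ plus the indicator of $\Sigma_{r}$), so \eqref{eq:primal} has a unique solution, which \eqref{eq:lam} must therefore describe for any choice of dual optimizer; one notes in passing that $G_n^\circ$ is itself only pinned down up to adding a multiple of $1$ to each column --- both $H^{*}_{\X_i}(-G[:,i])$ and $\tau_n F^{*}_{\Lambda_{n-1}[:,i]}(\mathcal{D}_{n-1}\times_{\leq d}G[:,i]/\tau_n)$ change only affinely under such a shift, using $\mathcal{D}_{n-1}\times_{\leq d}1=1_{r}$ since each slice of $\mathcal{D}_{n-1}$ is a probability vector --- but this shift is absorbed into $c_n^\circ$ and leaves \eqref{eq:lam} unchanged. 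The step I expect to be the main obstacle is the rigorous verification of strong duality with dual attainment: identifying the effective domain of $H_{\X_i}$, checking the interior/qualification condition, seeing precisely why the nonzero-fiber hypothesis is what makes it hold, and confirming that the closed form for $H^{*}_{\X_i}$ from \cite{cuturi2016smoothed} really is the conjugate appearing in \eqref{eq:dual} rather than merely its restriction to the simplex.
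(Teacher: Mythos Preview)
Your proposal is correct and follows essentially the same route as the paper: both invoke Lemma~\ref{lem:saddle} after arguing strong duality from the nonzero-fiber hypothesis, and both extract the saddle-point relation $\mathcal{D}_{n-1}\times_{\leq d}G_n^\circ\in\tau_n\,\partial F(\Lambda_n)$ --- you invert it via the Fenchel dual $\Lambda_n=\nabla F_{\Lambda_{n-1}[:,i]}^{*}(\cdot)$ from Lemma~\ref{lem:fdual}, whereas the paper writes out the subdifferential $\partial F$ explicitly and solves, which is the same identity read the other way. Your per-column reduction, strong-convexity uniqueness argument, and observation about the shift non-uniqueness of $G_n^\circ$ being absorbed into $c_n^\circ$ are all sound and in fact more detailed than the paper's own proof.
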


As shown later in the proof, the assumption on $\mathcal{D}_{n-1}$ in the above proposition is required to ensure the above derivation. It is worth pointing out that it can be easily achieved in the algorithm by adding small noise, if necessary.

\begin{prop}
\label{prop:per_iter_correctness_D}
For given $(\mathcal{D}_{n-1}, \Lambda_{n}) \in \Sigma^{r}_{I_{1}\times \cdots \times I_{d}}\times \Sigma^{N}_{r}$, let $\mathcal{D}_{n}\in \Sigma^{r}_{I_{1}\times \cdots \times I_{d}}$ be a solution of \eqref{alg:dWDL_high-level_D}. Suppose each fiber of $\Lambda_{n}\in \Sigma^{N}_{r}$ along the $2$nd mode is not identically zero. Then 
$\mathcal{D}_n$ is uniquely determined by 
\begin{align}
\mathcal{D}_{n} = \left(\mathcal{D}_{n-1} + \frac{G^{\dagger}_n \times_{d+1} \Lambda^{T}_n}{\tau_n}-J^{\dagger} \otimes c^{\dagger}_n\right)_+.
\end{align}
Here, $G^{\dagger}_n$ is defined as the unique solution of the dual problem \eqref{eq:dual2}, $c^{\dagger}_n \in \mathbb{R}^{r\times 1}$ is chosen to satisfy $\mathcal{D}_{n} \in \Sigma^{r}_{I_{1}\times \cdots \times I_{d}}$ and all entries of $J^\dagger \in \R^{I_{1}I_{2} \cdots I_{d} \times 1}$ are one. 
\end{prop}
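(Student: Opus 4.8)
The plan is to prove Proposition~\ref{prop:per_iter_correctness_D} by the same route as Proposition~\ref{prop:per_iter_correctness_Lambda}: recognize the $\mathcal{D}$-subproblem \eqref{alg:dWDL_high-level_D} as an instance of the abstract problem \eqref{eq:gprimal} and apply Lemma~\ref{lem:saddle}. First I would fix the ingredients: take $\mathcal{H}=\R^{I_{1}\times\cdots\times I_{d}\times r}$ and $\mathcal{K}=\R^{I_{1}\times\cdots\times I_{d}\times N}$, the closed convex cone $K$ exactly as in the $\Lambda$-case (the nonnegative orthant, so that $R\mathcal{D}\in K$ is automatic once $h$ forces the last-mode fibers of $\mathcal{D}$ into $\Sigma_{I_{1}\times\cdots\times I_{d}}$), the bounded linear operator $R:\mathcal{D}\mapsto\mathcal{D}\times_{d+1}\Lambda_{n}$, the separable data term $f(\mathcal{Y}):=\sum_{i=1}^{N}H_{\X_{i}}(\mathcal{Y}[:,i])$ (which is $+\infty$ off $\Sigma^{N}_{I_{1}\times\cdots\times I_{d}}$), and $h:=\tau_{n}F_{\mathcal{D}_{n-1}}$. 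A one-line computation gives $\langle R\mathcal{D},G\rangle=\langle\mathcal{D},G\times_{d+1}\Lambda_{n}^{T}\rangle$, so $R^{*}:G\mapsto G\times_{d+1}\Lambda_{n}^{T}$; together with $f^{*}(G)=\sum_{i}H_{\X_{i}}^{*}(G[:,i])$ and $h^{*}(u)=\tau_{n}F_{\mathcal{D}_{n-1}}^{*}(u/\tau_{n})$, the abstract dual $\min_{G}f^{*}(G)+h^{*}(-R^{*}G)$ is precisely \eqref{eq:dual2} (up to the sign convention used for the multiplier in the derivation in the main text).

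Next I would record the two conjugates. For $H^{*}_{\X}$ I would reuse the closed form and the $1/\gamma$-Lipschitz gradient \eqref{eq:gradh} from \citet{cuturi2016smoothed} already invoked in the proof of Theorem~\ref{thm:BCD_dWDL}; in particular $H^{*}_{\X}\in C^{\infty}$. For $F^{*}_{\lambda_{0}}$ I would invoke Lemma~\ref{lem:fdual}: $F^{*}_{\lambda_{0}}$ is differentiable with $\nabla F^{*}_{\lambda_{0}}(g)=(g+\lambda_{0}-c\mathbf{1}_{r})_{+}\in\Sigma_{r}$, the scalar $c$ being fixed by the normalization. Since $F_{\mathcal{D}_{n-1}}$ decouples over the $r$ fibers of $\mathcal{D}$ along the last mode, $F^{*}_{\mathcal{D}_{n-1}}$ and its gradient act fiber-wise by this formula. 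I would then apply Lemma~\ref{lem:saddle} (the Bauschke--Combettes Prop.~19.18/19.23 machinery): under this setup strong duality holds, \eqref{eq:dual2} attains its minimum at some $G^{\dagger}_{n}$, and every primal minimizer $\mathcal{D}_{n}$ of \eqref{alg:dWDL_high-level_D} satisfies $\mathcal{D}_{n}\in\partial h^{*}(R^{*}G^{\dagger}_{n})$. As $h^{*}=\tau_{n}F^{*}_{\mathcal{D}_{n-1}}(\cdot/\tau_{n})$ is differentiable, this subdifferential is the singleton $\{\nabla F^{*}_{\mathcal{D}_{n-1}}(R^{*}G^{\dagger}_{n}/\tau_{n})\}$, and substituting the fiber-wise formula for $\nabla F^{*}$ yields
\begin{align*}
\mathcal{D}_{n}=\left(\mathcal{D}_{n-1}+\tfrac{1}{\tau_{n}}\,G^{\dagger}_{n}\times_{d+1}\Lambda_{n}^{T}-J^{\dagger}\otimes c^{\dagger}_{n}\right)_{+},
\end{align*}
with $c^{\dagger}_{n}\in\R^{r\times1}$ chosen fiber-wise so that each fiber of $\mathcal{D}_{n}$ lies in $\Sigma_{I_{1}\times\cdots\times I_{d}}$ — the claimed identity. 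Uniqueness of $\mathcal{D}_{n}$ is then immediate, since the objective of \eqref{alg:dWDL_high-level_D} is $\tau_{n}$-strongly convex on the convex set $\Sigma^{r}_{I_{1}\times\cdots\times I_{d}}$.

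The main obstacle, as in Proposition~\ref{prop:per_iter_correctness_Lambda}, is the role of the hypothesis that every fiber of $\Lambda_{n}$ along the $2$nd mode is nonzero. On the one hand $\sum_{i}H^{*}_{\X_{i}}(-G[:,i])$ is invariant under adding a constant to any single fiber $G[:,i]$, and in $\tau_{n}F^{*}_{\mathcal{D}_{n-1}}(R^{*}G/\tau_{n})$ this shift is compensated only because each column of $\Lambda_{n}$ sums to one; so the dual objective is flat along an $N$-dimensional affine subspace and $G^{\dagger}_{n}$ is never literally unique. One must therefore argue that (i) the no-zero-fiber condition leaves no further flat directions, so $G^{\dagger}_{n}$ is unique modulo these constant shifts, and (ii) such shifts are absorbed into the renormalizing $c^{\dagger}_{n}$, so that the recovery formula produces the same $\mathcal{D}_{n}$ regardless of which dual minimizer is used — consistently with the strong-convexity uniqueness above. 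Pinning down exactly where the hypothesis is used — there, or in the constraint qualification that guarantees strong duality and dual attainment in Lemma~\ref{lem:saddle} — is the delicate point; verifying that qualification and carrying the tensor-mode bookkeeping for $R$, $R^{*}$, and the fiber-wise $\nabla F^{*}$ is otherwise routine.
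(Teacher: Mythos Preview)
Your proposal is correct and follows essentially the same route as the paper. The paper does not write out the proof of Proposition~\ref{prop:per_iter_correctness_D} explicitly; it is meant to be the verbatim analogue of the proof of Proposition~\ref{prop:per_iter_correctness_Lambda}, with the roles of $\mathcal{D}$ and $\Lambda$ swapped and $R\mathcal{D}=\mathcal{D}\times_{d+1}\Lambda_{n}$, $R^{*}G=G\times_{d+1}\Lambda_{n}^{T}$ --- exactly the setup you describe.

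Two minor remarks. First, the paper's explicit argument for the $\Lambda$-case works on the primal side: from Lemma~\ref{lem:saddle}(3) it extracts $R^{*}G^{\dagger}_{n}\in\tau_{n}\,\partial F_{\mathcal{D}_{n-1}}(\mathcal{D}_{n})$ and then characterizes $\partial F$ directly (the case split \eqref{eq:Lambda_subdifferential2}), rather than passing to $\nabla F^{*}$ as you do. The two are equivalent via Fenchel--Young, and Lemma~\ref{lem:fdual} supplies both. Second, regarding your ``main obstacle'': in the paper the no-zero-fiber hypothesis is used precisely for the constraint qualification --- it gives strict feasibility of the primal, hence strong duality and dual attainment, which is the input needed for Lemma~\ref{lem:saddle}(3). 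Your observation that the dual objective is flat along constant shifts of each $G[:,i]$ (so $G^{\dagger}_{n}$ is not literally unique) is valid and in fact sharper than what the paper states; the recovery formula is nevertheless well-defined because those shifts are absorbed into $c^{\dagger}_{n}$, and $\mathcal{D}_{n}$ itself is unique by the strong convexity you note.
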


The following definitions are taken from \cite{bauschke2011convex}.

\begin{definition}
\label{def:list}
\cite{bauschke2011convex}(Definitions 9.12, 19.10, 19.15 \& 19.22)
\begin{itemize}
    \item
    For a nonempty closed convex cone $K \subset \mathcal{K}$, we say that $R:\mathcal{H} \rightarrow \mathcal{K}$ is convex with respect to $K$ if
    $$
    R(\alpha x + (1-\alpha)y) - \alpha Rx - (1-\alpha) Ry \in K
    $$
    for all $x,y \in \mathcal{H}$ and $\alpha \in (0,1)$.
    \item
    The set of proper lower semicontinuous convex functions from $\mathcal{H}$ to $(-\infty, +\infty]$ is denoted by $\Gamma_0(\mathcal{H})$.
    \item
    The Lagrangian of $\mathcal{J}: \mathcal{H} \times \mathcal{K} \rightarrow (-\infty, +\infty]$ is a function given as 
    \begin{align}
    \mathcal{L}: \mathcal{H} \times \mathcal{K} \rightarrow [-\infty, +\infty]: (x,v) \mapsto \inf_{y \in \mathcal{K}} \left( \mathcal{J}(x,y) + \langle y, v \rangle\right).
    \end{align}
    Moreover, $(x,v) \in \mathcal{H} \times \mathcal{K}$ is a saddle point of $\mathcal{L}$ if
    $$\mathcal{L}(x,v) = \sup \mathcal{L} (x, \mathcal{K}) = \inf \mathcal{L} (\mathcal{H}, v).$$
    \item
    The primal problem and the dual problem of $\mathcal{J}: \mathcal{H} \times \mathcal{K} \rightarrow (-\infty, +\infty]$ are respectively given as 
    \begin{align}
        \min_{x \in \mathcal{H}} \mathcal{J}(x,0), \ \ \ \hbox{ and } \ \ \  \min_{v \in \mathcal{K}} \mathcal{J}^*(0,v).
    \end{align}
\end{itemize}
\end{definition}

We first observe that the primal problem of
\begin{align}
\label{eq:gi}
\mathcal{J}: \mathcal{H} \times \mathcal{K} \rightarrow (-\infty, + \infty]: (x,y) \mapsto
\begin{cases}
f(Rx - y) + h(x), &\hbox{ if } Rx \in y+K,\\
+\infty, &\hbox{ if } Rx \notin y+K,
\end{cases}
\end{align}
is the minimization problem \eqref{eq:gprimal}. Its dual problem, the Lagrangian of $\mathcal{J}$, and the saddle point are given in the following lemma.

\begin{lemma}[Characterization of saddle point for general  coding problem]
\label{lem:saddle}
Let $f \in \Gamma_0(\mathcal{K})$, $h \in \Gamma_0(\mathcal{H})$, and $K$ be a nonempty closed convex cone in $\mathcal{K}$. Let $R: \mathcal{H} \rightarrow \mathcal{K}$ be continuous, convex with respect to $K$ such that $K \cap R(\textnormal{dom} h) \neq \emptyset$. For $\mathcal{J}$ given in \eqref{eq:gi}, the following hold:
\begin{enumerate}
    \item
    The dual problem of $\mathcal{J}$ is given as
    \begin{align}
    \label{eq:gdual}
        \min_{v \in \mathcal{K}} f^*(-v;K) + h^*(R^*v)
    \end{align}
    where $f^*(\cdot;K) = \sup_{z \in K} \langle z, \cdot \rangle - f(z)$.
    \item
    The Lagrangian $\mathcal{L}: \mathcal{H} \times \mathcal{K} \to [-\infty, +\infty]$ is given as
    \begin{align}
        \mathcal{L}(x,v) = 
        \begin{cases}
            -\infty &\hbox{ if } x \in \textnormal{dom} h \hbox{ and } v \notin \textnormal{dom} f^*(\cdot;K);\\
            - f^*(v;K) + h(x) + \langle Rx, v \rangle &\hbox{ if } x \in \textnormal{dom} h \hbox{ and } v \in \textnormal{dom} f^*(\cdot;K);\\
            +\infty &\hbox{ if } x \notin \textnormal{dom} h.
        \end{cases}
    \end{align}
    \item
    Suppose that the optimal values $\mu$ and $\mu^*$ of the primal problem and the dual problem satisfy the strong duality $\mu = - \mu^*$. Then,  $(x^\circ, -v^\circ) \in \mathcal{H} \times \mathcal{K}$ is a saddle point of $\mathcal{L}$ if and only if
    \begin{align*}
        x^\circ \in \textnormal{dom} h, \ \ R x^\circ \in K, \ \ -v^\circ \in \textnormal{dom} f^*(\cdot;K),\\
        R^*v^\circ \in \partial h(x^\circ) \hbox{ and } -v^\circ \in \partial f (R x^\circ).
    \end{align*}
\end{enumerate}
\end{lemma}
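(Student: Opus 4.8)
To prove Lemma~\ref{lem:saddle}, the plan is to recognize $\mathcal{J}$ from \eqref{eq:gi} as a perturbation function in the sense of Chapter~19 of \citet{bauschke2011convex}, to compute its partial conjugate and its Lagrangian by hand, and then to read off the saddle-point characterization from Propositions~19.18 and 19.23 there (here $R$ is a bounded linear operator, which is in particular convex with respect to any cone $K$ and has an adjoint $R^*$). As a preliminary I would record that $\mathcal{J}\in\Gamma_0(\mathcal{H}\times\mathcal{K})$: joint convexity because $f,h$ are convex, $R$ is linear, and $\{(x,y):Rx-y\in K\}$ is the preimage of the closed convex cone $K$ under a continuous linear map; lower semicontinuity from that of $f,h$ and closedness of $K$; properness from $K\cap R(\textnormal{dom}\,h)\neq\emptyset$ together with primal feasibility. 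This also identifies $\min_x\mathcal{J}(x,0)$ with \eqref{eq:gprimal}.

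For parts 1 and 2 I would simply compute. For the Lagrangian: if $x\notin\textnormal{dom}\,h$ then $\mathcal{J}(x,\cdot)\equiv+\infty$ and hence $\mathcal{L}(x,v)=+\infty$; if $x\in\textnormal{dom}\,h$, only those $y$ with $Rx-y\in K$ contribute to the infimum, and substituting $z:=Rx-y\in K$ gives $\mathcal{L}(x,v)=h(x)+\langle Rx,v\rangle+\inf_{z\in K}(f(z)-\langle z,v\rangle)=h(x)+\langle Rx,v\rangle-f^*(v;K)$, which is $-\infty$ precisely when $f^*(v;K)=+\infty$. For the dual, I would use the elementary identity $\mathcal{J}^*(0,v)=\sup_x(-\inf_y(\mathcal{J}(x,y)+\langle y,-v\rangle))=-\inf_x\mathcal{L}(x,-v)$, substitute the Lagrangian formula, and recognize $\sup_x(\langle Rx,v\rangle-h(x))=\sup_x(\langle x,R^*v\rangle-h(x))=h^*(R^*v)$; this yields $\mathcal{J}^*(0,v)=f^*(-v;K)+h^*(R^*v)$, i.e.\ the dual \eqref{eq:gdual}, the case $-v\notin\textnormal{dom}\,f^*(\cdot;K)$ being trivially consistent.

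Part 3 is the substantive step. Under the strong duality hypothesis $\mu=-\mu^*$, Proposition~19.18 of \citet{bauschke2011convex} tells us that a pair is a saddle point of $\mathcal{L}$ iff its first coordinate solves the primal, its second solves the dual, and there is no gap; in particular the saddle value is finite. Writing the saddle point as $(x^\circ,-v^\circ)$, finiteness together with the Part~2 formula forces $x^\circ\in\textnormal{dom}\,h$ and $-v^\circ\in\textnormal{dom}\,f^*(\cdot;K)$, and on that set $\mathcal{L}(\cdot,-v^\circ)=h(\cdot)-\langle\cdot,R^*v^\circ\rangle-f^*(-v^\circ;K)$ and $\mathcal{L}(x^\circ,\cdot)=h(x^\circ)+\langle Rx^\circ,\cdot\rangle-f^*(\cdot;K)$. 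Being a saddle point means $x^\circ$ minimizes $\mathcal{L}(\cdot,-v^\circ)$ over $\mathcal{H}$ and $-v^\circ$ maximizes $\mathcal{L}(x^\circ,\cdot)$ over $\mathcal{K}$. For the first, the perturbing term $-\langle\cdot,R^*v^\circ\rangle$ is linear with full domain, so Fermat's rule and the sum rule give $R^*v^\circ\in\partial h(x^\circ)$ with no constraint qualification needed. For the second, using $f^*(\cdot;K)=(f+\iota_K)^*\in\Gamma_0(\mathcal{K})$ (via $\textnormal{dom}\,f\cap K\neq\emptyset$), minimizing $w\mapsto(f+\iota_K)^*(w)-\langle Rx^\circ,w\rangle$ is equivalent by Fermat to $Rx^\circ\in\partial(f+\iota_K)^*(-v^\circ)$, hence by Fenchel--Young to $-v^\circ\in\partial(f+\iota_K)(Rx^\circ)$; this in turn forces $Rx^\circ\in\textnormal{dom}\,f\cap K$, in particular $Rx^\circ\in K$, and is exactly the condition the statement records as $-v^\circ\in\partial f(Rx^\circ)$. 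Running these equivalences backwards yields the converse implication, which is Proposition~19.23 specialized to \eqref{eq:gi}.

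I expect the main obstacle to lie in Part~3 rather than in the conjugate bookkeeping of Parts~1--2. One must first leverage the strong-duality hypothesis and the structure theorem for saddle points to guarantee the saddle value is finite, so that the piecewise formula for $\mathcal{L}$ applies; and then one must carry out the subdifferential calculus for the conjugate-over-a-cone $f^*(\cdot;K)$ carefully, tracking the equivalence between the pair ``$-v^\circ\in\partial f(Rx^\circ)$ and $Rx^\circ\in K$'' and the single inclusion $-v^\circ\in\partial(f+\iota_K)(Rx^\circ)$, and using the linearity of $R$ to dispense with a constraint qualification on the $h$-side minimization. A secondary technical point is to confirm properness of $f+\iota_K$ (equivalently $\textnormal{dom}\,f\cap K\neq\emptyset$), which is automatic in the nontrivial case where the primal is feasible.
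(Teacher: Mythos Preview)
Your proposal is correct and follows essentially the same approach as the paper. Parts 1 and 2 are the same direct computations (the paper computes $\mathcal{J}^*(0,v)$ directly via the substitution $z=Rx-y$ rather than routing through $\mathcal{L}$, but this is cosmetic), and for Part 3 the paper simply cites Corollaries~19.17 and~19.1 of \citet{bauschke2011convex} where you unpack the Fermat/Fenchel--Young subdifferential calculus by hand; your more explicit treatment of $f^*(\cdot;K)=(f+\iota_K)^*$ and the resulting inclusion $-v^\circ\in\partial(f+\iota_K)(Rx^\circ)$ is in fact a useful clarification of what the paper's citation is doing.
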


\begin{proof}

(1): For any $v \in \mathcal{K}$, it holds that
\begin{align*}
    \mathcal{J}^*(0, v) &= \sup_{(x,y) \in \mathcal{H} \times \mathcal{K}
    } \langle y, v \rangle - \mathcal{J}(x,y),\\
    &= \sup_{(x,y) \in \mathcal{H} \times \mathcal{K} \textup{ s.t. } Rx - y \in K} \langle y, v \rangle - h(x) - f(Rx - y),\\
    &= \sup_{(x,z) \in \mathcal{H} \times K } \langle x, R^* v \rangle - h(x) + \langle z, -v \rangle - f(z),\\
    &= h^*(R^*v) + f^*(-v;K).
\end{align*}
From the definition of the dual problem, we conclude.

(2): 
If $x \notin \textnormal{dom} h$, then $h(x) = \infty$. As $f \in \Gamma_0(\mathcal{K})$, we have $\mathcal{J}(x,v) = \infty$ and thus the Lagrangian $\mathcal{L}(x,v) = \infty$. For $x \in \textnormal{dom} h \hbox{ and } v \in \textnormal{dom} f^*(\cdot;K)$, we have
\begin{align*}
    \mathcal{L}(x,v) &= h(x) + \inf_{y \in \mathcal{K} \textup{ s.t. } Rx - y \in K} f(Rx - y) + \langle y, v \rangle,\\
    &= h(x) + \langle Rx, v \rangle + \inf_{z \in K} f(z) - \langle z, v \rangle,\\
    &= h(x) + \langle Rx, v \rangle - \sup_{z \in K} \langle z, v \rangle- f(z),\\
    &= h(x) + \langle Rx, v \rangle - f^*(v;K).
\end{align*}
If $x \in \textnormal{dom} h \hbox{ and } v \notin \textnormal{dom} f^*(\cdot;K)$, the above relation yields that $\mathcal{L}(x,v) = -\infty$.

(3): From $f \in \Gamma_0(\mathcal{K})$, $h \in \Gamma_0(\mathcal{H})$, and the convexity of $R$ with respect to $K$, we have that $\mathcal{J} \in \Gamma_0(\mathcal{H} \times \mathcal{K})$. Applying Corollary 19.17 in \cite{bauschke2011convex}, we obtain that $(x^\circ, -v^\circ)$ is a saddle point of $\mathcal{L}$ if and only if $x^\circ$ is a solution of the primal problem \eqref{eq:gprimal} and $v^\circ$ is a solution of the dual problem \eqref{eq:gdual}.

As $\mu = - \mu^*$, the equivalence in Corollary 19.1 from \cite{bauschke2011convex} concludes our claim.
\end{proof}

Now we are ready to prove Proposition \ref{prop:per_iter_correctness_Lambda}.
	
\begin{proof}[\textbf{Proof of Proposition} \ref{prop:per_iter_correctness_Lambda}]
The primal problem \eqref{eq:primal} for updating $\Lambda$ has convex objective function and is strictly feasible under the hypothesis that $\mathcal{D}_{n-1}$ consists of nonzero tensor slices $\D_{i}$. Hence the primal problem \eqref{eq:primal} obtains strong duality (see, e.g., \cite{boyd2004convex}). 

Let $\Lambda_n$ and $G_n$ be the optimizers of the primal problem \eqref{eq:primal} and the dual problem \eqref{eq:dual}, respectively. In what follows, we will apply Lemma~\ref{lem:saddle}. For $K = \Sigma^{N}_{I_{1}\times \cdots \times I_{d}}$, let us consider 
\begin{align*}
&\mathcal{J}: \R^{r\times N} \times \R^{I_{1}\times \cdots \times I_{d} \times N} \rightarrow (-\infty, + \infty]:\\ &(\Lambda,Y) \mapsto
\begin{cases}
\sum_{i=1}^{N} \left\{ H_{\X_i}  \left( \mathcal{D}_{n-1}\times_{d+1} \Lambda[:,i] - Y[:,i] \right) + \tau_{n} F_{\Lambda_{n-1}[:,i]}(\Lambda[:,i])\right\}
&\hbox{ if } \mathcal{D}_{n-1}\times_{d+1} \Lambda \in Y+K,\\
+\infty, &\hbox{ if } \mathcal{D}_{n-1}\times_{d+1} \Lambda \notin Y+K.
\end{cases}
\end{align*}
Then, \eqref{eq:primal} and \eqref{eq:dual} are the primal problem and the dual problem of $\mathcal{J}$, respectively.

From Cor. 19.17 in \cite{bauschke2011convex}, $(\Lambda_{n}, -G_{n})$ is a saddle point of the Lagrangian associated with $\mathcal{J}$. Applying Lemma~\ref{lem:saddle}(3), we get $\mathcal{D}_{n-1}\times_{\leq d} G_n \in \tau_n \partial F(\Lambda_n)$. Note that $\xi + J^\circ \otimes c^{\circ}_n \in \partial F(\Lambda_n)$ for any $c^\circ_n \in \mathbb{R}^{N \times 1}$, where all entries of $J^\circ \in \R^{r \times 1}$ are one, and $\xi \in \mathbb{R}^{r\times N}$ satisfies
\begin{align}\label{eq:Lambda_subdifferential2}
    \begin{cases}
        \xi[i,j] = \Lambda_n[i,j] - \Lambda_{n-1}[i,j] &\hbox{ if } \Lambda_n[i,j] > 0,\\
        \xi[i,j] \in (-\infty, - \Lambda_{n-1}[i,j]] &\hbox{ if } \Lambda_n[i,j] = 0,
    \end{cases}
\end{align}
for all $i = 1,2,\cdots, r$, and $j= 1,2,\cdots, N$. 
Hence 
\begin{align}\label{eq:Lambda_subdifferential1}
   \mathcal{D}_{n-1}\times_{\leq d} G_n/\tau_n = \xi + J^\circ \otimes c^{\circ}_n
\end{align}
for some $\xi$ satisfying \eqref{eq:Lambda_subdifferential2} and $c_{n}^{\circ}\in \mathbb{R}^{N \times 1}$. Now combining \eqref{eq:Lambda_subdifferential1} and  \eqref{eq:Lambda_subdifferential2} yields \eqref{eq:lam}. Finally, since we must have $\Lambda_n \in \textnormal{dom}(F)$, 
$c^\circ_n \in \mathbb{R}^{N \times 1}$ should be such that $\Lambda_{n}$ in \eqref{eq:lam} satisfies $\Lambda_n \in \Sigma^{N}_{r}$.
\end{proof}

\section{Proof of Theorem~\ref{lem:per_iter_correctness_WCPDL}}
\label{ap:WCPDL}

Here, we only prove the following proposition. The rest of arguments is parallel to the proof of Theorem~\ref{lem:per_iter_correctness}

\begin{prop}
\label{prop:per_iter_correctness_D_WCPDL}
For each $k \in \{1,2, \cdots, d\}$, let $\overline{\Lambda} \in \mathbb{R}^{I_1 \times I_2 \times \cdots \times I_{k-1} \times r \times  I_{k+1} \times \cdots \times I_{d} \times N}$ be obtained from 
\begin{align}
    \Out(U_{n}^{(1)},\dots,U_{n}^{(k-1)},U_{n-1}^{(k+1)}, \dots, U_{n-1}^{(d)}, \Lambda_n^T)  \in \mathbb{R}^{I_1 \times I_2 \times \cdots \times I_{k-1} \times I_{k+1} \times \cdots \times I_{d} \times N \times r}
\end{align}
by inserting the last mode into the $k$th mode. Let $U^{(k)}_{n} \in \Sigma^{r}_{I_{k}}$ be a solution of \eqref{alg:dWDL_high-level_D}. Suppose each fiber of $\Lambda_{n}\in \Sigma^{N}_{r}$ along the $k$th mode is not identically zero. Then 
$U^{(k)}_{n}$ is uniquely determined by 
\begin{align}
U^{(k)}_{n} = \left(U^{(k)}_{n-1} + \frac{G^{\dagger}_n \times_{\neq k} \overline{\Lambda}}{\tau_n}- J^{\dagger} \otimes c^{\dagger}_n \right)_+.
\end{align}
Here, $G^{\dagger}_n$ is defined as the unique solution of the dual problem \eqref{eq:dual2}, $c^{\dagger}_n \in \mathbb{R}^{r\times 1}$ is chosen to satisfy $U^{(k)}_{n} \in \Sigma^{r}_{I_{k}}$ and all entries of $J^\dagger \in \R^{I_{k} \times 1}$ are one. 
\end{prop}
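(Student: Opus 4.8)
The plan is to follow the proof of Proposition~\ref{prop:per_iter_correctness_Lambda} almost verbatim, with the linear reconstruction map now acting on the $k$-th loading matrix. Write $R_k\colon \R^{I_k\times r}\to\R^{I_1\times\cdots\times I_d\times N}$ for the linear map whose $i$-th slice along the last mode is $(R_kU)[:,i]:=\overline{\Lambda}[:,i]\times_k U^{T}$, so that the sub-problem \eqref{alg:WCPDL_high-level_D} for updating $U^{(k)}$ reads
\[
\min_{U\in\R^{I_k\times r}}\ \sum_{i=1}^{N}H_{\X_i}\bigl((R_kU)[:,i]\bigr)+\tau_n\sum_{\ell=1}^{r}F_{U_{n-1}^{(k)}[:,\ell]}(U[:,\ell]).
\]
This objective is $\tau_n$-strongly convex in $U$ (the first sum is convex, being a composition of the convex maps $H_{\X_i}=W_\gamma(\X_i,\cdot)$ with the linear map $R_k$), so the minimizer $U^{(k)}_n$ is unique. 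Under the stated non-degeneracy hypothesis on $\Lambda_n$ (which guarantees that no contracted fiber of $\overline{\Lambda}$ is identically zero) the problem is strictly feasible: sending the columns of $U$ into the relative interior of $\Sigma_{I_k}$ lands $R_kU$ in the interior of $K:=\Sigma^{N}_{I_1\times\cdots\times I_d}$, so Slater's condition and thus strong duality hold, exactly as in Proposition~\ref{prop:per_iter_correctness_Lambda}.

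Next I would instantiate Lemma~\ref{lem:saddle} with $\mathcal{H}=\R^{I_k\times r}$, $\mathcal{K}=\R^{I_1\times\cdots\times I_d\times N}$, $K=\Sigma^{N}_{I_1\times\cdots\times I_d}$, $f(Q)=\sum_{i}H_{\X_i}(Q[:,i])$, $h(U)=\tau_n\sum_{\ell}F_{U_{n-1}^{(k)}[:,\ell]}(U[:,\ell])$ and $R=R_k$; then \eqref{alg:WCPDL_high-level_D} and the analogue of \eqref{eq:dual2} (with $\mathcal{D}_{n-1}\times_{d+1}(\cdot)$ replaced by $R_k$) are the primal and dual problems of the associated $\mathcal{J}$ from \eqref{eq:gi}. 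Corollary~19.17 of \citet{bauschke2011convex} then gives that $(U^{(k)}_n,-G^{\dagger}_n)$ is a saddle point of the Lagrangian, and Lemma~\ref{lem:saddle}(3) yields the column-wise inclusion $R_k^{*}G^{\dagger}_n\in\tau_n\,\partial\bigl(\sum_{\ell}F_{U_{n-1}^{(k)}[:,\ell]}\bigr)(U^{(k)}_n)$.

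Two explicit computations finish the proof. First, a direct index manipulation shows $\langle R_kU,\,G\rangle=\langle U,\,G\times_{\neq k}\overline{\Lambda}\rangle$ for all $U,G$, so $R_k^{*}G=G\times_{\neq k}\overline{\Lambda}$, which is precisely the object in the claimed update. Second, exactly as in the $\Lambda$-update (using Lemma~\ref{lem:fdual} and the normal cone of the simplex $\Sigma_{I_k}$), the subdifferential $\partial F_{\lambda_0}(\lambda^{*})$ consists of all vectors $\xi+c\,J^{\dagger}$ with $c\in\R$, $\xi[i]=\lambda^{*}[i]-\lambda_0[i]$ whenever $\lambda^{*}[i]>0$ and $\xi[i]\le-\lambda_0[i]$ whenever $\lambda^{*}[i]=0$; applying this to each of the $r$ columns gives, for some $\xi\in\R^{I_k\times r}$ obeying this column-wise complementarity and some $c^{\dagger}_n\in\R^{r\times1}$,
\[
\frac{G^{\dagger}_n\times_{\neq k}\overline{\Lambda}}{\tau_n}=\xi+J^{\dagger}\otimes c^{\dagger}_n .
\]
Solving for $U^{(k)}_n$ yields $U^{(k)}_n=\bigl(U^{(k)}_{n-1}+(G^{\dagger}_n\times_{\neq k}\overline{\Lambda})/\tau_n-J^{\dagger}\otimes c^{\dagger}_n\bigr)_+$, and the requirement $U^{(k)}_n\in\dom F$, i.e.\ each column in $\Sigma_{I_k}$, pins down $c^{\dagger}_n$ uniquely (one constant per column). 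Uniqueness of $G^{\dagger}_n$ follows as in Proposition~\ref{prop:per_iter_correctness_Lambda} from strict convexity of the dual objective, using the closed form and $1/\gamma$-Lipschitz gradient of $H^{*}_{\X_i}$ from \citet{cuturi2016smoothed}.

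The step I expect to be the main obstacle is getting the adjoint identity $R_k^{*}G=G\times_{\neq k}\overline{\Lambda}$ right: since $\overline{\Lambda}$ is itself obtained from $\Out(U_n^{(1)},\dots,U_n^{(k-1)},U_{n-1}^{(k+1)},\dots,U_{n-1}^{(d)},\Lambda_n^{T})$ by the ``insert-the-last-mode-into-the-$k$-th-mode'' reshuffling, one must carefully track which tensor modes get contracted and which survive, and verify the surviving pair of modes is exactly $(I_k,r)$ so that the formula indeed lands in $\R^{I_k\times r}$; the Slater/strict-feasibility step also requires the hypothesis on $\Lambda_n$ to be invoked in the right place.
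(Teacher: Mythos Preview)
Your proposal is correct and follows essentially the same approach as the paper: the paper's proof likewise derives the dual of \eqref{alg:WCPDL_high-level_D} via the min--max swap (invoking strong duality), identifies the adjoint through the identity $\langle \overline{\Lambda}\times_k U^{T},G\rangle=\langle U,\,G\times_{\neq k}\overline{\Lambda}\rangle$, and then concludes by applying Lemma~\ref{lem:saddle} with $K=\Sigma^{N}_{I_1\times\cdots\times I_d}$ exactly as in the proof of Proposition~\ref{prop:per_iter_correctness_Lambda}. Your write-up is in fact more explicit than the paper's on the strong convexity/uniqueness and Slater feasibility points, and you correctly flag the adjoint computation as the place where the mode-reshuffling in $\overline{\Lambda}$ must be tracked carefully.
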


\begin{proof}
We first obtain the dual of \eqref{alg:WCPDL_high-level_D}:
\begin{align*}
&\min_{U \in \Sigma_{I_{k}}^r
} \,\, \left( \sum_{i=1}^{N} H_{\X_i}
(\overline{\Lambda}[:,i] \times_k U^T) \right) 
  + \tau_{n} F_{U_{n-1}^{(k)}}(U)\\
&\quad =\min_{U \in \Sigma_{I_{k}}^r, Q \in \Sigma^{N}_{I_{1}\times \cdots \times I_{d}}, Q[:,i] = \overline{\Lambda}[:,i] \times_k U^T
} \,\, \left( \sum_{i=1}^{N} H_{\X_i}
(Q[:,i]) \right) 
  + \tau_{n} F_{U_{n-1}^{(k)}}(U),\\  
&\quad= \min_{U \in \Sigma_{I_{k}}^r, Q \in \Sigma^{N}_{I_{1}\times \cdots \times I_{d}}} \max_{G \in \R^{I_{1}\times \cdots \times I_{d} \times N}} \,\, \sum_{i=1}^{N} \left\{ H_{\X_i}  \left( Q[:,i] \right) + \langle Q[:,i] - \overline{\Lambda}[:,i] \times_k U^T, G[:,i] \rangle\right\} + \tau_{n} F_{U_{n-1}^{(k)}}(U),\\
&\quad\overset{(a)}{=} \max_{G \in \R^{I_{1}\times \cdots \times I_{d} \times N}} \min_{U \in \Sigma_{I_{k}}^r, Q \in \Sigma^{N}_{I_{1}\times \cdots \times I_{d}}}  \,\, \sum_{i=1}^{N} \left\{ H_{\X_i}  \left( Q[:,i] \right) + \langle Q[:,i] - \overline{\Lambda}[:,i] \times_k U^T, G[:,i] \rangle\right\} + \tau_{n} F_{U_{n-1}^{(k)}}(U),\\
&\quad= \max_{G \in \R^{I_{1}\times \cdots \times I_{d} \times N}}  \,\, \sum_{i=1}^{N} - \left\{  \max_{Q \in \Sigma^{N}_{I_{1}\times \cdots \times I_{d}}} \langle Q[:,i] , -G[:,i] \rangle - H_{\X_i}  \left( Q[:,i] \right) \right\} \\
&\qquad \qquad + \min_{U \in \Sigma_{I_{k}}^r }  \tau_{n} F_{U_{n-1}^{(k)}}(U)  -\sum_{i=1}^{n} \langle  \overline{\Lambda}[:,i] \times_k U^T, G[:,i] \rangle ,\\
&\quad \overset{(b)}{=} - \min_{G \in \mathbb{R}^{I_{1}\times \cdots \times I_{d} \times N}} \left[ \sum_{i=1}^{N} \left\{ H_{\X_i}^*(-G[:,i])\right\} +  \max_{U \in \Sigma_{I_{k}}^r} \left\{  \langle \overline{\Lambda} \times_k U^T, G \rangle - \tau_n F_{U_{n-1}^{(k)}}(U) \right\} \right],\\
&\quad \overset{(c)}{=} - \min_{G \in \mathbb{R}^{I_{1}\times \cdots \times I_{d} \times N}} \sum_{i=1}^{N} \left\{ H_{\X_i}^*(-G[:,i])\right\} + \tau_n F_{U_{n-1}^{(k)}}^*(G \times_{\neq k} \overline{\Lambda}/\tau_n).
\end{align*}
Here, (a) uses strong duality for convex objectives; (b) uses the fact that 
\begin{align}
   (\overline{\Lambda} \times_k U^T) &= [ \overline{\Lambda}[:,1],\dots,\overline{\Lambda}[:,N] ]\times_k U^T  = \left[ \overline{\Lambda}[:,1]\times_{k} U^{T},\dots,\overline{\Lambda}[:,N]\times_{k} U^{T} \right],
\end{align} 
and (c) follows from the identity $\langle \overline{\Lambda} \times_k U^T, G \rangle = \langle U, G \times_{\neq k} \overline{\Lambda} \rangle$, which is easily verified from the definition. Then we can conclude similarly as in the proof of Proposition \ref{prop:per_iter_correctness_Lambda} by using Lemma \ref{lem:saddle} with $K = \Sigma^{N}_{I_{1}\times \cdots \times I_{d}}$ and
\begin{align*}
&\mathcal{J}: \R^{I_k \times r} \times \R^{I_{1}\times \cdots \times I_{d} \times N} \rightarrow (-\infty, + \infty]:\\ &(U,Y) \mapsto
\begin{cases}
\sum_{i=1}^{N}  H_{\X_i}  \left( \overline{\Lambda}[:,i] \times_k U^T - Y[:,i] \right)  + \tau_{n} F_{U_{n-1}^{(k)}}(U)
&\hbox{ if } \overline{\Lambda} \times_k U^T \in Y+K,\\
+\infty, &\hbox{ if } \overline{\Lambda} \times_k U^T \notin Y+K.
\end{cases}
\end{align*}
\end{proof}

\section{Auxiliary lemmas}
	
	\begin{lemma}
	\label{lem:fdual}
    Fix $g\in \R^{r}$ and let $\Sigma_{r}:=\{ (x_{1},\dots,x_{r})\in \R^{r}_{\ge 0}\,:\, \sum_{i=1}^{r}x_{i}=1 \}$. 
	The optimality condition of the problem
	\begin{align}
	\label{eq:fdual}
	    \sup_{\lambda \in \Sigma_{r}} \langle g, \lambda \rangle - \frac{1}{2}\lVert \lambda - \lambda_0 \rVert_{F}^{2}
	\end{align}
	is given as 
    \begin{align}
	    \lambda^* = (g + \lambda_0 - c 1_{r})_+
	\end{align}
	where $c$ is a constant chosen to satisfy $\lambda ^{*}\in \Sigma_{r}$.
	\end{lemma}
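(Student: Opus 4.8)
The plan is to recognize problem \eqref{eq:fdual} as (the negative of) a Euclidean projection onto the probability simplex and to characterize its solution through the Karush--Kuhn--Tucker (KKT) conditions. First I would complete the square: maximizing $\langle g, \lambda\rangle - \tfrac12 \lVert \lambda - \lambda_0 \rVert_F^2$ over $\lambda \in \Sigma_r$ is equivalent to minimizing $\tfrac12 \lVert \lambda - (g+\lambda_0) \rVert_F^2$ over $\lambda \in \Sigma_r$, i.e.\ to computing $\proj_{\Sigma_r}(g + \lambda_0)$. Since $\Sigma_r$ is a nonempty compact convex set and the objective is strongly convex, the minimizer exists and is unique; moreover $\tfrac1r \mathbf{1}_r$ lies in the relative interior of $\Sigma_r$, so Slater's condition holds and the KKT conditions are both necessary and sufficient for optimality.

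Next I would write the Lagrangian for the constraints $\mathbf{1}_r^{T} \lambda = 1$ and $\lambda \ge 0$, introducing a scalar multiplier $c \in \R$ for the equality constraint and a multiplier vector $\mu \in \R^r_{\ge 0}$ for the nonnegativity constraints. Stationarity in $\lambda$ gives, coordinatewise, $\lambda_i - \lambda_{0,i} - g_i + c - \mu_i = 0$, hence $\lambda_i = g_i + \lambda_{0,i} - c + \mu_i$. Combining this with complementary slackness $\mu_i \lambda_i = 0$ and dual feasibility $\mu_i \ge 0$ forces, in the two cases $\lambda_i > 0$ (so $\mu_i = 0$ and $\lambda_i = g_i + \lambda_{0,i} - c$) and $\lambda_i = 0$ (so $g_i + \lambda_{0,i} - c = -\mu_i \le 0$), the single identity $\lambda_i = (g_i + \lambda_{0,i} - c)_+$, which is exactly $\lambda^* = (g + \lambda_0 - c\mathbf{1}_r)_+$.

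It remains to verify that the scalar $c$ is well defined, i.e.\ that there is a (unique) $c$ making $\lambda^*$ primal feasible. The map $c \mapsto \phi(c) := \sum_{i=1}^r (g_i + \lambda_{0,i} - c)_+$ is continuous, nonincreasing and piecewise linear, with $\phi(c) \to \infty$ as $c \to -\infty$ and $\phi(c) = 0$ for $c$ large; by the intermediate value theorem there is a $c$ with $\phi(c) = 1$, and on the interval where $\phi$ decreases strictly through the value $1$ this $c$ is unique. With that choice $\lambda^* \in \Sigma_r$, and by sufficiency of the KKT conditions $\lambda^*$ solves \eqref{eq:fdual}; uniqueness of the optimizer then shows every solution has this form.

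I do not anticipate a genuine obstacle: this is the standard projection-onto-the-simplex computation. The only points requiring a little care are (i) invoking a constraint qualification to use KKT in both directions, which is handled by Slater's condition via the interior point $\tfrac1r\mathbf{1}_r$, and (ii) the monotonicity argument pinning down $c$; both are elementary.
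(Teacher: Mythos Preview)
Your proposal is correct and follows essentially the same route as the paper: both arguments derive the first-order optimality condition on the simplex and then split into the cases $\lambda_i^*>0$ and $\lambda_i^*=0$ to obtain $\lambda^*=(g+\lambda_0-c\mathbf{1}_r)_+$. The paper phrases this via the variational inequality $\langle g-\lambda^*+\lambda_0,\lambda-\lambda^*\rangle\le 0$ for all $\lambda\in\Sigma_r$ and an index-set analysis, whereas you package the same thing as KKT with explicit multipliers; your version also adds the monotonicity argument for the existence and uniqueness of $c$, which the paper omits.
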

	
	\begin{proof}
	As the cost function of \eqref{eq:fdual} is strictly concave and $\Sigma_{r}$ is a closed set, there exists a unique maximizer $\lambda^* \in \Sigma_{r}$ of \eqref{eq:fdual}. For any $\epsilon \in [0,1]$ and $\lambda \in \Sigma_{r}$, consider $$h(\epsilon) := \langle g, \lambda^* + \epsilon(\lambda - \lambda^*) \rangle - \frac{1}{2}\lVert \lambda^* + \epsilon(\lambda - \lambda^*) - \lambda_0 \rVert_{F}^{2}.$$ 
	Noting that $\lambda^* + \epsilon(\lambda - \lambda^*)$ is also in $\Sigma_{r}$ for any $\epsilon \in [0,1]$.
	
	As $h(\epsilon)$ attains its maximum at $\epsilon = 0$, we have that for all $\lambda \in \Sigma_{r}$
	\begin{align}
	    0 \geq h'(0) = \langle g - \lambda^* + \lambda_0, \lambda - \lambda^* \rangle.
	\end{align}
	For $I_1 := \{ i \in \{1,2, \cdots, r\}: \lambda^*[i]>0\}$ and $I_2 := \{ i \in \{1,2, \cdots, r\}: \lambda^*[i]=0\}$, we obtain
	\begin{align}
	    0 \geq \sum_{i \in I_1}(g - \lambda^* + \lambda_0)[i] \times (\lambda - \lambda^*)[i] +  \sum_{i \in I_2}(g - \lambda^* + \lambda_0)[i] \times \lambda [i].
	\end{align}
	As $\lambda \in \Sigma_{r}$ is arbitrary, there exists a constant $c \in \mathbb{R}$ such that for $i \in I_1$
	\begin{align}
	    (g - \lambda^* + \lambda_0)[i] = c.
	\end{align}
	This yields that 
	\begin{align}
	     0 &\geq \sum_{i \in I_1} c \times (\lambda - \lambda^*)[i] +  \sum_{i \in I_2}(g - \lambda^* + \lambda_0)[i] \times \lambda [i],\\
	     &= \sum_{i \in I_2}(g - \lambda^* + \lambda_0[i] - c) \times \lambda [i].
	\end{align}
	The last equality is due to $\lambda, \lambda^* \in \Sigma_{r}$, 
	As a consequence, $(g - \lambda^* + \lambda_0)[i] \leq c$ and we conclude.
	\end{proof}
	
	\begin{lemma}\label{lem:positive_convergence_lemma}
		Let $(a_{n})_{n\ge 0}$ and $(b_{n})_{n \ge 0}$ be sequences of nonnegative real numbers such that $\sum_{n=0}^{\infty} a_{n}b_{n} <\infty$. Then 
		\begin{align}
			\min_{1\le k\le n} b_{k} \le \frac{\sum_{k=0}^{\infty} a_{k}b_{k}}{\sum_{k=1}^{n} a_{k}}  = O\left( \left( \sum_{k=1}^{n} a_{k} \right)^{-1} \right).
		\end{align}
	\end{lemma}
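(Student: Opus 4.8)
The plan is to prove the inequality by the elementary observation that the minimum of the $b_k$'s over $1\le k\le n$ underestimates each term in a weighted average. First I would set $b_{\min}:=\min_{1\le k\le n} b_k$ and note that, since all $a_k\ge 0$,
\[
\sum_{k=0}^{\infty} a_k b_k \;\ge\; \sum_{k=1}^{n} a_k b_k \;\ge\; b_{\min}\sum_{k=1}^{n} a_k .
\]
The first inequality drops the $k=0$ term and the tail $k>n$, both nonnegative; the second uses $b_k\ge b_{\min}$ on the range $1\le k\le n$.

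Next I would divide through by $\sum_{k=1}^{n} a_k$. If $\sum_{k=1}^{n} a_k>0$ this gives exactly $b_{\min}\le \big(\sum_{k=0}^{\infty} a_k b_k\big)\big/\big(\sum_{k=1}^{n} a_k\big)$, which is the claimed bound; if $\sum_{k=1}^{n} a_k=0$ the right-hand side is interpreted as $+\infty$ and the inequality is vacuous. Finally, since $C:=\sum_{k=0}^{\infty} a_k b_k<\infty$ is a fixed constant independent of $n$ by hypothesis, the right-hand side equals $C\cdot\big(\sum_{k=1}^{n}a_k\big)^{-1}=O\big((\sum_{k=1}^n a_k)^{-1}\big)$, which establishes the final equality.

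There is essentially no obstacle here; the only point requiring a word of care is the degenerate case $\sum_{k=1}^{n} a_k = 0$ (and, relatedly, the convention that the stated estimate is only informative once $\sum_{k=1}^n a_k>0$). In the intended application $a_k=1/(\sqrt{k}\log k)$, so $\sum_{k=1}^n a_k\to\infty$ and this degeneracy never arises; I would simply remark this in passing rather than belabor it.
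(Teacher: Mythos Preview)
Your proof is correct and follows essentially the same approach as the paper: both bound $\sum_{k=1}^{n} a_k b_k$ below by $(\min_{1\le k\le n} b_k)\sum_{k=1}^{n} a_k$ and above by the full (finite) series, then divide. Your added remark on the degenerate case $\sum_{k=1}^{n} a_k=0$ is a harmless refinement the paper omits.
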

	
	\begin{proof}
		The assertion follows from noting that
		\begin{align}
			\left( \sum_{k=1}^{n}a_{k} \right) \min_{1\le k \le n} b_{k}\le \sum_{k=1}^{n} a_{k}b_{k} \le  \sum_{k=1}^{\infty} a_{k}b_{k} <\infty.
		\end{align}
	\end{proof}
	
	\begin{lemma}[Convex Surrogate for Functions with Lipschitz Gradient]
		\label{lem:surrogate_L_gradient}
		Let $f:\R^{p}\rightarrow \R$ be differentiable and $\nabla f$ be $L$-Lipschitz continuous. Then for each $\param,\param'\in \R^{p}$, 
		\begin{align}
			\left| f(\param') - f(\param) - \langle \nabla f(\param),\, \param'-\param\rangle  \right|\le \frac{L}{2} \lVert \param-\param'\rVert^{2}.
		\end{align}
	\end{lemma}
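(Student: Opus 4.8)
The plan is to prove this by the standard one-dimensional reduction along the segment joining $\param$ and $\param'$, combined with the fundamental theorem of calculus and the Cauchy--Schwarz inequality. First I would fix $\param,\param'\in\R^{p}$ and introduce the auxiliary function $\phi:[0,1]\to\R$, $\phi(t):=f\bigl(\param+t(\param'-\param)\bigr)$. Since $f$ is differentiable with (Lipschitz, hence) continuous gradient, $\phi$ is continuously differentiable with $\phi'(t)=\langle \nabla f(\param+t(\param'-\param)),\,\param'-\param\rangle$, so the fundamental theorem of calculus gives
\begin{align*}
f(\param')-f(\param)=\phi(1)-\phi(0)=\int_{0}^{1}\bigl\langle \nabla f(\param+t(\param'-\param)),\,\param'-\param\bigr\rangle\,dt.
\end{align*}

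Next I would subtract the constant term $\langle\nabla f(\param),\,\param'-\param\rangle=\int_{0}^{1}\langle\nabla f(\param),\,\param'-\param\rangle\,dt$ to obtain
\begin{align*}
f(\param')-f(\param)-\langle\nabla f(\param),\,\param'-\param\rangle=\int_{0}^{1}\bigl\langle \nabla f(\param+t(\param'-\param))-\nabla f(\param),\,\param'-\param\bigr\rangle\,dt.
\end{align*}
Taking absolute values, moving them inside the integral, and applying Cauchy--Schwarz pointwise in $t$ yields the bound $\int_{0}^{1}\lVert \nabla f(\param+t(\param'-\param))-\nabla f(\param)\rVert\,\lVert\param'-\param\rVert\,dt$. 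Invoking the $L$-Lipschitz continuity of $\nabla f$ gives $\lVert \nabla f(\param+t(\param'-\param))-\nabla f(\param)\rVert\le L\,t\,\lVert\param'-\param\rVert$, so the integrand is at most $L\,t\,\lVert\param'-\param\rVert^{2}$, and $\int_{0}^{1}Lt\,dt=L/2$ delivers the claimed inequality.

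There is essentially no serious obstacle here: the only points requiring a word of care are that $\nabla f$ being Lipschitz implies it is continuous (so the integrand is integrable and the chain rule applies), and that the composition $\phi$ is differentiable with the stated derivative, which is the ordinary multivariable chain rule. The estimate is sharp up to the constant and is exactly the classical "descent lemma," so the proof is short and self-contained.
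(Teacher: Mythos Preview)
Your proof is correct and follows essentially the same approach as the paper's: both use the integral representation $f(\param')-f(\param)=\int_{0}^{1}\langle \nabla f(\param+t(\param'-\param)),\,\param'-\param\rangle\,dt$, subtract the constant term $\langle\nabla f(\param),\param'-\param\rangle$, and then apply Cauchy--Schwarz together with the $L$-Lipschitz bound on $\nabla f$ to integrate $Lt\lVert\param'-\param\rVert^{2}$ over $[0,1]$. The only cosmetic difference is that you explicitly name the auxiliary function $\phi$ and invoke the chain rule, whereas the paper writes the integral directly.
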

	
	\begin{proof}
		This is a classical Lemma (see, e.g., Lem 1.2.3 in \cite{nesterov1998introductory}). We include a proof of this statement for completeness. First write 
		\begin{align}
		    f(\param') - f(\param) = \int_{0}^{1} \left\langle \nabla f \left( \param + s(\param'-\param) \right),\, \param'-\param \right\rangle \,ds.
		\end{align}
		 By Cauchy-Schwarz inequality and $L$-Lipscthizness of $\nabla f$, 
		 \begin{align}
		   \left| \int_{0}^{1} \left\langle \nabla f \left( \param + s(\param'-\param) \right),\, \param'-\param \right\rangle - \int_{0}^{1} \left\langle \nabla f \left( \param \right),\, \param'-\param \right\rangle \,ds  \right| &\le \int_{0}^{1} \left\lVert \nabla f \left( \param + s(\param'-\param) \right) - \nabla f \left( \param \right) \right\rVert\, \lVert \param'-\param \rVert \,ds \\
		   &\le \int_{0}^{1} L s \lVert \param'-\param \rVert^{2}\,ds \\
		   &= \frac{L}{2}\lVert \param-\param' \rVert^{2}.
		 \end{align}
		 Then the assertion follows. 
	\end{proof}

	\begin{lemma}[Second-Order Growth Property]
		\label{lem:second_order_growh_univariate}
		Let $g:\R^{p} \rightarrow [0,\infty)$ be  $\mu$-strongly convex and let $\Param$ is a convex subset of $\R^{p}$. Let $\param^{*}$ denote the minimizer of $g$ over $\param$. Then for all $\param\in \param$,
		\begin{align}
			g(\param) \ge  g(\param^{*}) + \frac{\mu}{2} \lVert \param-\param^{*} \rVert^{2}.	
		\end{align}
	\end{lemma}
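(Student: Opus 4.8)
The plan is to combine the defining inequality of $\mu$-strong convexity with the optimality of the constrained minimizer $\param^{*}$ along every feasible direction. Recall that $\mu$-strong convexity of $g$ means that for all $\param,\param'\in\R^{p}$ and all $t\in[0,1]$,
\[
    g\bigl(t\param+(1-t)\param'\bigr)\;\le\; t\,g(\param)+(1-t)\,g(\param')-\frac{\mu}{2}\,t(1-t)\,\lVert\param-\param'\rVert^{2}.
\]

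First I would set $\param'=\param^{*}$ and exploit convexity of the constraint set: since $\param\in\Param$ and $\param^{*}\in\Param$, the point $t\param+(1-t)\param^{*}$ belongs to $\Param$ for every $t\in[0,1]$, so optimality of $\param^{*}$ over $\Param$ gives $g\bigl(t\param+(1-t)\param^{*}\bigr)\ge g(\param^{*})$. Substituting this lower bound into the left-hand side of the strong-convexity inequality, cancelling the common $g(\param^{*})$ terms, and dividing through by $t>0$ yields
\[
    g(\param^{*})\;\le\; g(\param)-\frac{\mu}{2}(1-t)\lVert\param-\param^{*}\rVert^{2}.
\]
Finally I would let $t\downarrow 0$ to obtain $g(\param)\ge g(\param^{*})+\frac{\mu}{2}\lVert\param-\param^{*}\rVert^{2}$, which is the assertion.

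The argument is elementary and I do not anticipate a genuine obstacle; the only point deserving care is that $\param^{*}$ is merely a minimizer \emph{over the constraint set} $\Param$, not a global critical point of $g$, so one cannot invoke $\nabla g(\param^{*})=0$. This is exactly why the proof works along the feasible segment $[\param^{*},\param]\subseteq\Param$ and takes a one-sided limit rather than expanding at an interior stationary point. (If $g$ is assumed differentiable, an alternative route is to use the first-order optimality condition $\langle\nabla g(\param^{*}),\,\param-\param^{*}\rangle\ge 0$ for all $\param\in\Param$ together with the strong-convexity bound $g(\param)\ge g(\param^{*})+\langle\nabla g(\param^{*}),\,\param-\param^{*}\rangle+\frac{\mu}{2}\lVert\param-\param^{*}\rVert^{2}$; but the convex-combination version above needs no smoothness.)
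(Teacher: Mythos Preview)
Your argument is correct: the convex-combination inequality plus feasibility of the segment $[\param^{*},\param]\subseteq\Param$ and the limit $t\downarrow 0$ give exactly the claimed bound, and your remark that one cannot assume $\nabla g(\param^{*})=0$ is well taken. The paper does not actually supply its own proof of this lemma---it simply defers to \cite{mairal2013optimization}, Lem.~B.5---so your self-contained derivation is more than what the paper provides, and there is nothing further to compare.
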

	
	\begin{proof}
		See Lem. B.5 in \cite{mairal2013optimization}.
	\end{proof}

	\begin{lemma}[Characterization of weak convexity]\label{lem:weak_convexity}
		Let $f:\R^{p}\rightarrow \R$ be a smooth function. Fix a convex set $\Param\subseteq \R^{p}$ and $\rho>0$. The following conditions are equivalent. 
		\begin{description}[itemsep=0.1cm]
			\item[(i)] (Weak convexity) $\param\mapsto f(\param) + \frac{\rho}{2}\lVert \param \rVert^{2}$ is convex on $\Param$;
			\item[(ii)] (Hypermonotonicity) $ \langle \nabla f(\param) - \nabla f(\param'),\, \param-\param'  \rangle \ge - \rho\lVert \param-\param' \rVert^{2}$ for all $\param,\param'\in \Param$; 
			
			\item[(iii)] (Quadratic lower bound) $f(\param) - f(\param') \ge \langle \nabla f(\param'),\, \param-\param' \rangle - \frac{\rho}{2}\lVert \param-\param' \rVert^{2}$ for all $\param,\param'\in \Param$. 
		\end{description}
	\end{lemma}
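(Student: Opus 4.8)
The plan is to reduce the three conditions to the standard first-order characterizations of convexity applied to the auxiliary function
\begin{align}
g(\param) := f(\param) + \frac{\rho}{2}\lVert \param \rVert^{2},
\end{align}
which is smooth on $\R^{p}$ with gradient $\nabla g(\param) = \nabla f(\param) + \rho\, \param$. Condition \textbf{(i)} is by definition the statement that $g$ is convex on $\Param$. So it suffices to show that, for a smooth function $g$ and a convex set $\Param$, convexity of $g$ on $\Param$ is equivalent to monotonicity of $\nabla g$ on $\Param$ (which will translate to \textbf{(ii)}) and to the first-order underestimator property on $\Param$ (which will translate to \textbf{(iii)}). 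These are classical facts; I would either cite them (e.g.\ \cite{nesterov1998introductory}) or prove the cycle $\textbf{(i)} \Rightarrow \textbf{(iii)} \Rightarrow \textbf{(ii)} \Rightarrow \textbf{(i)}$ directly.

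First I would establish $\textbf{(i)} \Rightarrow \textbf{(iii)}$: if $g$ is convex on $\Param$, then for $\param, \param' \in \Param$ the function $t \mapsto g(\param' + t(\param - \param'))$ is convex on $[0,1]$ (the segment lies in $\Param$), hence lies above its tangent at $t=0$, giving $g(\param) \ge g(\param') + \langle \nabla g(\param'), \param - \param'\rangle$. Substituting $\nabla g(\param') = \nabla f(\param') + \rho\param'$ and expanding $\frac{\rho}{2}\lVert\param\rVert^{2} - \frac{\rho}{2}\lVert\param'\rVert^{2} - \rho\langle \param', \param - \param'\rangle = \frac{\rho}{2}\lVert \param - \param'\rVert^{2}$, the quadratic terms rearrange exactly into \textbf{(iii)}. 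Next, $\textbf{(iii)} \Rightarrow \textbf{(ii)}$: write \textbf{(iii)} once as stated and once with $\param$ and $\param'$ swapped, then add the two inequalities; the $\rho$-quadratic terms combine to $-\rho\lVert\param-\param'\rVert^{2}$ and the gradient terms combine to $\langle \nabla f(\param) - \nabla f(\param'), \param - \param'\rangle$, yielding \textbf{(ii)}. Finally, $\textbf{(ii)} \Rightarrow \textbf{(i)}$: condition \textbf{(ii)} says precisely $\langle \nabla g(\param) - \nabla g(\param'), \param - \param'\rangle \ge 0$ on $\Param$; restricting to a segment $\phi(t) = g(\param' + t(\param-\param'))$ shows $\phi'$ is nondecreasing on $[0,1]$, so $\phi$ is convex, and since this holds for every segment in $\Param$, $g$ is convex on $\Param$, which is \textbf{(i)}.

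There is no genuine obstacle here; the only point requiring a little care is that all three standard characterizations of convexity must be used in their ``restricted to a convex subset $\Param$'' form, which is legitimate because convexity of a function on $\Param$ is defined segment-wise and every segment between two points of $\Param$ stays in $\Param$. I would state this once at the outset so that the monotonicity and first-order characterizations can be invoked verbatim along segments.
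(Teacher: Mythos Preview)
Your proposal is correct. The cycle $\textbf{(i)}\Rightarrow\textbf{(iii)}\Rightarrow\textbf{(ii)}\Rightarrow\textbf{(i)}$ via the auxiliary function $g(\param)=f(\param)+\frac{\rho}{2}\lVert\param\rVert^{2}$ is exactly the standard route, and each step checks out: the quadratic identity you use in $\textbf{(i)}\Rightarrow\textbf{(iii)}$ is right, the symmetrize-and-add trick for $\textbf{(iii)}\Rightarrow\textbf{(ii)}$ is clean, and the segment argument for $\textbf{(ii)}\Rightarrow\textbf{(i)}$ correctly handles the restriction to a convex subset~$\Param$.

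By way of comparison, the paper does not actually prove this lemma in-text: it simply refers the reader to Lem.~B.2 in \cite{lyu2022convergence} and to Thm.~7 in \cite{daniilidis2005filling} for a more general locally Lipschitz version. Your write-up is therefore strictly more self-contained than the paper's treatment, and is essentially the argument one would find behind those citations. The only stylistic suggestion is that, since the paper already has Lemma~\ref{lem:surrogate_L_gradient} available, you could alternatively shorten $\textbf{(i)}\Rightarrow\textbf{(iii)}$ by invoking the first-order convexity inequality for $g$ directly rather than re-deriving it along segments; but what you have is fine as written.
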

	
	\begin{proof}
    See Lem. B.2 in \cite{lyu2022convergence}. See also Thm. 7 in \cite{daniilidis2005filling} for an equivalent statement for a more general case of locally Lipschitz functions. 
	\end{proof}

	\begin{lemma}\label{lem:L_smooth_weak_convex}
	Let $f:\R^{p}\rightarrow \R$ be a function such that $\nabla f$ is $L$-Lipscthiz for some $L>0$. Then $f$ is $L$-weakly convex, that is, $\param\mapsto f(\param)+\frac{L}{2}\lVert \param\rVert^{2}$  is convex. 
	\end{lemma}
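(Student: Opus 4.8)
The plan is to deduce the statement directly from the characterization of weak convexity in Lemma~\ref{lem:weak_convexity} together with the quadratic estimate in Lemma~\ref{lem:surrogate_L_gradient}. Taking $\Param=\R^{p}$ and $\rho=L$ in Lemma~\ref{lem:weak_convexity}, it suffices to verify condition \textbf{(iii)} there, namely the quadratic lower bound
\[
f(\param) - f(\param') \ge \langle \nabla f(\param'),\, \param-\param' \rangle - \frac{L}{2}\lVert \param-\param' \rVert^{2} \qquad \textrm{for all } \param,\param'\in\R^{p}.
\]
But this is exactly the lower half of the two-sided inequality in Lemma~\ref{lem:surrogate_L_gradient}, which asserts $\lvert f(\param') - f(\param) - \langle \nabla f(\param),\, \param'-\param\rangle\rvert \le \frac{L}{2}\lVert \param-\param'\rVert^{2}$; interchanging the roles of $\param$ and $\param'$ and retaining only the lower inequality yields precisely \textbf{(iii)}. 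Hence $\param\mapsto f(\param)+\frac{L}{2}\lVert \param\rVert^{2}$ is convex, which is the claim.

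Alternatively, and more self-containedly, I would argue via monotonicity of the gradient. Set $g(\param):=f(\param)+\frac{L}{2}\lVert \param\rVert^{2}$, so $\nabla g(\param)=\nabla f(\param)+L\param$. For any $\param,\param'\in\R^{p}$, the Cauchy--Schwarz inequality and the $L$-Lipschitz continuity of $\nabla f$ give
\[
\langle \nabla f(\param)-\nabla f(\param'),\, \param-\param'\rangle \ge -\lVert \nabla f(\param)-\nabla f(\param')\rVert\,\lVert \param-\param'\rVert \ge -L\lVert \param-\param'\rVert^{2},
\]
so that $\langle \nabla g(\param)-\nabla g(\param'),\, \param-\param'\rangle\ge 0$; that is, $\nabla g$ is monotone. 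A differentiable function on $\R^{p}$ whose gradient is monotone is convex: integrating $s\mapsto \langle \nabla g(\param'+s(\param-\param')),\,\param-\param'\rangle$ over $s\in[0,1]$ exactly as in the proof of Lemma~\ref{lem:surrogate_L_gradient} gives $g(\param)\ge g(\param')+\langle\nabla g(\param'),\,\param-\param'\rangle$ for all $\param,\param'$, which is convexity. Therefore $f$ is $L$-weakly convex.

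There is essentially no obstacle here: the statement is a one-line corollary of either Lemma~\ref{lem:surrogate_L_gradient} or the standard equivalence between gradient monotonicity and convexity. The only point requiring a little care is matching sign conventions when invoking Lemma~\ref{lem:weak_convexity}\textbf{(iii)}. I would present the first (shortest) route as the main proof and leave the second as an optional self-contained remark.
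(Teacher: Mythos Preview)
Your proposal is correct and takes essentially the same approach as the paper: the paper's proof is the single line ``Follows immediately by Lemmas~\ref{lem:surrogate_L_gradient} and~\ref{lem:weak_convexity},'' and your main argument spells out exactly that deduction (verifying condition \textbf{(iii)} of Lemma~\ref{lem:weak_convexity} via the lower half of the bound in Lemma~\ref{lem:surrogate_L_gradient}). Your alternative monotonicity argument is a fine self-contained variant but is not needed here.
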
 
	
	\begin{proof}
	    Follows immediately by Lemmas \ref{lem:surrogate_L_gradient} and  \ref{lem:weak_convexity}.
	\end{proof}


\end{document}